\def\eqref#1{equation~\ref{#1}}
\def\1{\bm{1}}
\DeclareMathAlphabet{\mathsfit}{\encodingdefault}{\sfdefault}{m}{sl}
\SetMathAlphabet{\mathsfit}{bold}{\encodingdefault}{\sfdefault}{bx}{n}
\newcommand{\E}{\mathbb{E}}
\newcommand{\R}{\mathbb{R}}
\DeclareMathOperator*{\argmin}{arg\,min}
\theoremstyle{plain}
\newtheorem{theorem}{Theorem}[section]
\newcommand{\DeclareSharedTheorem}[3]{%
  \newaliascnt{#1}{theorem}%
  \newtheorem{#1}[#1]{#2}%
  \aliascntresetthe{#1}%
  \crefname{#1}{#3}{#3s}%
  \Crefname{#1}{#2}{#2s}%
}
\theoremstyle{definition}
\newaliascnt{definition}{theorem}
\newtheorem{definition}[definition]{Definition}
\crefname{definition}{definition}{definitions}
\Crefname{definition}{Definition}{Definitions}
\theoremstyle{remark}
\newaliascnt{remark}{theorem}
\crefname{remark}{remark}{remarks}
\Crefname{remark}{Remark}{Remarks}
\def\R{\mathbb{R}}
\def\E{\mathbb{E}}
\def\F{{\rm F}}
\def\op{{\rm op}}
\def\vec{\overline{\mathrm{vec}}}
\newcommand{\opnorm}[1]{\left\| #1 \right\|_{{\rm op}}}
\newcommand{\pth}[1]{\left( #1 \right)}
\newcommand{\qth}[1]{\left[ #1 \right]}
\newcommand{\nth}[1]{\left\| #1 \right\|}
\newcommand{\sth}[1]{\left\{ #1 \right\}}
\DeclareRobustCommand{\eqref}[1]{\textup{(\ref{#1})}}
\newcommand{\trace}[1]{ \text{tr}\left( #1 \right)}
\newcommand{\congr}[1]{{\color{blue}#1}}
\newcommand{\congr}[1]{#}
\newcommand{\congc}[1]{{\color{blue}(Cong: #1)}}
\newcommand{\congc}[1]{}
\newcommand{\weir}[1]{{\color{orange}#1}}
\newcommand{\weir}[1]{#}
\title{On the Convergence Analysis of Muon}
\author{\name Wei Shen\thanks{Co-first authors} \email  zyy5hb@virginia.edu\\
      \addr Department of Electrical and Computer Engineering\\
      University of Virginia
      \AND
      \name Ruichuan Huang\footnotemark[1] \email ruichuan@mit.edu \\
      \addr Department of Electrical Engineering and Computer Science\\
      Massachusetts Institute of Technology
      \AND
      \name Minhui Huang \email mhhuang@meta.com\\
      \addr
      Meta
      \AND 
      \name Cong Shen\thanks{Co-last authors} \email cong@virginia.edu\\
      \addr Department of Electrical and Computer Engineering\\
      University of Virginia
      \AND 
      \name Jiawei Zhang\footnotemark[2] \email jzhang2924@wisc.edu\\
      \addr Department of Computer Sciences\\
      University of Wisconsin-Madison}
\begin{document}

\maketitle

\begin{abstract}
The majority of parameters in neural networks are naturally represented as matrices. However, most commonly used optimizers treat these matrix parameters as flattened vectors during optimization, potentially overlooking their inherent structural properties. Recently, an optimizer called Muon has been proposed, specifically designed to optimize matrix-structured parameters. Extensive empirical evidence shows that Muon can significantly outperform traditional optimizers when training neural networks. Nonetheless, the theoretical understanding of Muon's convergence behavior and the reasons behind its superior performance remain limited. In this work, we present a comprehensive convergence rate analysis of Muon and its comparison with Gradient Descent (GD). We characterize the conditions under which Muon can outperform GD. Our theoretical results reveal that Muon can benefit from the low-rank structure of Hessian matrices, a phenomenon widely observed in practical neural network training. Our experimental results support and corroborate the theoretical findings.
\end{abstract}

\section{Introduction}
Modern neural networks, such as large language models (LLMs) \citep{adler2024gpt}, typically consist of a massive amount of parameters and require significant computational resources for training. As a result, designing optimizers that can efficiently train such models has become an important and valuable research problem. Note that most of the parameters in neural networks are naturally represented as matrices, for example, weight matrices in fully connected layers, or the query, key, and value matrices in attention mechanisms. However, most widely adopted optimization algorithms such as Stochastic Gradient Descent (SGD), Adam \citep{diederik2014adam}, and their variants often treat these matrices as flattened vectors, potentially overlooking their inherent structural properties.

Recently, Muon, an optimizer specifically designed for matrix-structured parameters, was proposed in \citet{jordan2024muon}. The key idea is to update the matrix parameters along an orthogonalized version of their gradients at each iteration (\Cref{alg: muon}). Empirical results across various studies and network scales consistently demonstrate the superior performance of Muon in optimizing neural networks with matrix parameters \citep{jordan2024muon, liu2025muon, an2025asgo, liu2025cosmos, ahn2025dion}. However, the convergence behavior of Muon and the underlying mechanisms that contribute to its advantage over traditional gradient descent-based optimizers, such as GD and SGD, are not yet fully understood. Therefore, in this work, we aim to bridge this gap by providing a comprehensive theoretical analysis of Muon’s convergence properties, along with detailed comparisons to GD-based algorithms.

In classical optimization theory, the stepsize of GD typically depends on the Lipschitz smoothness constant $L$ of the Frobenius norm, namely the maximum singular value of the Hessian. However, this quantity can be extremely large or vary rapidly during neural network training, making the stepsize of GD difficult to tune in practice.
In this work, we analyze the convergence behavior of Muon not only under the standard Lipschitz smoothness constant $L$, but also under the spectral norm Lipschitz smoothness constant $L_*$, and even without assuming uniform Lipschitz smoothness. We show that the convergence of Muon can depend on $J$, which can be viewed as an average of the global Hessian information throughout training. Compared with $L$, which is determined by the maximum singular value of the Hessian, $J$ can be interpreted as averaging across both iterations and singular values. Such a smoothing effect provides potential advantages for Muon over GD-based algorithms, especially when the Hessians exhibits low-rank structures, a phenomenon widely observed in practical neural network training. Our experimental results on quadratic regressions and neural network trainings validate these theoretical findings.

\section{Related Work}
Muon was originally proposed in \citet{jordan2024muon}, which demonstrated the superior performance of Muon on various models and datasets. Subsequently, \citet{liu2025muon} improved the Muon by incorporating techniques such as weight decay and adjusting the per-parameter update scale, and showed that Muon can outperform Adam when optimizing large language models (LLMs). One possible explanation for Muon’s superior performance is provided by \cite{bernstein2024old}, which showed that Muon’s update direction corresponds to the steepest descent under the spectral norm constraint. Moreover, we note that there are some recent works \citep{li2025note, pethick2025training, an2025asgo, kovalev2025understanding} that also analyze the convergence behavior of Muon. For example, \cite{li2025note} analyzed the Frobenius norm convergence (\Cref{def: f norm stationary}) of Muon under Frobenius norm Lipschitz smooth (\Cref{ass: smooth}). \cite{an2025asgo} analyzed the convergence of the Simplified Muon (\Cref{alg: muon_deterministic}) with \Cref{ass: blockwise}. \cite{pethick2025training} introduced a general algorithmic framework based on a family of linear minimization oracle (lmo) methods and established convergence guarantees for this family. Muon can be viewed as a special instance of this framework, and its convergence was analyzed under \Cref{ass: nuclear smooth}. \cite{kovalev2025understanding} proposed a stochastic non-Euclidean trust-region gradient method with momentum, treating Muon, normalized SGD, and signSGD as special cases. \cite{kovalev2025understanding} established convergence guarantees for the stochastic non-Euclidean trust-region gradient method in both star-convex and nonconvex settings. Compared to \cite{kovalev2025understanding}, in the star-convex setting, we additionally establish the convergence of Muon with adaptive learning rates, achieving an improved complexity bound with a $\log(\epsilon^{-1})$ factor improvement over the constant-step-size result in \cite{kovalev2025understanding}. 
Recently, several works have also investigated the superiority of Muon from theoretical perspectives. \cite{wang2025muon} demonstrated that Muon can outperform Adam in tail-end associative memory learning, showing that spectral orthogonalization enables more balanced updates across singular modes. \cite{su2025isotropic} introduced an isotropic curvature model to analyze matrix-structured optimization updates.
\cite{davis2025spectral} analyzed the squared nuclear-to-Frobenius ratio of the gradient relative to the stable rank of the incoming activations to characterize when Muon outperforms GD, and show that this condition holds in random-feature models, deep networks, and transformer training. \cite{ma2026preconditioning} analyzed matrix factorization and in-context learning of linear transformers, showing that for both problems Muon achieves iteration complexity independent of the condition number, thereby outperforming GD and Adam. 

Compared to these prior works, our work provides a more comprehensive convergence analysis of Muon under various smoothness assumptions (\Cref{ass: smooth}, \ref{ass: nuclear smooth}, \ref{ass: blockwise}), including scenarios without uniform Lipschitz smoothness (\Cref{ass: third}). Additionally, we offer detailed comparisons with GD and characterize the conditions under which Muon outperforms GD, and validate these conditions through experiments. Our analysis reveals that Muon can exploit the low-rank structure of Hessian matrices, offering a theoretical explanation for its advantages in structured optimization problems. Therefore, our work presents distinct contributions and insights. We believe that all these works collectively contribute to a deeper understanding of Muon.


\textbf{Low-rank and approximate blockwise diagonal structure of Hessian.} Extensive prior works \citep{collobert2004large, zhang2024adam, zhang2024transformers, dong2025towards} have shown, both theoretically and empirically, that the Hessian of a neural network tends to exhibit a blockwise diagonal structure with each block corresponding to an individual neuron. Recently, \cite{zhang2024adam, zhang2024transformers} numerically confirmed this property in small Transformers, and \cite{dong2025towards} provided theoretical explanations for why such a blockwise diagonal structure emerges in neural network Hessians. In addition, many studies \citep{sagun2016eigenvalues, sagun2017empirical, wu2020dissecting, papyan2020traces, yao2020pyhessian} have also observed that neural network Hessians are typically low-rank, i.e., their effective ranks, which can be measured by the ratio $\|H\|_*/\|H\|_{{\rm op}}$, can be significantly smaller than their dimensionality.

\textbf{Additional Structured Optimization Methods and Muon Variants}.
Although optimizers commonly used for training deep neural networks, such as SGD and Adam, typically treat structured parameters (e.g., matrices) as flattened vectors, in recent years, there has been growing interest in designing structured optimizers that explicitly leverage the inherent structure of parameters. For instance, Adafactor \citep{shazeer2018adafactor}, LAMB \citep{you2019large}, and Adam-mini \citep{zhang2024adam} incorporate matrix- or layer-level structure to reduce memory consumption. KFAC \citep{martens2015optimizing} and TNT \citep{ren2021tensor} approximate the Fisher matrix to implement natural gradient methods. Shampoo \citep{gupta2018shampoo} is specifically designed for matrix or tensor parameters and can be viewed as an approximation to the full-matrix preconditioner in AdaGrad \citep{duchi2011adaptive}.  \cite{morwani2024new} provided additional theoretical analyses and modifications to Shampoo. Recently, SOAP \citep{vyas2024soap} was introduced, which combines the ideas of Adam and Shampoo. Galore \citep{zhao2024galore} was proposed to exploit the low-rank structure of gradients for memory efficiency. Additionally, \cite{liu2025cosmos} proposed COSMOS, which can be seen as a combination of SOAP and Muon. More recently, \cite{xie2025structured} and \cite{an2025asgo} introduced ASGO (One-Sided Shampoo), which only applies a one-sided preconditioner within the Shampoo. \cite{xie2025structured, an2025asgo} showed that ASGO can achieve better convergence rate than the original Shampoo. \cite{crawshaw2025exploration} provide a unified non-Euclidean steepest descent framework for neural network optimization, revealing Muon and its variants as instances of product-norm-based gradient methods and introducing more robust alternatives such as MuonMax-Momo. \cite{liu2025mars} and \cite{qian2025muon} incorporate momentum variance reduction into Muon-type optimizers, achieving provably faster convergence rates and improved empirical performance.

\section{Preliminaries}
\textbf{Notation.} For a vector $v$, we denote its $l_2$ norm as $\|v\|_2$. For a matrix $A\in \R^{m\times n}$, we denote its nuclear norm as $\|A\|_*$, spectral norm as $\|A\|_{{\rm op}}$, Frobenius norm as $\|A\|_\F$, and $\Lambda$ norm as $\|A\|_\Lambda=\sqrt{\trace{A \Lambda A^\top}}$ where $\Lambda\in \R^{n\times n}$ is a real symmetric positive definite matrix. For $A\in \R^{m\times n}$ with rows $a_1, \ldots, a_m$, we define $\vec(A)=(a_1\, a_2\, \cdots\, a_m)^\top\in \R^{mn}$ as the vectorization of $A$. We denote $I_d$ as the identity matrix with dimension $d$. For matrix $A\in \R^{m\times n}$, $B\in \R^{m'\times n'}$, we denote their Kronecker product as $A\otimes B\in \R^{mm'\times nn'}$. We define the operator norm of a third‑order tensor $H\in \R^{d_1\times d_2\times d_3}$ as $\|H\|_{{\rm op}}=\max_{\|u\|_2=\|v\|_2=\|w\|_2=1}|\sum_{i=1}^{d_1}\sum_{j=1}^{d_2}\sum_{k=1}^{d_3} H_{ijk}u_iv_jw_k|$, where $u\in \R^{d_1}, v\in \R^{d_2}, w\in \R^{d_3}$.

\subsection{Optimization problem}\label{section: opt prob}
In this paper, we consider the following stochastic optimization problem: 
\begin{align}\label{eq: problem}
    \min_{W\in \R^{m\times n}} f(W)=\E_{\xi}[f(W;\xi)].
\end{align}
We denote $f^*=\inf_W f(W)$, $r=\min\{m,n\}, f_v(w)=f(W)$, and $w=\vec(W)\in \R^{mn}$.
We assume $f^*>-\infty$. For initial point $W_0$, we denote $\Delta=f(W_0)-f^*$, and $D_F=\|W_0-W^*\|_F$.
We consider the following smoothness assumptions in this paper. 
\begin{assumption}[Frobenius norm Lipschitz smooth]\label{ass: smooth}
We say $f:\R^{m\times n}\to \R$ is $L$ Frobenius norm Lipschitz smooth if for any $W, W'\in \R^{m\times n}$, we have
\begin{align*}
    \|\nabla f(W)-\nabla f(W')\|_\F\leq L\|W-W'\|_\F.
\end{align*}
\end{assumption}

\begin{assumption}[Spectral norm Lipschitz smooth]\label{ass: nuclear smooth}
We say $f:\R^{m\times n}\to \R$ is $L_*$ spectral norm Lipschitz smooth if for any $W, W'\in \R^{m\times n}$, we have
\begin{align*}
    \|\nabla f(W)-\nabla f(W')\|_*\leq L_*\|W-W'\|_{{\rm op}}.
\end{align*}
\end{assumption}

\Cref{ass: smooth} is the standard Frobenius norm-based smoothness, which serves as a natural extension of the conventional $l_2$-norm smoothness for functions with vector parameters to functions with matrix parameters. \Cref{ass: nuclear smooth} is a spectral norm smoothness, which accounts for the distinct structure of matrix parameters. In this work, we establish the convergence of Muon under both assumptions.

For a stochastic setting, we also use the following standard bounded variance assumption.
\begin{assumption}[Bounded variance]\label{ass: variance}
    We assume $\nabla f(W;\xi)$ is an unbiased stochastic estimator of the true gradient $\nabla f(W)$ and have a bounded variance, i.e.
    \begin{equation*}
        \E[\nabla f(W;\xi)]=\nabla f(W), \qquad \E[\|\nabla f(W;\xi)-\nabla f(W)\|_\F^2]\leq \sigma^2.
    \end{equation*}
\end{assumption}

\begin{definition}\label{def: f norm stationary}
    We say $\widehat W$ is an $\epsilon$-Frobenius norm stationary point of $f$ if $\|\nabla f(\widehat W)\|_\F\leq\epsilon$.
\end{definition}
\begin{definition}
    We say $\widehat W$ is an $\epsilon$-nuclear norm stationary point of $f$ if $\|\nabla f(\widehat W)\|_*\leq\epsilon$.
\end{definition}

For a matrix $A$ with rank $r$, there is a well-known relationship between its Frobenius norm and nuclear norm: $\|A\|_\F\leq \|A\|_*\leq \sqrt{r}\|A\|_\F$. Thus, we have the following proposition.
\begin{proposition}\label{prop: stationary}
If $\widehat W\in \R^{m\times n}$ is an $\epsilon$-nuclear norm stationary point of $f$, then it is also an $\epsilon$-Frobenius norm stationary point of $f$. 
If $\widehat W\in \R^{m\times n}$ is an $\epsilon$-Frobenius norm stationary point of $f$, then it is also an $\sqrt{r}\epsilon$-nuclear norm stationary point of $f$, where $r=\min\{m,n\}$.
\end{proposition}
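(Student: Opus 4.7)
The plan is to apply the stated Frobenius-nuclear norm inequality $\|A\|_\F \leq \|A\|_* \leq \sqrt{r}\|A\|_\F$ directly to $A=\nabla f(\widehat W)\in \R^{m\times n}$, whose rank is at most $r=\min\{m,n\}$ regardless of $\widehat W$. The statement is essentially a translation of this matrix-norm inequality into the language of the stationary-point definitions introduced just above, so there is really no substantive obstacle; the work is to chain two one-line implications and to remark why the constant $\sqrt{r}$ is uniform.

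First, for the nuclear-to-Frobenius direction, I would use the lower bound $\|\nabla f(\widehat W)\|_\F \leq \|\nabla f(\widehat W)\|_*$. If the right-hand side is at most $\epsilon$ (the definition of an $\epsilon$-nuclear norm stationary point), then $\|\nabla f(\widehat W)\|_\F \leq \epsilon$, which is exactly the $\epsilon$-Frobenius norm stationary condition of \Cref{def: f norm stationary}. Second, for the reverse direction, I would invoke the upper bound $\|\nabla f(\widehat W)\|_* \leq \sqrt{\mathrm{rank}(\nabla f(\widehat W))}\,\|\nabla f(\widehat W)\|_\F$ and note that $\mathrm{rank}(\nabla f(\widehat W))\leq \min\{m,n\}=r$ holds for every matrix in $\R^{m\times n}$. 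Hence $\|\nabla f(\widehat W)\|_\F \leq \epsilon$ immediately gives $\|\nabla f(\widehat W)\|_* \leq \sqrt{r}\,\epsilon$, i.e., the $\sqrt{r}\epsilon$-nuclear norm stationary condition.

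The only point worth flagging, though it is routine, is that the rank bound used in the second step is a universal property of $m\times n$ matrices rather than something requiring an assumption on $f$; this is what lets $\sqrt{r}$ depend only on the ambient dimensions and not on $\widehat W$. No additional machinery from the preliminaries (smoothness, bounded variance, etc.) is needed, and the proof should occupy at most a few lines in the paper.
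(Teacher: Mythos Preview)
Your proposal is correct and matches the paper's own treatment exactly: the paper simply states the inequality $\|A\|_\F \leq \|A\|_* \leq \sqrt{r}\|A\|_\F$ immediately before the proposition and writes ``Thus, we have following proposition,'' without any further proof. Your observation that $\mathrm{rank}(\nabla f(\widehat W))\leq \min\{m,n\}=r$ is the only small addition needed to make the constant $\sqrt{r}$ uniform, and that is implicit in the paper's phrasing as well.
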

We note that for functions with matrix parameters, the nuclear norm stationarity is tighter than the ordinary Frobenius norm stationarity.

\subsection{Muon}\label{section: muon}

\begin{algorithm}
    \caption{Muon}
    \label{alg: muon}
    \begin{algorithmic}[1]
        \State \textbf{Input}: Initial weights $W_0$, learning rate schedule $\{\eta_t\}$, $\beta\in [0,1)$, batch size $B$ 
        \For{$t = 0, 1, \ldots, T-1$}
            \State Sample batch $\{\xi_{t,i}\}_{i=1}^B$ uniformly
            \State $G_t = \frac{1}{B} \sum_{i=1}^B \nabla f(W_t; \xi_{t,i})$ (or $G_t =\nabla f(W_t)$ in the deterministic setting)
            \State If $t>0$, $M_t = \beta M_{t-1} + (1 - \beta) G_t$. If $t=0$, $M_0=G_0$.
            \State $(U_{t}, S_{t}, V_{t}) = \text{SVD}(M_t)$
            \State $W_{t+1} = W_t - \eta_t U_{t} V_{t}^\top$
        \EndFor
    \end{algorithmic}
\end{algorithm}

\begin{algorithm}
    \caption{Simplified Muon}
    \label{alg: muon_deterministic}
    \begin{algorithmic}[1]
        \State \textbf{Input}: Initial weights $W_0$, learning rate schedule $\{\eta_t\}$
        \For{$t = 0, 1, \ldots, T-1$}
            \State $G_t=\nabla f(W_t)$
            \State $(U_{t}, S_{t}, V_{t}) = \text{SVD}(G_t)$
            \State $W_{t+1} = W_t - \eta_t U_{t} V_{t}^\top$
        \EndFor
    \end{algorithmic}
\end{algorithm}

In this subsection, we introduce the Muon algorithms. The original idea of Muon in \cite{jordan2024muon} is to orthogonalize the update matrix. For example, suppose the original update direction is $G_t$ with rank $r_t$ and the singular value decomposition (SVD) of $G_t$ is $U_tS_tV_t^\top$, where $S_t\in\R^{r_t\times r_t}$ is the diagonal matrix of the singular values of $G_t$,  $U_t\in\R^{m\times r_t}$ and $V_t\in \R^{n\times r_t}$ are the left and right singular vector matrices of $G_t$, respectively. Then, in Muon, the update matrix will be $U_{t} V_{t}^\top$, which is the nearest semi-orthogonal matrix to the original $G_t$. Following this main idea, we have \Cref{alg: muon_deterministic}, which can be viewed as the simplest form of Muon. Additionally, if we add momentum and conduct orthogonalization over the momentum direction, we have \Cref{alg: muon}, which is the main algorithm we will analyze in the stochastic setting. 
In practice, computing the SVD is very costly. Therefore, a common practical implementation of Muon uses Newton–Schulz iterations \citep{bernstein2024old, jordan2024muon} to approximate the orthogonalization process. 
In this paper, we primarily analyze the theoretical properties of \Cref{alg: muon} and \ref{alg: muon_deterministic} using SVD.




\section{Convergence Analysis of Muon}

In this section, we analyze the convergence behaviors of Muon under various smoothness assumptions and compare them with GD. The proofs of theorems in this section can be found in \Cref{app: nonconvex} and \Cref{app: convex}.

\subsection{Frobenius norm Lipschitz smooth}

We first analyze the convergence of Muon with the conventional Frobenius norm Lipschitz smoothness assumption (\Cref{ass: smooth}).
\begin{theorem}\label{thm: nonconvex_L}
    Under Assumptions \ref{ass: smooth} and \ref{ass: variance}, if we apply \Cref{alg: muon} with $\eta_t=\eta$, then
\begin{align*}
    \frac{1}{T}\sum_{t=0}^{T-1}\E[\|\nabla f(W_t)\|_*]\leq \frac{\E[f(W_0)-f(W_T)]}{T\eta}+\frac{L r\eta}{2}+\frac{2\sigma\sqrt{r(1-\beta)}}{\sqrt{B(1+\beta)}}+\frac{2\sigma\sqrt{r}}{(1-\beta)T\sqrt{B}}+\frac{2r\eta \beta L}{1-\beta}. 
\end{align*}

\end{theorem}
By choosing the parameters in \cref{thm: nonconvex_L}, we have following corollary.

\begin{corollary}\label{cor: nonconvex_L_complexities}
    Under the same assumptions as \Cref{thm: nonconvex_L}, we have the following convergence guarantees for \Cref{alg: muon}:
    \begin{enumerate}[topsep=0pt]
        \item \textbf{Stochastic setting:} If we set $B=1$, $\eta=\sqrt{\frac{(1-\beta)\Delta}{rTL}}$, and $1-\beta=\min\left\{\frac{\sqrt{L\Delta}}{\sigma\sqrt{T}},1\right\}$, then
        \begin{align*}
            \frac{1}{T}\sum_{t=0}^{T-1}\E[\|\nabla f(W_t)\|_*]\leq O\pth{\sqrt[4]{\frac{r^2L\Delta\sigma^2}{T}}+\sqrt{\frac{rL\Delta}{T}}+\frac{\sqrt{r}\sigma^2}{\sqrt{L\Delta T}}}.
        \end{align*}
        
        \item \textbf{Deterministic setting ($\sigma=0$):} If we set $\eta=\sqrt{\frac{\Delta}{rTL}}$, and $\beta=O(1)$, then
        \begin{align*}
            \frac{1}{T}\sum_{t=0}^{T-1}\|\nabla f(W_t)\|_*\leq O\pth{\sqrt{\frac{rL\Delta}{T}}}.
        \end{align*}
    \end{enumerate}
     Therefore, the complexity to find an $\epsilon$-nuclear norm stationary point of $f$ is $O(r^2L\sigma^2\Delta\epsilon^{-4})$ in the stochastic setting and $O(rL\Delta\epsilon^{-2})$ in the deterministic setting.
\end{corollary}

Moreover, to study the convergence of function value of Muon, we also consider the star convex condition.
\begin{assumption}[Star convex]\label{ass: convex}
For any $W\in \R^{m\times n}$, we assume the following inequality holds
\begin{align*}
    \langle \nabla f(W), W-W^* \rangle\geq f(W)-f^*.
\end{align*}
where $W^*\in\argmin_W f(W)$ is the optimal point.
\end{assumption}
Note that the star convex condition \citep{nesterov2006cubic} is weaker than the convex condition, and empirical evidence \citep{kleinberg2018alternative, zhou2019sgd} suggests that, in some cases, the optimization path of neural networks can satisfy this condition.

To ensure the convergence of Muon in the star convex case, we adopt the following mild assumption. Note that the same assumption is also used in \cite{gupta2018shampoo, an2025asgo} for their convergence analysis of their algorithms in similar settings.
\begin{assumption} \label{ass: bounded}
    For $W_t, t=0,\dots,T$, generated by \Cref{alg: muon_deterministic}, we assume $\|W_t-W^*\|_{{\rm op}}\leq D_{{\rm op}}$.
\end{assumption}
In the experiment (\Cref{figure: q}(b)), we can observe that when we use Muon on a quadratic function, $\|W_t-W^*\|$ is bounded, which satisfies \Cref{ass: bounded}. 

With these assumptions, we have the following theorem.

\begin{theorem}\label{thm: convex_L}
 Under Assumptions \ref{ass: convex}, \ref{ass: smooth}, and \ref{ass: bounded}, if we apply \Cref{alg: muon_deterministic} with $\eta_t=\eta$, then
    \begin{align*}
        f(W_T)-f^*\leq \pth{1-\frac{\eta}{D_\op}}^T(f(W_0)-f^*)+\frac{rLD_{{\rm op}}\eta}{2}.
    \end{align*}
    
    Thus, to reach the precision $f(W_T)-f^*\leq \epsilon$, we can take $\eta=O(\frac{\epsilon}{rLD_{{\rm op}}})$, and the complexity is $O(rLD_{{\rm op}}^2 \epsilon^{-1}\text{log}\frac{\Delta}{\epsilon})$.

\end{theorem}

\begin{theorem}\label{thm: convex_L_adaptive_stepsize}
    Under the same assumptions as \cref{thm: convex_L}, if we apply \Cref{alg: muon_deterministic} with adaptive learning rates $\eta_t=\frac{\|\nabla f(W_{t})\|_*}{rL}$, we have
    \begin{align*}
        f(W_t)-f^*\leq \frac{2rL(f(W_0)-f^*)D_{{\rm op}}^2}{2rLD_{{\rm op}}^2+t(f(W_0)-f^*)}.
    \end{align*}
Thus, to reach the precision $f(W_T)-f^*\leq \epsilon$, the complexity is $O(rLD_{{\rm op}}^2\epsilon^{-1})$.
\end{theorem}


\subsubsection{Comparison with GD} 
Note that in the nonconvex case, the (S)GD's complexity of finding $\epsilon$-Frobenius norm stationary points of $f$ is $O(L\sigma^2\Delta\epsilon^{-4})$ in the stochastic setting and $O(L\Delta\epsilon^{-2})$ in the deterministic setting \citep{garrigos2023handbook}, which means (S)GD can also find $\epsilon$-nuclear norm stationary points of $f$ with $O(r^2L\sigma^2\Delta\epsilon^{-4})$ complexity in the stochastic setting and $O(rL\Delta\epsilon^{-2})$ in the deterministic setting according to Proposition~\ref{prop: stationary}. Moreover, to reach the precision $f(W_T)-f^*\leq \epsilon$ in the star convex case, the complexity of GD is $O(LD_F^2\epsilon^{-1})$ \citep{garrigos2023handbook}. We have $D_F^2=\|W_0-W^*\|_\F^2\leq r\|W_0-W^*\|_\op^2\leq r D_\op^2$.  Comparing the complexity results in \Cref{thm: nonconvex_L} and \Cref{thm: convex_L}, we can observe that under the conventional Frobenius norm Lipschitz smoothness assumption, the convergence rate of Muon is no better than that of (S)GD, and may even be worse.
    
\subsection{Spectral norm Lipschitz smooth}\label{section: L_*}

Considering the matrix structure of the parameters and the properties of the Muon optimizer, the spectral norm Lipschitz smoothness assumption is natural in practice. In this subsection, we adopt \Cref{ass: nuclear smooth} and establish the following theorems and corollaries.

\begin{theorem}\label{thm: nonconvex nuclear smooth}
    Under Assumptions \ref{ass: nuclear smooth} and \ref{ass: variance}, if we apply \Cref{alg: muon} with $\eta_t=\eta$, then
\begin{align*}
    \frac{1}{T}\sum_{t=0}^{T-1}\E[\|\nabla f(W_t)\|_*]\leq \frac{\E[f(W_0)-f(W_T)]}{T\eta}+\frac{L_* \eta}{2}+\frac{2\sigma\sqrt{r(1-\beta)}}{\sqrt{(1+\beta)B}}+\frac{2\sigma\sqrt{r}}{(1-\beta)T\sqrt{B}}+\frac{2\eta \beta L_*}{1-\beta}. 
\end{align*}

\end{theorem}

\begin{corollary}\label{cor: nonconvex_nuclear_smooth_complexities}
    Under the same assumptions as \Cref{thm: nonconvex nuclear smooth}, we have the following convergence guarantees for \Cref{alg: muon}:
    \begin{enumerate}[topsep=0pt]
        \item \textbf{Stochastic setting:} If we set $B=1$, $\eta=\sqrt{\frac{(1-\beta)\Delta}{TL_*}}$, and $1-\beta=\min\left\{\frac{\sqrt{L_*\Delta}}{\sigma\sqrt{rT}},1\right\}$, then
        \begin{align*}
            \frac{1}{T}\sum_{t=0}^{T-1}\E[\|\nabla f(W_t)\|_*]\leq O\pth{\sqrt[4]{\frac{rL_*\Delta\sigma^2}{T}}+\sqrt{\frac{L_*\Delta}{T}}+\frac{r\sigma^2}{\sqrt{L_*\Delta T}}}.
        \end{align*}
        \item \textbf{Deterministic setting ($\sigma=0$):} If we set $\eta=\sqrt{\frac{\Delta}{TL_*}}$, and $\beta=O(1)$, then
        \begin{align*}
            \frac{1}{T}\sum_{t=0}^{T-1}\|\nabla f(W_t)\|_*\leq O\pth{\sqrt{\frac{L_*\Delta}{T}}}.
        \end{align*}
    \end{enumerate}
    Consequently, achieving an $\epsilon$-nuclear norm stationary point of $f$ requires a complexity of $O(rL_*\sigma^2\Delta\epsilon^{-4})$in stochastic setting and $O(L_*\Delta\epsilon^{-2})$ in deterministic setting.
\end{corollary}

\begin{theorem}\label{thm: convex_L*}
 Under \Cref{ass: convex}, \ref{ass: nuclear smooth} and \ref{ass: bounded}, if we apply \Cref{alg: muon_deterministic} with $\eta_t=\eta$, then
\begin{align*}
    f(W_T)-f^*\leq \pth{1-\frac{\eta}{D_\op}}^T(f(W_0)-f^*)+\frac{L_*D_{{\rm op}}\eta}{2}.
    \end{align*}
    
Thus, to reach the precision $f(W_T)-f^*\leq \epsilon$, we can take $\eta=O(\frac{\epsilon}{L_*D_{{\rm op}}})$, and the complexity is $O(L_*D_{{\rm op}}^2 \epsilon^{-1}\text{log}\frac{\Delta}{\epsilon})$.
\end{theorem}

\begin{theorem}\label{thm: convex_L*_adaptive_stepsize}
    Under the same assumptions as \cref{thm: convex_L*}, if we apply \Cref{alg: muon_deterministic} with adaptive learning rates $\eta_t=\frac{\|\nabla f(W_{t})\|_*}{L_*}$, we have
    \begin{align*}
        f(W_t)-f^*\leq \frac{2L_*(f(W_0)-f^*)D_{{\rm op}}^2}{2L_*D_{{\rm op}}^2+t(f(W_0)-f^*)}.
    \end{align*}
Thus, to reach the precision $f(W_T)-f^*\leq \epsilon$, the complexity is $O(L_*D_{{\rm op}}^2\epsilon^{-1})$.
\end{theorem}
Though we have derived the convergence theorems in terms of $L_*$, it remains unclear how to interpret them. For the Frobenius norm Lipschitz
smoothness constant $L$, we know it is the largest singular value of the Hessian. The question, therefore, is what the relationship between $L$ and $L_*$ is. For a function $f$, its $L$ and $L_*$ are related to its Hessian $\nabla^2 f$. In neural network optimization, a widely observed phenomenon is that Hessians are typically low-rank \citep{sagun2016eigenvalues, sagun2017empirical, wu2020dissecting, papyan2020traces, yao2020pyhessian} and are approximately blockwise diagonal \citep{dong2025towards, zhang2024adam, zhang2024transformers, collobert2004large} (e.g., See Figure 2 in \cite{zhang2024transformers} and Figure 1-3 in \cite{dong2025towards}). Thus, we adopt \Cref{ass: blockwise} from \cite{an2025asgo} to better illustrate the connections and differences between $L$ and $L_*$ in neural networks.
\begin{assumption}[$\Lambda$-norm Lipschitz smooth]\label{ass: blockwise}
We say $f:\R^{m\times n}\to \R$ is 1-$\Lambda$-norm Lipschitz smooth with a positive definite matrix  $\Lambda \in \R^{n\times n}$, if for any $W, W'\in \R^{m\times n}$, 
\begin{align*}
    \|\nabla f(W)-\nabla f(W')\|_{\Lambda^{-1}}\leq \|W-W'\|_{\Lambda}.
\end{align*}
\end{assumption}
\Cref{ass: blockwise} is closely related to the blockwise diagonal structure of the Hessian. Actually, when $f$ is twice continuously differentiable, \Cref{ass: blockwise} is equivalent to the following assumption of Hessian: 
\begin{align}
    -I_m\otimes \Lambda=-\begin{bmatrix}
        \Lambda & & & \\ & \Lambda & & \\ & &  \cdots & \\ & & & \Lambda
    \end{bmatrix} \preceq
    \nabla^2 f_v(w) \preceq \begin{bmatrix}
        \Lambda & & & \\ & \Lambda & & \\ & &  \cdots & \\ & & & \Lambda
    \end{bmatrix}=I_m\otimes \Lambda \in \R^{mn \times mn} ,
\end{align}
where $f_v(w)=f(W)$ and $w=\vec(W)\in \R^{mn}$. The proof of the equivalence of these two assumptions can be found in \Cref{lemma: smooth hessian equivalence}. 
\Cref{ass: blockwise} has also been used in \cite{an2025asgo} to analyze matrix-structured optimization. Moreover, we have the following lemma to establish the relationship between Assumptions \ref{ass: smooth}, \ref{ass: nuclear smooth}, and \ref{ass: blockwise}, whose proof can be found in \Cref{app: lemmas}.
\begin{lemma}\label{lemma: smooth}
    If $f:\R^{m\times n}\to \R$ is 1-$\Lambda$-norm Lipschitz smooth with a positive definite matrix $\Lambda \in \R^{n\times n}$, then $f$ is also $\|\Lambda\|_{{\rm op}}$ Frobenius norm Lipschitz smooth and $\|\Lambda\|_*$ spectral norm Lipschitz smooth.
\end{lemma}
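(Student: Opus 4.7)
The plan is to fix $W, W'\in \R^{m\times n}$ and set $D = \nabla f(W) - \nabla f(W')$, $\Delta = W - W'$; then \Cref{ass: blockwise} reads $\|D\|_{\Lambda^{-1}} \le \|\Delta\|_\Lambda$, and my task is to sandwich each side between the appropriate unweighted norms. Throughout I would rely on the identities $\|A\|_\Lambda = \|A\Lambda^{1/2}\|_\F$ and $\|A\|_{\Lambda^{-1}} = \|A\Lambda^{-1/2}\|_\F$, which follow from the definition and turn $\Lambda^{\pm 1/2}$ into a plain matrix factor on which standard Schatten-norm inequalities can act.

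For the Frobenius statement, I would diagonalize $\Lambda = \sum_i \lambda_i v_i v_i^\top$ (with $\lambda_i>0$) and expand each weighted norm in its eigenbasis, obtaining $\|A\|_\F^2 = \sum_i \|Av_i\|_2^2$, $\|A\|_\Lambda^2 = \sum_i \lambda_i \|Av_i\|_2^2$, and $\|A\|_{\Lambda^{-1}}^2 = \sum_i \lambda_i^{-1}\|Av_i\|_2^2$. The routine bounds $\|A\|_\Lambda \le \sqrt{\|\Lambda\|_\op}\|A\|_\F$ and $\|A\|_\F \le \sqrt{\|\Lambda\|_\op}\|A\|_{\Lambda^{-1}}$ then fall out, and chaining them with the hypothesis collapses two factors of $\sqrt{\|\Lambda\|_\op}$ into $\|\Lambda\|_\op$, delivering \Cref{ass: smooth} with constant $L = \|\Lambda\|_\op$.

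For the spectral statement, the naive route (bounding $\Lambda^{\pm 1/2}$ in spectral norm) only yields the weaker constant $r\|\Lambda\|_\op$, so I would pair two sharper inequalities that both produce a factor $\sqrt{\|\Lambda\|_*}$. On the gradient side I would apply the Schatten Hölder bound $\|XY\|_* \le \|X\|_\F\|Y\|_\F$ to the factorization $D = (D\Lambda^{-1/2})\Lambda^{1/2}$, giving $\|D\|_* \le \|D\|_{\Lambda^{-1}}\cdot \|\Lambda^{1/2}\|_\F = \|D\|_{\Lambda^{-1}}\sqrt{\|\Lambda\|_*}$. On the displacement side I would use the trace/duality inequality $\trace{XY} \le \|X\|_*\|Y\|_\op$, applied via the cyclic identity $\|\Delta\|_\Lambda^2 = \trace{\Lambda\Delta^\top\Delta}$ with $X = \Lambda$ and $Y = \Delta^\top\Delta$, to conclude $\|\Delta\|_\Lambda \le \sqrt{\|\Lambda\|_*}\,\|\Delta\|_\op$. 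Composing these two one-sided estimates with the hypothesis then yields $\|D\|_* \le \|\Lambda\|_*\|\Delta\|_\op$, which is \Cref{ass: nuclear smooth} with $L_* = \|\Lambda\|_*$.

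The Frobenius half is essentially bookkeeping once one expands the weighted norms in the eigenbasis of $\Lambda$. I expect the only real obstacle to be identifying the correct pairing in the spectral half: one must recognize that Schatten-$1$/$2$ Hölder on one side and the nuclear/operator duality on the other both contribute $\sqrt{\|\Lambda\|_*}$, whose product recovers the tight constant $\|\Lambda\|_*$ rather than the loose $\sqrt{r}\|\Lambda\|_\op$ that a purely operator-norm analysis would give.
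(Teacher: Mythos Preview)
Your proposal is correct and follows essentially the same approach as the paper. The paper packages the four one-sided estimates you identify into two auxiliary lemmas (proving $\|A\|_\F\le\sqrt{\|\Lambda\|_\op}\|A\|_{\Lambda^{-1}}$, $\|A\|_\Lambda\le\sqrt{\|\Lambda\|_\op}\|A\|_\F$, and $\|A\|_\Lambda\le\sqrt{\|\Lambda\|_*}\|A\|_\op$ via the same eigenbasis expansion and trace/duality arguments you describe, and citing $\|A\|_*\le\sqrt{\|\Lambda\|_*}\|A\|_{\Lambda^{-1}}$ from \cite{an2025asgo}), then chains them with the hypothesis exactly as you do; the only cosmetic difference is that you supply your own proof of the last bound via the Schatten--H\"older inequality $\|XY\|_*\le\|X\|_\F\|Y\|_\F$, whereas the paper outsources it.
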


\subsubsection{Comparison with GD}\label{section: mnist}

In this section, we first argue that Muon inherently exploits blockwise diagonal structures and relatively low-rank properties. 
For example, if the Hessians of $f$ have a blockwise diagonal structure and are relatively ``low rank'', i.e. if it satisfies \Cref{ass: blockwise} with $\Lambda$ such that $\|\Lambda\|_{{\rm op}}\approx \|\Lambda\|_*\ll r\|\Lambda\|_{{\rm op}}$, then the convergence rate of Muon for finding the nuclear norm stationary points can be better than that of (S)GD in the nonconvex case, i.e. $O(\|\Lambda\|_*\Delta\epsilon^{-2})\ll O(r\|\Lambda\|_{{\rm op}}\Delta\epsilon^{-2})$ and $O(r\|\Lambda\|_*\sigma^2\Delta\epsilon^{-4})\ll O(r^2\|\Lambda\|_{{\rm op}}\sigma^2\Delta\epsilon^{-4})$.

In the star convex case, recall that the complexity of GD is $O(LD_\F^2\epsilon^{-1})$. Hence, to compare the convergence rate of Muon and GD, we need to compare $LD_\F^2$ with $L_*D_{{\rm op}}^2$. Similarly, if $f$ satisfies \Cref{ass: blockwise} with $\Lambda$ such that $\|\Lambda\|_{{\rm op}}\approx \|\Lambda\|_*\ll r\|\Lambda\|_{{\rm op}}$, then $L_*D_{{\rm op}}^2\leq \|\Lambda\|_* D_{{\rm op}}^2\ll r\|\Lambda\|_{{\rm op}}D_{{\rm op}}^2$. In the experiments, we usually find that $\|W_t-W^*\|_{{\rm op}}$ decreases as $t$ increases (\Cref{figure: q}(b)). 
Thus, we assume $D_{{\rm op}}\approx \|W_0-W^*\|_{{\rm op}}$. Despite the success of LoRA \citep{hu2022lora} in fine-tuning foundation models indicating a relatively low rank of $W_0-W^*$ is enough for fine-tuning, many empirical evidences \citep{lialin2307relora, jiang2024mora, huang2025hira} demonstrate that a relatively high rank of $W_0-W^*$ is needed for pretraining and some complex fine-tuning tasks for large foundation models, and a low rank of $W_0-W^*$ can lead to degraded performance in those scenarios. 
Thus, we assume $W_0-W^*$ is relatively ``high rank''. Then, we can expect $r\|W_0-W^*\|^2_{{\rm op}}\approx \|W_0-W^*\|^2_\F$ and $L_*D_{{\rm op}}^2\ll r\|\Lambda\|_{{\rm op}}D_{{\rm op}}^2\approx LD_\F^2$, which means  Muon can have a better performance than GD.

To further evaluate this argument, we conduct experiments comparing Muon and GD on a quadratic function $f(W)=\frac{1}{2}\trace{(W-W^*)^\top Q(W-W^*)}$, where $Q$ is a positive definite matrix with a poor condition number and is relatively ``low rank'', i.e. $\|Q\|_*\approx \|Q\|_{{\rm op}}$.  Note that the Hessian of $f$ is $I\otimes Q$, which satisfies \Cref{ass: blockwise}. We randomly choose $W^*$, and set $W_0=0$. We report the results in \Cref{figure: q}. We can see that Muon outperforms GD and most samples of ratio $\frac{D_\F^2L}{D_{{\rm op}}^2L_*}$ are larger than 3, which validates the theoretical findings.

\begin{figure}[htbp]
    \centering
    \begin{subfigure}[t]{0.3\linewidth}
        \includegraphics[width=\linewidth]{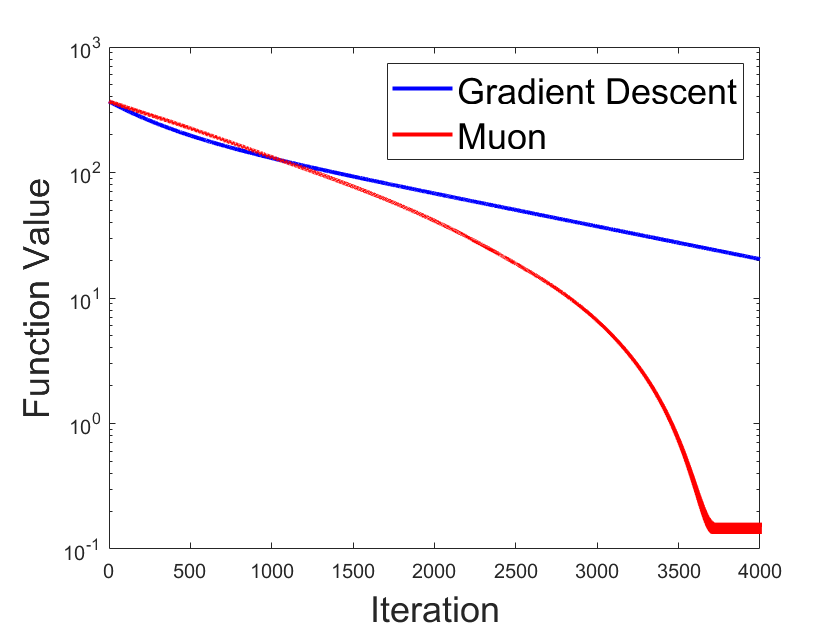}
        \caption{\small Function Value }
    \end{subfigure}
    \begin{subfigure}[t]{0.3\linewidth}
        \includegraphics[width=\linewidth]{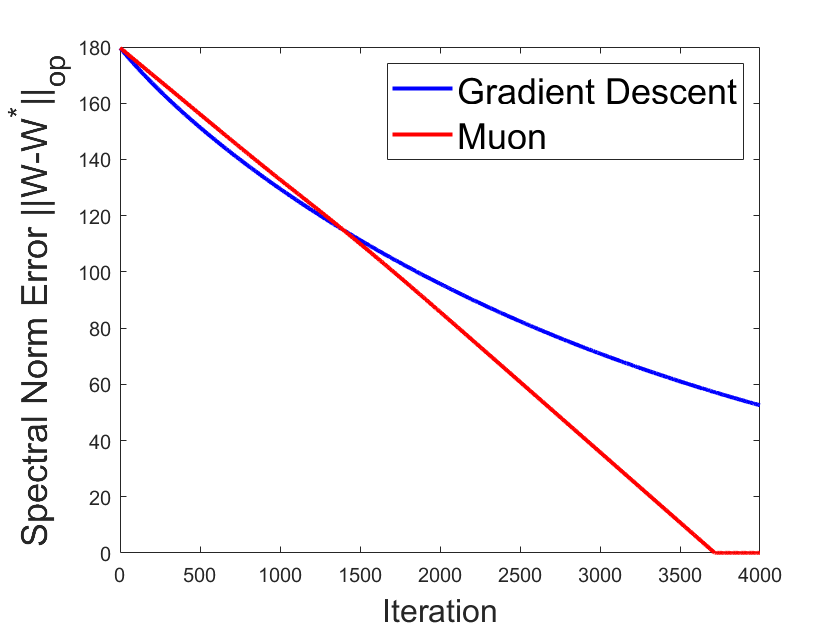}
        \caption{\small $\|W_t-W^*\|_{{\rm op}}$ }
     \end{subfigure}
     \begin{subfigure}[t]{0.3\linewidth}
        \includegraphics[width=\linewidth]{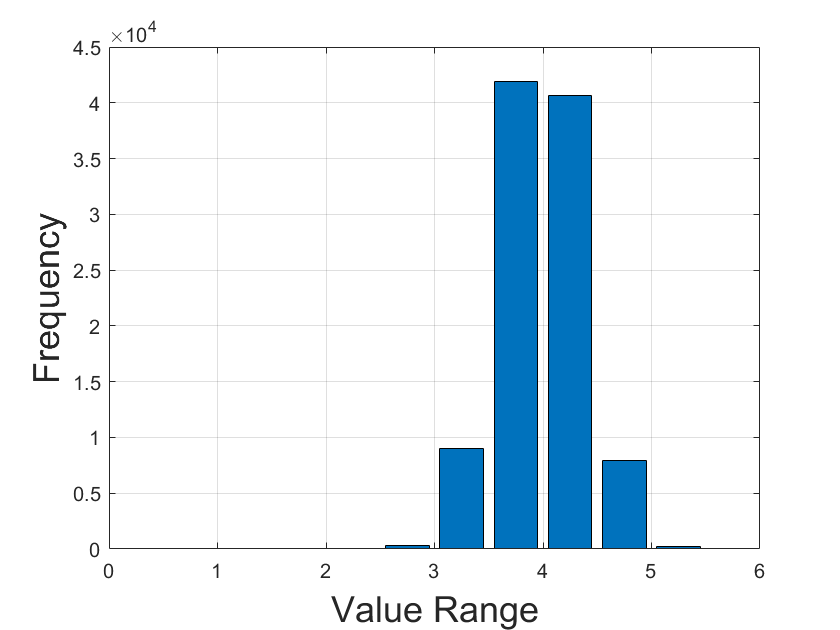}
        \caption{\small $D_\F^2L/(D_{{\rm op}}^2L_*)$ }
     \end{subfigure}
     \caption{\small Experiments on a quadratic function $f(W)=\frac{1}{2}\trace{(W-W^*)^\top Q(W-W^*)}$. Detailed settings can be found in \Cref{app: exp}.}
     \label{figure: q}
\end{figure}

Next, we consider a more realistic example to illustrate how the low-rank structure of the Hessian arises and how Muon can benefit from it. Specifically, we consider a multi-class classification problem with dataset $\{(x_i, y_i)\}^B_{i=1}$, where $x_i\in \R^d$ is the feature data, $y_i\in \R^c$ is the corresponding $c$-dimensional one-hot vector of the label, $c$ is the number of classes and $B$ is the number of samples. We first consider a linear model $g(W;x)=Wx\in \R^c$, where $W\in \R^{c\times d}$ is the parameter, and its mean-square (MSE) loss. Denote $X=(x_1, \dots, x_B)\in \R^{d\times B}$, $Y=(y_1, \dots, y_B)\in \R^{c\times B}$. The MSE loss can be defined as follows:
\begin{align}\label{eq: mse loss}
    f(W)=\frac{1}{2B}\|WX-Y\|_\F^2.
\end{align}
We can calculate the Hessian of this loss as
\begin{align}
    \nabla^2 f_v(\vec(W))=\frac{1}{B}I_c\otimes XX^\top.
\end{align}
We note that this Hessian is block-diagonal with $c$ blocks and satisfy \Cref{ass: blockwise} with $\Lambda=\frac{1}{B}XX^\top$. We can calculate $L=\frac{1}{B}\|XX^\top\|_\op=\frac{1}{B}\|X\|_\op^2$ and $L_*=\frac{1}{B}\|XX^\top\|_*=\frac{1}{B}\|X\|_\F^2$.

\begin{figure}[htbp]
    \centering
        \begin{subfigure}[t]{0.3\linewidth}
        \includegraphics[width=\linewidth]{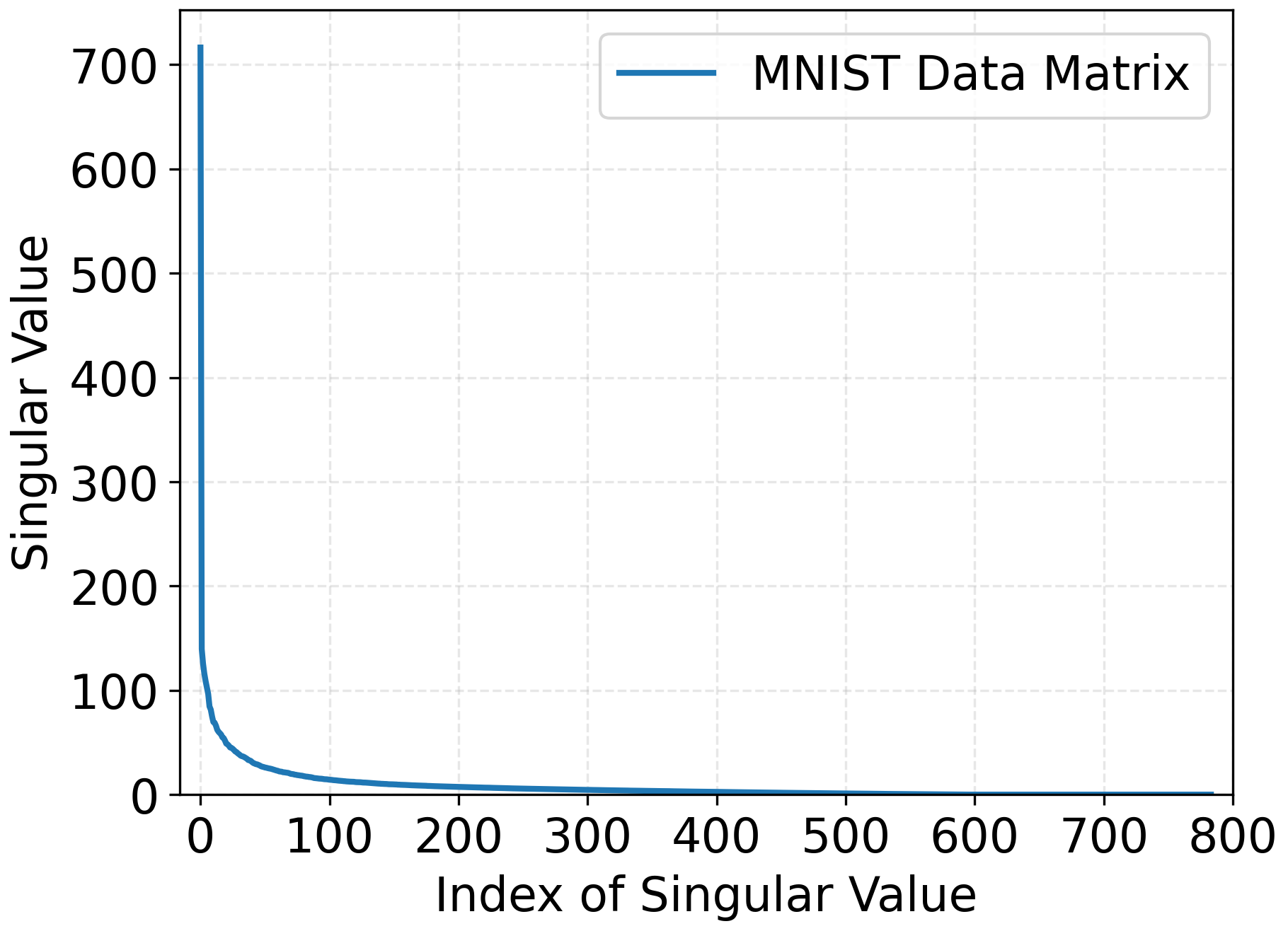}
        \caption{\small Spectra of $X_{\text{MNIST}}$}
    \end{subfigure}
    \begin{subfigure}[t]{0.3\linewidth}
        \includegraphics[width=\linewidth]{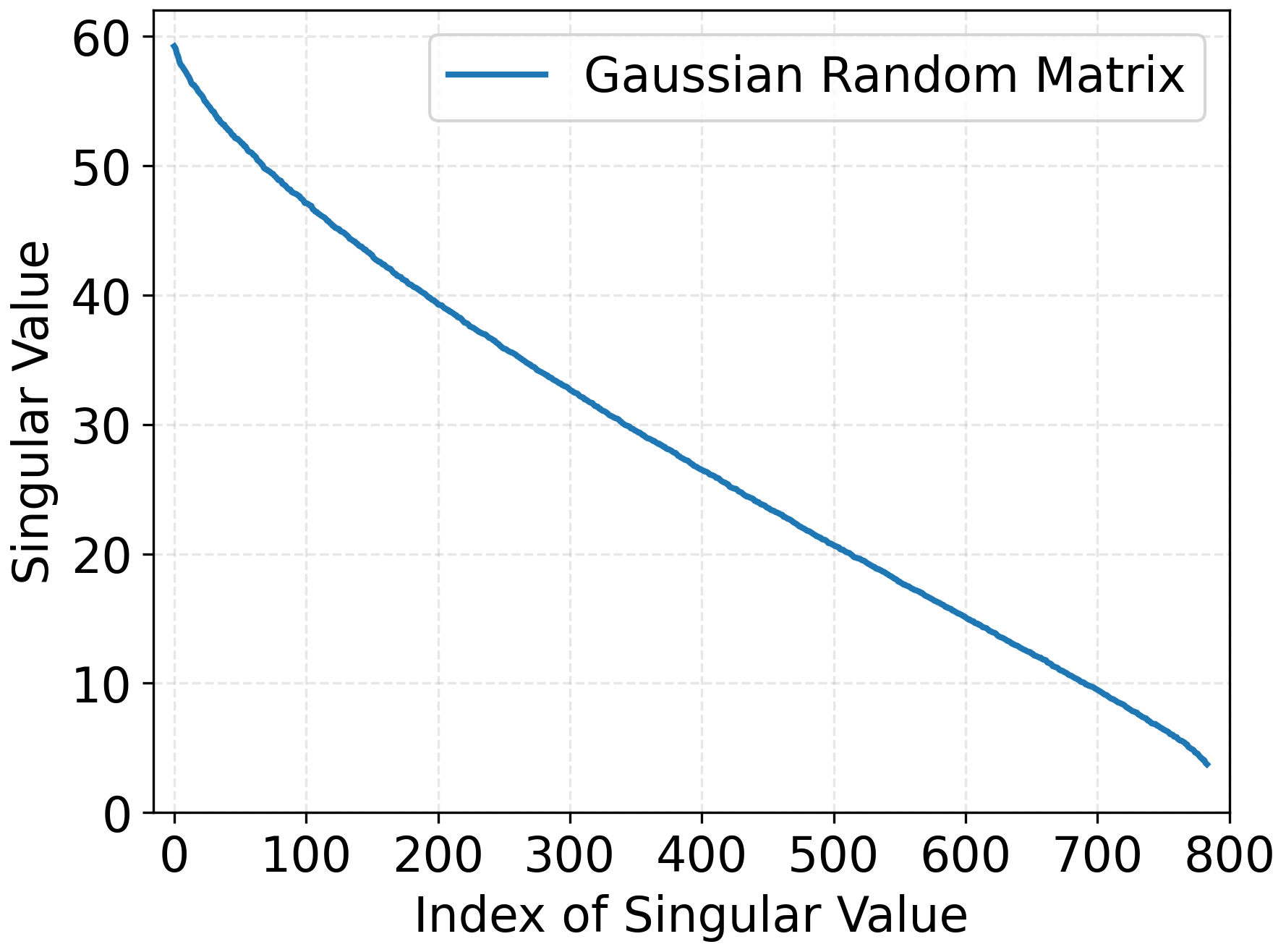}
        \caption{\small Spectra of $X_{\text{Gaussian}}$}
    \end{subfigure}
    \begin{subfigure}[t]{0.3\linewidth}
        \includegraphics[width=\linewidth]{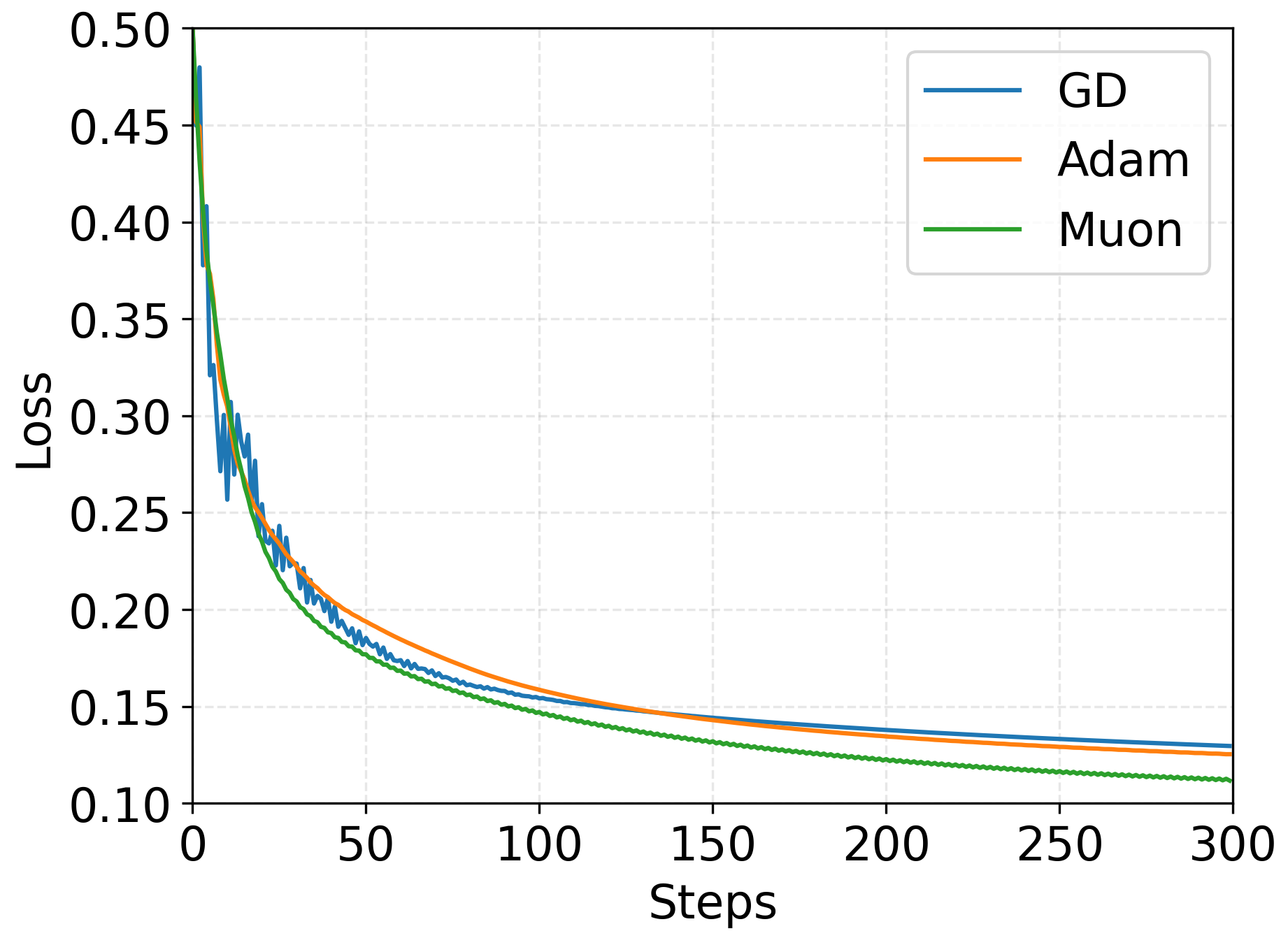}
        \caption{\small  $X_{\text{MNIST}},\,  W\in \R^{10\times 784}$. Ratio: $D_\F^2L/(D_{{\rm op}}^2L_*)=1.57$}
    \end{subfigure}
    
    \begin{subfigure}[t]{0.3\linewidth}
        \includegraphics[width=\linewidth]{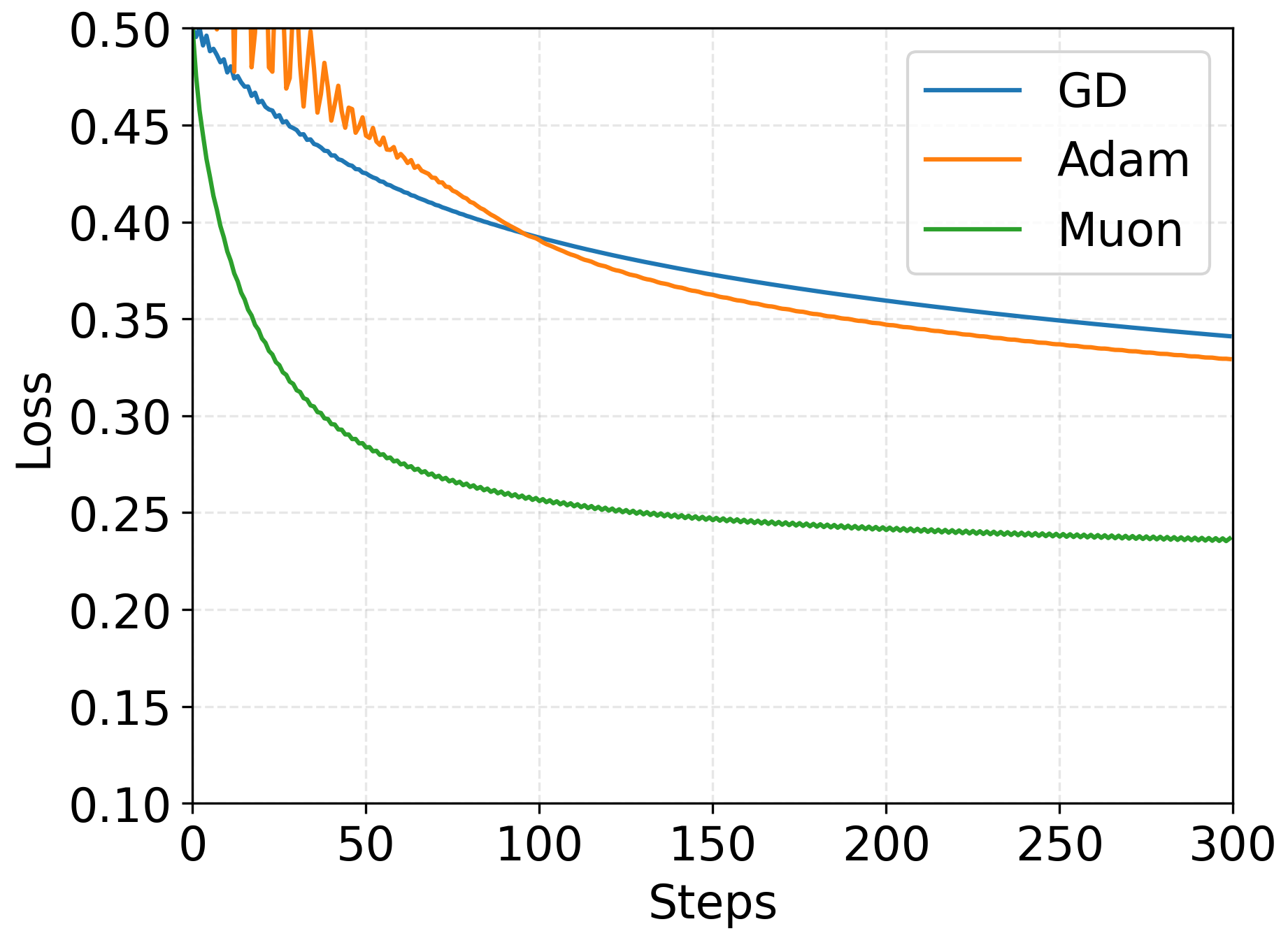}
        \caption{$X_{\text{MNIST}},\,  W\in \R^{100\times 784}$. Ratio: $D_\F^2L/(D_{{\rm op}}^2L_*)=3.52$}
    \end{subfigure}
    \begin{subfigure}[t]{0.3\linewidth}
        \includegraphics[width=\linewidth]{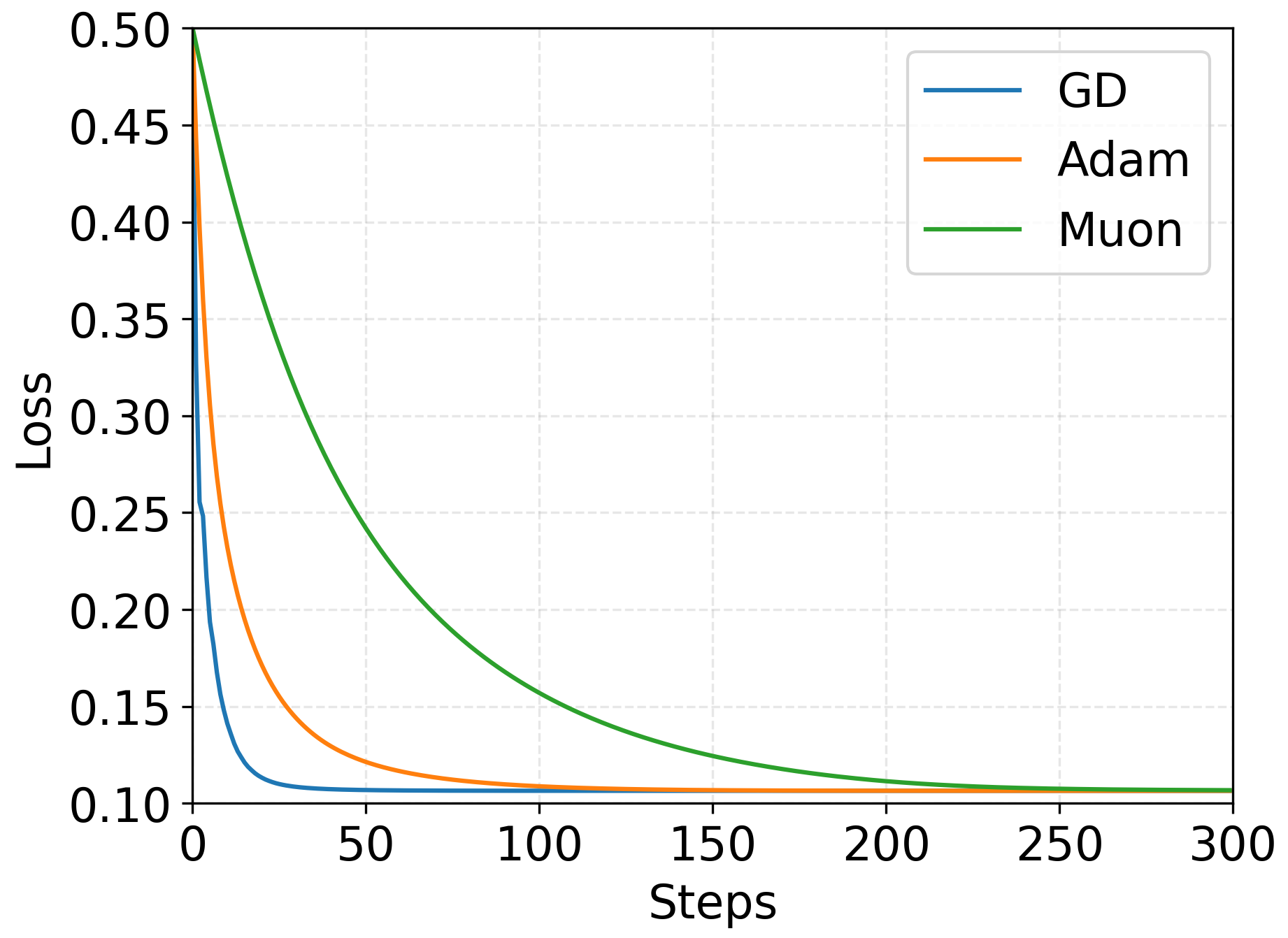}
        \caption{\small $X_{\text{Gaussian}},\, W\in \R^{10\times 784}$. Ratio: $D_\F^2L/(D_{{\rm op}}^2L_*)=0.0122$}
    \end{subfigure}
    \begin{subfigure}[t]{0.3\linewidth}
        \includegraphics[width=\linewidth]{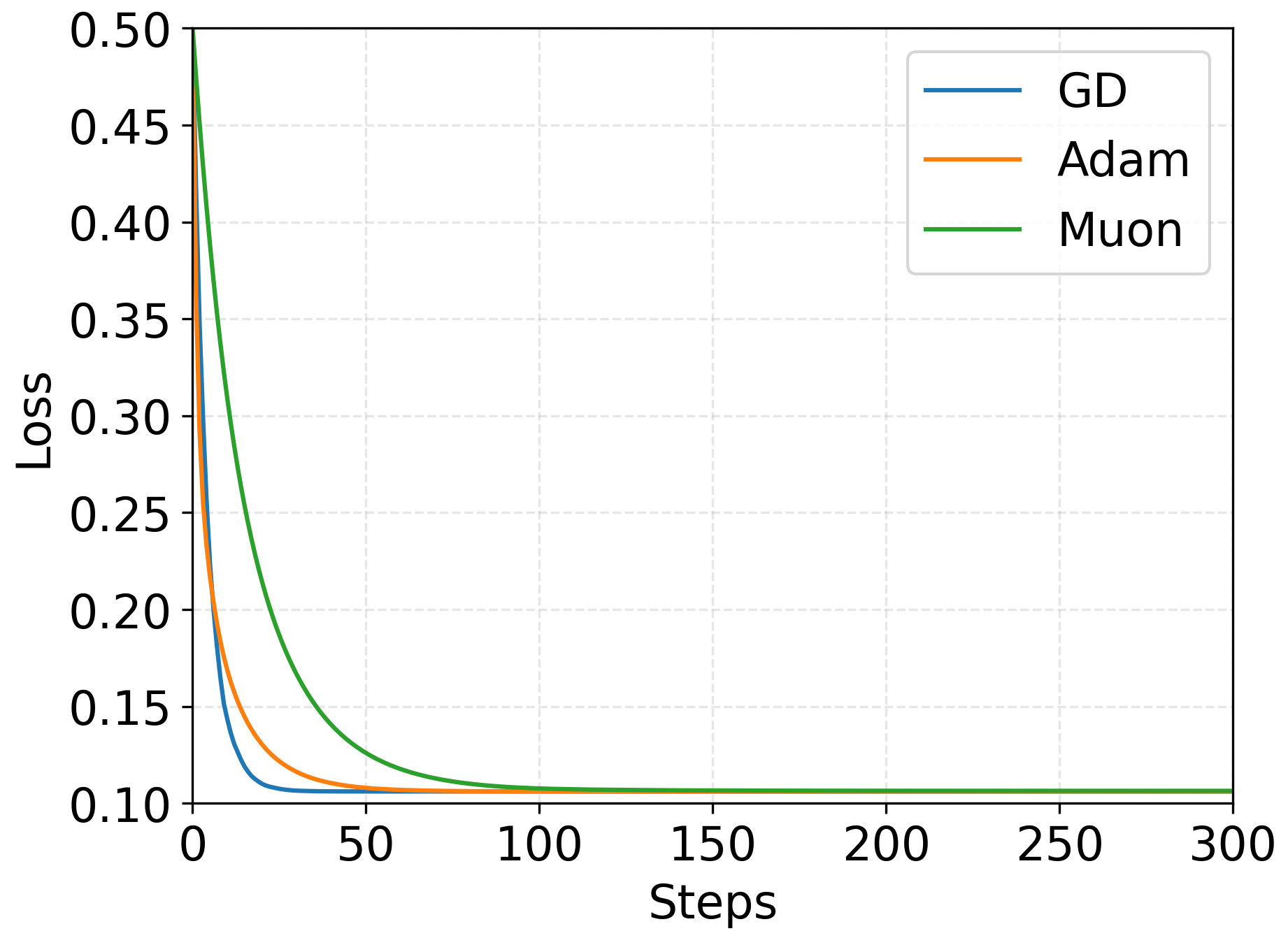}
        \caption{\small $X_{\text{Gaussian}},\, W\in \R^{100\times 784}$. Ratio: $D_\F^2L/(D_{{\rm op}}^2L_*)=0.0232$}
    \end{subfigure}

    \caption{\small (a, b): Spectra of $X_{\text{MNIST}}$ and $X_{\text{Gaussian}}$. (c, d, e, f): Optimizing \eqref{eq: mse loss} with different $X$ and $Y$. }
    \label{fig:matrix}
\end{figure}

Actually, we find that the feature matrix $X$ in practical machine learning problem is typically very ``low rank'', or more precisely, their singular values are highly concentrated (See \Cref{fig:matrix}(a) and additional examples in \Cref{app: low-rank}) such that $\|X\|_\op\approx\|X\|_\F$. For example, we randomly select 1000 samples from training datasets of MNIST \citep{lecun1998gradient} and construct $X_{\text{MNIST}}\in \R^{784\times 1000}$. Our calculation shows that $\|X_{\text{MNIST}}\|^2_\F/\|X_{\text{MNIST}}\|^2_\op=1.41$, which means $L_*\approx L$, the Hessian of this loss is extremely low-rank. Note that this phenomenon is not limited to MNIST; similar low-rank behavior can also be observed in other real-world datasets (e.g., CIFAR-10 and text data), see \Cref{app: low-rank}.

According to the theory, Muon can benefit from this low-rank structure and potentially converge faster than GD, thus, to verify this, we conduct the following experiments. We first use $X_{\text{MNIST}}$ and their corresponding 1000 labels $y$ from the training dataset as $Y\in \R^{10\times 1000}$ to optimize $W \in \mathbb{R}^{10 \times 784}$ under the MSE loss in \eqref{eq: mse loss}. We compare GD (with/without Nesterov momentum), Adam, and Muon, tuning all hyperparameters for each method. Results in \Cref{fig:matrix}(c) show that Muon outperforms GD and Adam. Then, we compute the Frobenius norm and operator norm of $W_0 - W^*$ (where $W^*$ is selected as the final converged point), denoted as $D_F$ and $D_{\text{op}}$, respectively, and calculate the ratio $\frac{D_\F^2L}{D_{{\rm op}}^2L_*}$. 
This ratio equals 1.57, exceeding 1 and thus supporting Muon’s advantage, though it is not large, explaining the modest empirical gain in \Cref{fig:matrix}(c). One reason is that the intrinsic dimension of $W \in \mathbb{R}^{10 \times 784}$ limits its rank to be relatively low, which in turn leads to a relatively small value of $D_{\mathrm{F}}^2/D_{\mathrm{op}}^2$. 
To amplify this effect, we replace $Y$ with 1000 random one-hot vectors over 100 classes, yielding $W \in \mathbb{R}^{100 \times 784}$.
We then conduct the same experiment comparing GD, Adam, and Muon. The results are shown in \Cref{fig:matrix}(d). We can observe that the ratio is now larger, reaching $3.52$. Correspondingly, Muon more significantly outperforms GD and Adam in the figure, which is consistent with our theoretical predictions.

While, if we construct $X_{\text{Gaussian}}\in \R^{784\times 1000}$ as a Gaussian random matrix, then we can find that the singular values of this random matrix are more evenly distributed (See \Cref{fig:matrix}(b)) and $\|X_{\text{Gaussian}}\|^2_\F/\|X_{\text{Gaussian}}\|^2_\op=224.06$, which means its $L_*$ is significantly larger than $L$ and indicating Muon will outperform worser than GD. Our experiments (\Cref{fig:matrix}(e, f)) validated our theoretical predictions. Notably, when we increase the dimension of $y$ from 10 to 100, the corresponding ratio rises from 0.0122 to 0.0232, and the performance gap between Muon and GD (Adam) becomes smaller. As shown in \Cref{fig:matrix}(c, d, e, f), Muon performs better relative to GD when this ratio is larger, which is consistent with our theoretical analysis. Meanwhile, our experiments also suggest that the low-rank property of the Hessian in practical machine learning problems may partially stem from the fact that real-world data are not Gaussian random, but instead exhibit some inherent low-rank structure.

\subsection{Without uniform Lipschitz smoothness assumptions}\label{section: J}
Though the previous subsections offer some insight into how Muon can outperform GD, 
\Cref{ass: blockwise} appears overly restrictive. To further investigate how the structure and dynamics of the Hessian matrix affect the convergence of Muon and GD in a more fine-grained manner, we consider the following assumption and analysis.

\begin{assumption}\label{ass: third}
    We assume the norm of $\nabla^3 f$ is bounded, i.e., we assume for any $W\in \R^{m\times n}$, $\|\nabla^3 f_v(w)\|_{{\rm op}}\leq s$. Here $f_v(w)=f(W)$ and $w=\vec(W)\in \R^{mn}$.
\end{assumption}

\begin{theorem}\label{thm: nonconvex J}
    Under \Cref{ass: third}, if we apply \Cref{alg: muon_deterministic} with $\eta_t=\eta$, then
\begin{align*}
    \frac{1}{T}\sum_{t=0}^{T-1}\|\nabla f(W_t)\|_*\leq \frac{f(W_0)-f(W_T)}{T\eta}+\frac{\eta J}{2}+\frac{s\eta^2r^{3/2}}{6}
\end{align*}
where $J=\frac{1}{T}\sum_{t=0}^{T-1}J_t$, $J_t=\vec(U_tV_t^\top)^\top H_t \vec(U_tV_t^\top)$, $H_t=\nabla^2f_v(\vec(W_t))$, and $f_v(\vec(W_t))=f(W_t)$. 
\end{theorem}
If $J\gtrsim \sqrt{s\epsilon}\,r^{3/4}$, and $\eta=O\pth{\sqrt{\frac{\Delta}{JT}}}$, then\footnote{\label{fn} In experiments, we find that usually $J_t>0$ and is not approaching 0. Thus, we mainly discuss the situations when $J \gtrsim \sqrt{s\epsilon}\,r^{3/4}$, $J^*\gtrsim r^{1/2}\sigma\Delta^{-1}\epsilon$, and $\tilde{J}\gtrsim  s^{1/2}D_\op^{-1/2}r^{3/4}\epsilon^{1/2} $ in the main paper. When $J\lesssim \sqrt{s\epsilon}\,r^{3/4}$, $J^*\lesssim r^{1/2}\sigma\Delta^{-1}\epsilon$, or $\tilde{J}\lesssim  s^{1/2}D_\op^{-1/2}r^{3/4}\epsilon^{1/2} $, there is a better convergence rate regarding $\epsilon$; see discussions in Appendix \ref{app: nonconvex J}, \ref{app: nonconvex mJ}, and \ref{app: convex J}.}
\begin{align*}
    \frac{1}{T}\sum_{t=0}^{T-1}\|\nabla f(W_t)\|_*\leq \pth{\sqrt{\frac{J\Delta }{T}}+\frac{sr^{3/2} \Delta }{JT}}.
\end{align*}
Thus, it is possible that Muon can find an $\epsilon$-nuclear norm stationary point of $f$ with a complexity of $O(J\Delta\epsilon^{-2})$. 

For the stochastic setting with \Cref{alg: muon}, we have the following theorem.

\begin{theorem}\label{thm: nonconvex mJ}
    Under Assumptions \ref{ass: third} and \ref{ass: variance}, if we apply \Cref{alg: muon} with $\eta_t=\eta$, then
\begin{align*}
    \frac{1}{T}\sum_{t=0}^{T-1}\E[\|\nabla f(W_t)\|_*]\leq \frac{\Delta}{T\eta}+\frac{J^* \eta}{2}+\frac{2\sigma\sqrt{r(1-\beta)}}{\sqrt{(1+\beta)B}}+\frac{2\sigma\sqrt{r}}{(1-\beta)T\sqrt{B}}+\frac{2\eta \beta J^*}{1-\beta}+O\pth{\frac{sr^{3/2}\eta^2}{1-\beta}}.
\end{align*}
where $J^*=\max\{\E[\hat{J}], \E[J]\}$, $\hat{J}=\frac{1}{T}\sum_{t=0}^{T-2}\hat{J}_t$, $J=\frac{1}{T}\sum_{t=0}^{T-1}J_t$, $J_t=\vec(U_tV_t^\top)^\top \nabla^2f_v(\vec(W_t)) \vec(U_tV_t^\top)$, $\hat{J}_t=\vec(U_{g,t}V_{g,t}^\top)^\top \nabla^2f_v(\vec(W_t)) \vec(U_tV_t^\top)$, $H_t=\nabla^2f_v(\vec(W_t))$, $f_v(\vec(W_t))=f(W_t)$, and $U_{g, t}, V_{g, t}$ are the the matrices of left and right singular vectors of $\nabla f(W_{t})-\nabla f(W_{t+1})$, i.e. $\nabla f(W_{t})-\nabla f(W_{t+1})=U_{g, t}S_{g, t}V_{g, t}^\top$. 

\end{theorem}

If $J^*\gtrsim r^{1/2}\sigma\Delta^{-1}\epsilon$, $B=1$, $\eta=\sqrt{\frac{(1-\beta)\Delta}{TJ^*}}$, $1-\beta=\min\{\frac{\sqrt{J^*\Delta}}{\sigma\sqrt{rT}},1\}$, then\textbf{\hyperref[fn]{\textsuperscript{\ref*{fn}}} }
\begin{align*}
    \frac{1}{T}\sum_{t=0}^{T-1}\E[\|\nabla f(W_t)\|_*]\leq O\pth{\sqrt[4]{\frac{rJ^*\Delta\sigma^2}{T}}+\sqrt{\frac{J^*\Delta}{T}}+\frac{r\sigma^2}{\sqrt{J^*\Delta T}}+\frac{sr^{3/2}\Delta}{TJ^*}}.
\end{align*}
Thus, it is possible that Muon can find an $\epsilon$-nuclear norm stationary point of $f$ with a complexity of $O(rJ^*\sigma^2\Delta\epsilon^{-4})$.

For the star convex case, we have
\begin{theorem}\label{thm: convex J}
 Under Assumptions \ref{ass: convex}, \ref{ass: third} and \ref{ass: bounded}, if we apply \Cref{alg: muon_deterministic} with $\eta_t=\eta$, we have
\begin{align*}
        f(W_T)-f^*&\leq \pth{1-\frac{\eta}{D_{{\rm op}}}}^T\Delta+\frac{\eta^2\tilde{J} T}{2}+\frac{s\eta^2 D_\op r^{3/2}}{6}.
\end{align*}
where $\tilde{J}=\frac{1}{T}\sum_{t=0}^{T-1}\pth{1-\frac{\eta}{D_{{\rm op}}}}^{T-1-t}J_t$, $J_t=\vec(U_tV_t^\top)^\top H_t \vec(U_tV_t^\top)$, $H_t=\nabla^2f_v(\vec(W_t))$, and $f_v(\vec(W_t))=f(W_t)$.

\end{theorem}

Note that $\tilde{J}$ can be viewed as a weighted average of $J_t$ and $\tilde{J}\leq \frac{1}{T}\sum_{t=0}^{T-1}|J_t|$.
If $\tilde{J}\gtrsim  s^{1/2}D_\op^{-1/2}r^{3/4}\epsilon^{1/2} $ and $\eta=\min\sth{\frac{D_{{\rm op}}}{T}\log\pth{\frac{T \Delta}{ D_{{\rm op}}^2\tilde{J}}}, D_\op}$, then\hyperref[fn]{\textsuperscript{\ref*{fn}}}
    \begin{align*}
        f(W_T)-f^*\leq& \frac{D_{{\rm op}}^2\tilde{J}}{T}+\frac{D_{{\rm op}}^2\tilde{J}}{2T}\qth{\log\pth{\frac{T \Delta}{ D_{{\rm op}}^2\tilde{J}}}}^2+\frac{sr^{3/2}D_{{\rm op}}^3}{6T^2}\qth{\log\pth{\frac{T \Delta}{ D_{{\rm op}}^2\tilde{J}}}}^2\\
        \leq&\tilde{O}\pth{\frac{D_{{\rm op}}^2\tilde{J}}{T}}.
    \end{align*}

Thus, it is possible that Muon can reach the precision $f(W_T)-f^*\leq \epsilon$ with a complexity of $\tilde{O}(\tilde{J} D_{{\rm op}}^2\epsilon^{-1})$.

\subsubsection{Comparison with GD}

Therefore, the key to comparing Muon and GD lies in comparing the relationship between $J$ (or $\tilde{J}$) and $L$. Note that $J$ is the average of $J_t$, while $\tilde{J}$ is bounded above by the average of $|J_t|$. Hence, we may first analyze $J_t$. Note that 
\begin{align}\label{eq: J}
    J_t &= \vec(U_tV_t^\top)^\top H_t \vec(U_tV_t^\top) \nonumber \\
    & =\vec(I_{r_t})^\top (U_t^\top\otimes V_t^\top)H_t(U_t\otimes V_t) \vec(I_{r_t}) \nonumber \\
    & = \sum_{i,j\in[r_t]}[(U_t^\top\otimes V_t^\top)H_t(U_t\otimes V_t)]_{(i-1)*r_t+i, (j-1)*r_t+j}.
\end{align}
Thus, $J_t$ is the sum of $r_t^2$ elements of $A_t\triangleq (U_t^\top\otimes V_t^\top)H_t(U_t\otimes V_t)$, where $A_t\in \R^{r_t^2\times r_t^2}$ can be viewed as a representation of $H_t$ under a certain congruence transformation depending on $U_t$ and $V_t$. 

To better illustrate the relationship between $J_t$ and $L$, we assume that $H_t$ can be expressed as a Kronecker product, i.e., $H_t = P_t \otimes Q_t$, where $P_t \in \R^{m \times m}$ and $Q_t \in \R^{n \times n}$. Let $\sigma_{p,t,1} \geq \sigma_{p,t,2} \geq \dots \geq \sigma_{p,t,m}$ denote the singular values of $P_t$, and $\sigma_{q,t,1} \geq \sigma_{q,t,2} \geq \dots \geq \sigma_{q,t,n}$ denote the singular values of $Q_t$. Then, we have $A_t=(U_t^\top P_t U_t) \otimes (V_t^\top Q_t V_t)$ and 
$J_t=\langle U_t^\top P_t U_t,  V_t^\top Q_t V_t\rangle\leq \sum_{i=1}^{r_t} \sigma_{p,t,i}\sigma_{q,t,i}$. Here, we use the Von Neumann's trace inequality.



Note that in the nonconvex case, the complexity of GD for $\epsilon$-nuclear norm stationary point is $O(rL\Delta\epsilon^{-2})$, where $L$ is at least larger than $\max_t L_t$ with $L_t\triangleq\|H_t\|_{{\rm op}}= \|P_t\|_{{\rm op}}\|Q_t\|_{{\rm op}}$. Thus, when $H_t$ can be represented by $P_t\otimes Q_t$ and $Q_t$, $P_t$ are relatively low-rank such that $\sum_{i=1}^{r_t} \sigma_{p,t,i}\sigma_{q,t,i}\ll r\sigma_{p,t,1}\sigma_{q,t,1}$, then $J=\frac{1}{T}\sum_t J_t\leq \frac{1}{T}\sum_t\sum_{i=1}^{r_t} \sigma_{p,t,i}\sigma_{q,t,i}\ll \frac{1}{T}\sum_t rL_t\leq rL$, and the convergence rate of Muon can be better than GD's.  (Or one of $Q_t$, $P_t$ is relatively low-rank, i.e. if $Q_t$ is  relatively low-rank such that $\|Q_t\|_{{\rm op}}\approx \|Q_t\|\ll r\|Q_t\|_{{\rm op}}$, we can also have $J=\frac{1}{T}\sum_t J_t\leq \frac{1}{T}\sum_t \|P_t\|_{{\rm op}}\|Q_t\|_*\ll r \max_t L_t\leq rL$.)

Similar to the previous discussions, in the star convex case, if $W_0-W^*$ is relatively ``high rank'' such that $rD_\op^2\approx \|W_0-W^*\|^2_\F$ and $Q_t$, $P_t$ are relatively low-rank such that $\tilde{J}\ll rL$, then we have $\tilde{J}D_\op^2\ll LD_\F^2$ and Muon can have a better performance than GD.

For stochastic case and \Cref{alg: muon}, note that $\hat{J}$ satisfies properties analogous to those of $J$. 
For example, if we make the same assumptions as those in \Cref{section: J}, i.e. $H_t$ can be expressed as a Kronecker product $H_t = P_t \otimes Q_t$, where $P_t \in \R^{m \times m}$ and $Q_t \in \R^{n \times n}$. Let $\sigma_{p,t,1} \geq \sigma_{p,t,2} \geq \dots \geq \sigma_{p,t,m}$ denote the singular values of $P_t$, and $\sigma_{q,t,1} \geq \sigma_{q,t,2} \geq \dots \geq \sigma_{q,t,n}$ denote the singular values of $Q_t$. Then, we have 
$\hat J_t=\langle U_{g,t}^\top P_t U_t,  V_{g,t}^\top Q_t V_t\rangle\leq \sum_{i=1}^{r_t} \sigma_{p,t,i}\sigma_{q,t,i}$ (Von Neumann's trace inequality). 
Note that in the stochastic setting, the complexity of SGD for $\epsilon$-nuclear norm stationary point is $O(r^2L\sigma^2\Delta\epsilon^{-4})$. Thus, Similar to the discussions in \Cref{section: J},  when $H_t$ can be represented by $P_t\otimes Q_t$ and $Q_t$, $P_t$ are relatively low-rank such that $\sum_{i=1}^{r_t} \sigma_{p,t,i}\sigma_{q,t,i}\ll r\sigma_{p,t,1}\sigma_{q,t,1}$, then $J^*\leq \max\{\hat{J}, J\}\leq \max_t \sum_{i=1}^{r_t} \sigma_{p,t,i}\sigma_{q,t,i}\ll r\max_t L_t\leq rL$, and the convergence rate of Muon can be better than GD's.  (Or one of $Q_t$, $P_t$ is relatively low-rank, i.e. if $Q_t$ is relatively low-rank such that $\|Q_t\|_{{\rm op}}\approx \|Q_t\|\ll r\|Q_t\|_{{\rm op}}$, we can also have $J^*\leq\max_t \|P_t\|_{{\rm op}}\|Q_t\|_*\ll r \max_t L_t\leq rL$.)

\begin{figure}
    \centering
    \begin{subfigure}[t]{0.22\linewidth}
        \includegraphics[width=\linewidth]{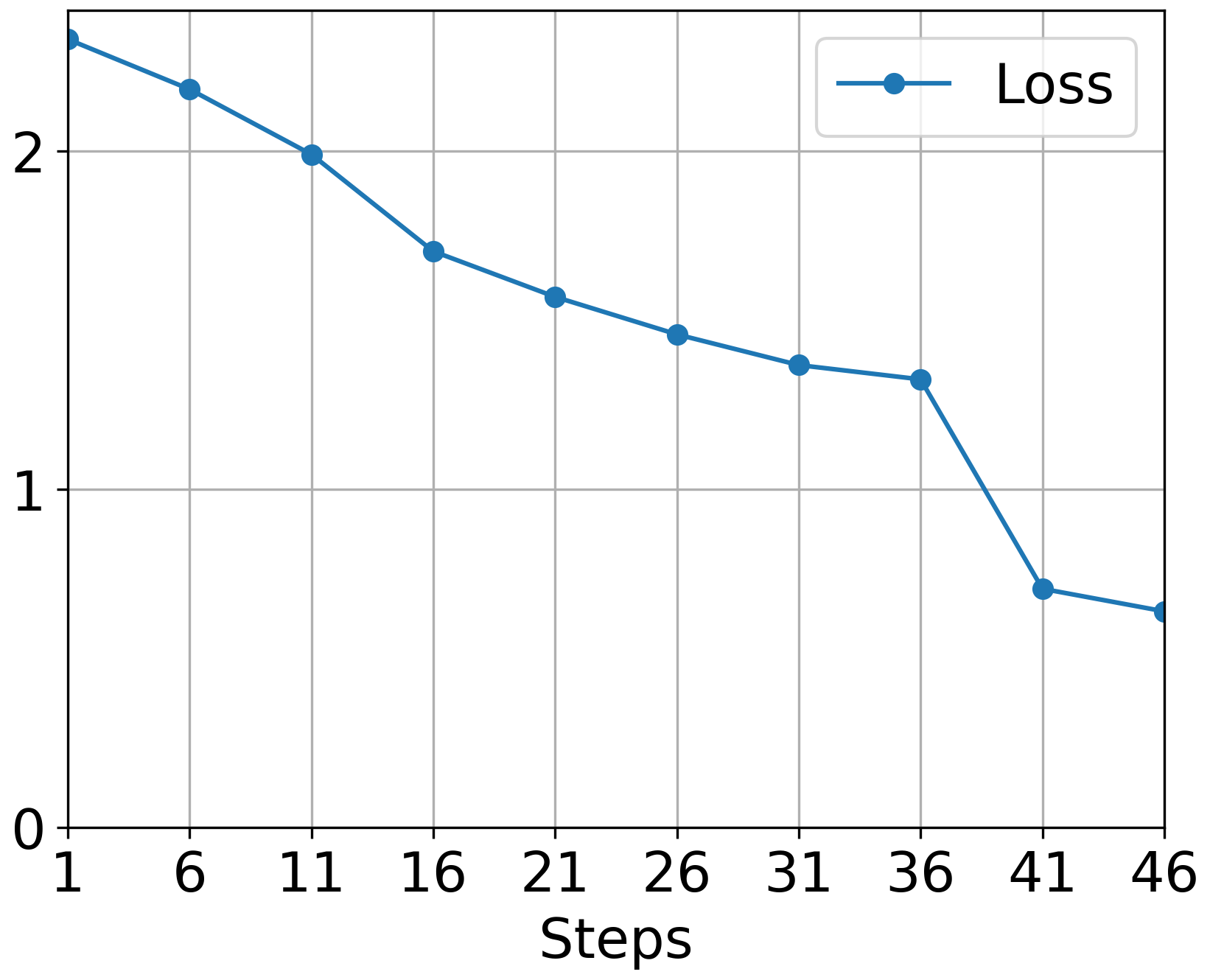}
        \caption{\small GD: Loss}
    \end{subfigure}
    \begin{subfigure}[t]{0.22\linewidth}
        \includegraphics[width=\linewidth]{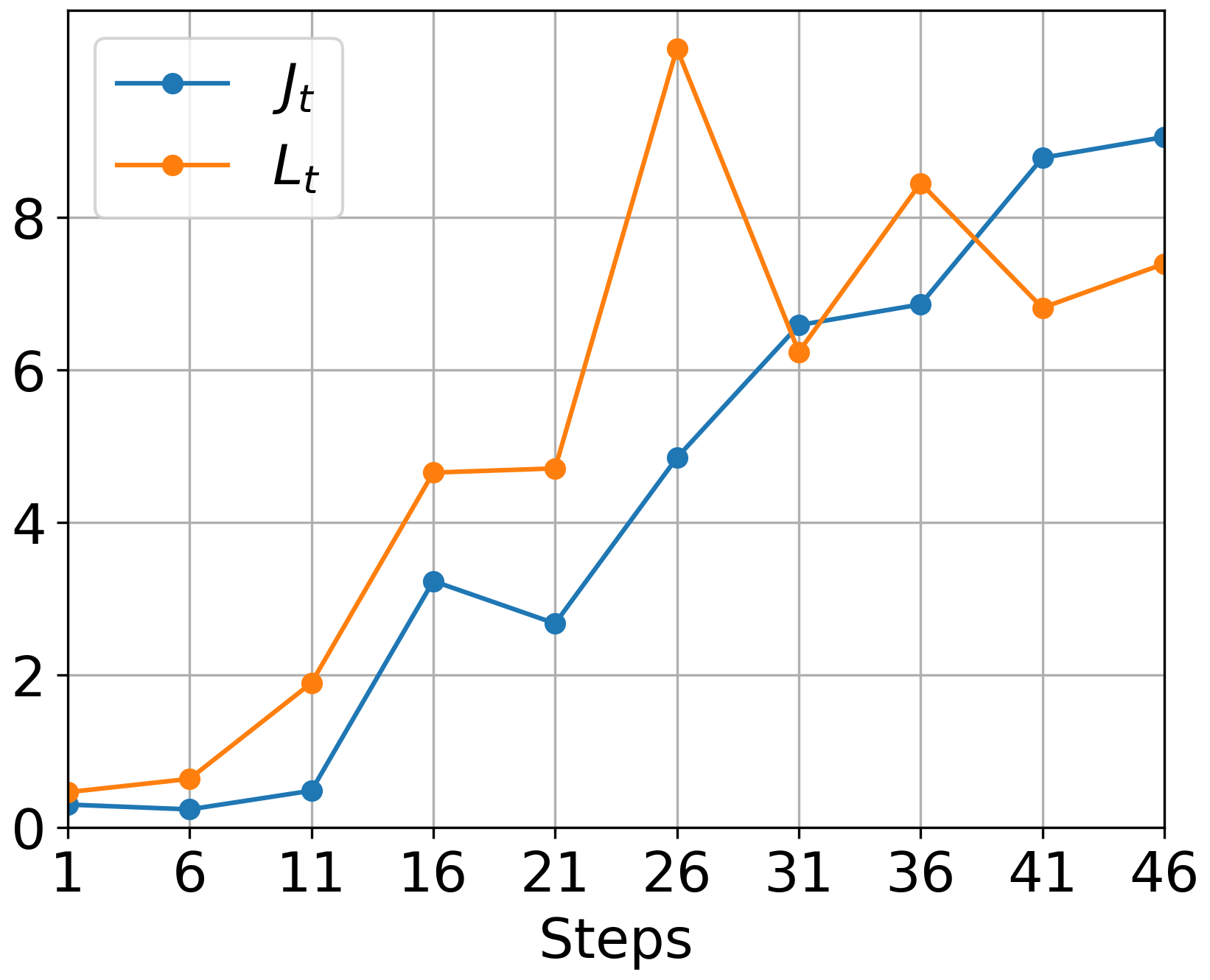}
        \caption{\small GD: $J_t$ and $L_t$}
    \end{subfigure}
    \begin{subfigure}[t]{0.22\linewidth}
        \includegraphics[width=\linewidth]{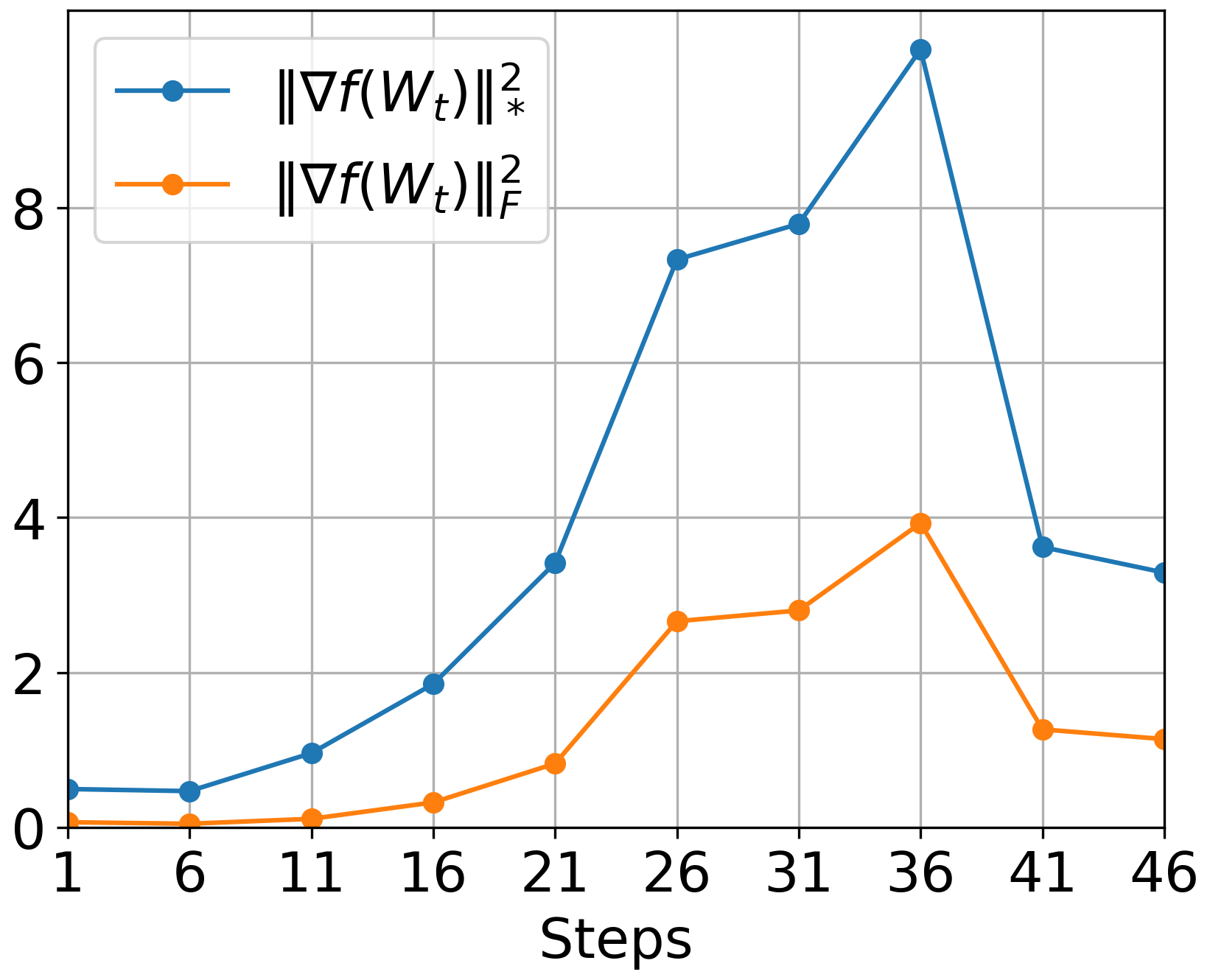}
        \caption{\small GD: $\|\nabla f(W_t)\|_*^2$ and $\|\nabla f(W_t)\|_\F^2$}
    \end{subfigure}
    \begin{subfigure}[t]{0.22\linewidth}
        \includegraphics[width=\linewidth]{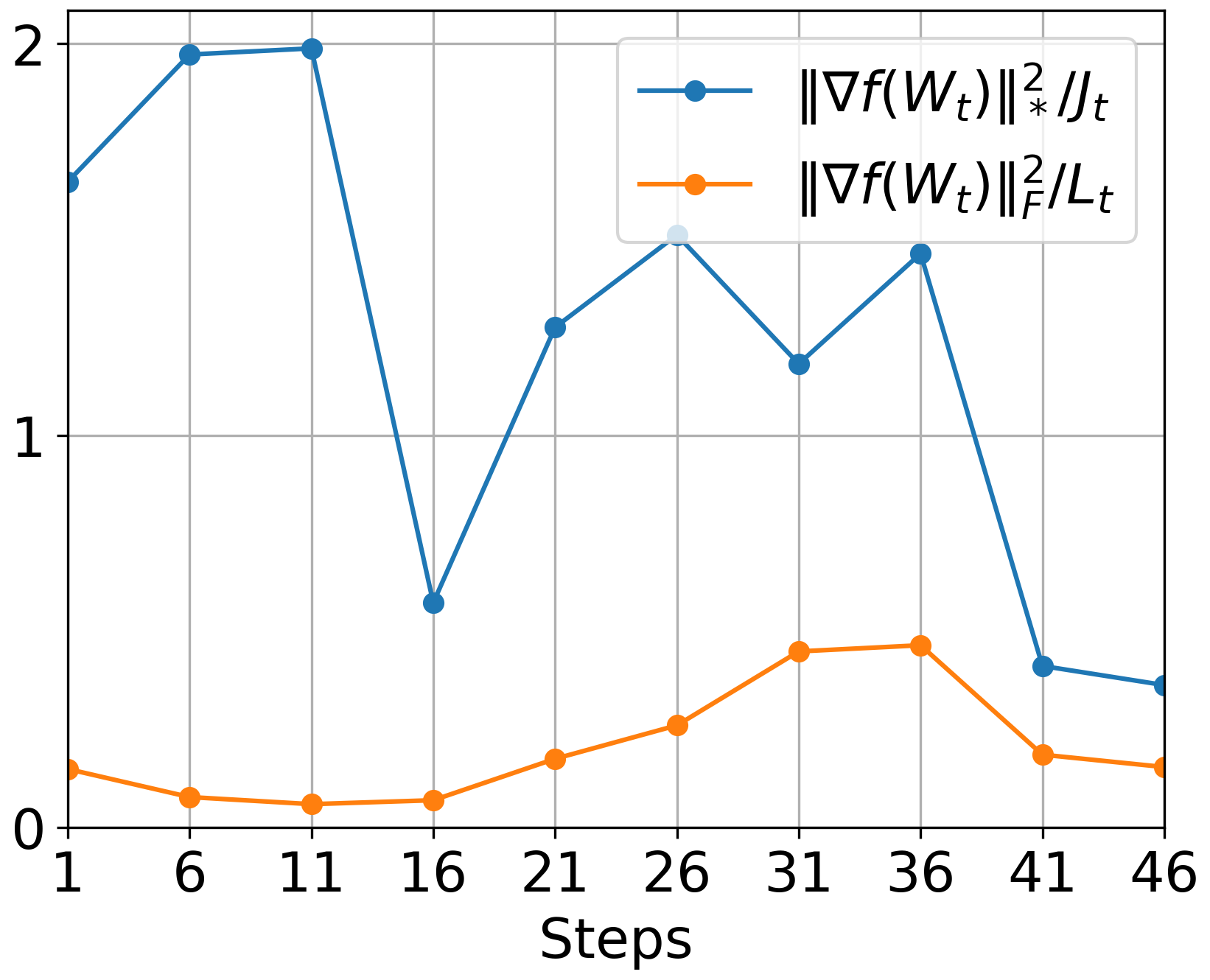}
        \caption{\small GD: $\|\nabla f(W_t)\|_*^2/J_t$ and $\|\nabla f(W_t)\|_\F^2/L_t$}
    \end{subfigure}
    
    \vspace{0.5em} 
    \begin{subfigure}[t]{0.22\linewidth}
        \includegraphics[width=\linewidth]{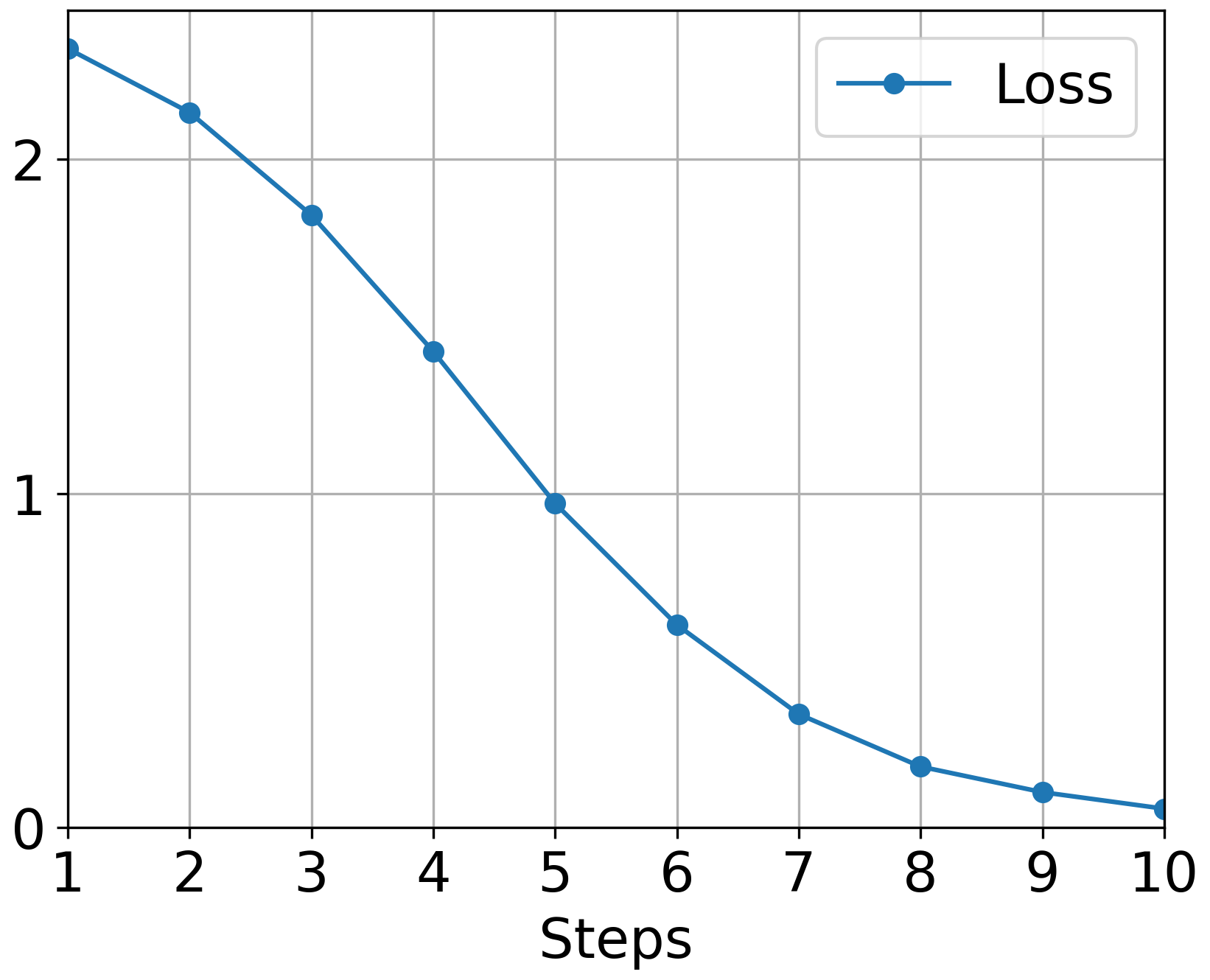}
        \caption{\small Muon: Loss}
    \end{subfigure}
    \begin{subfigure}[t]{0.22\linewidth}
        \includegraphics[width=\linewidth]{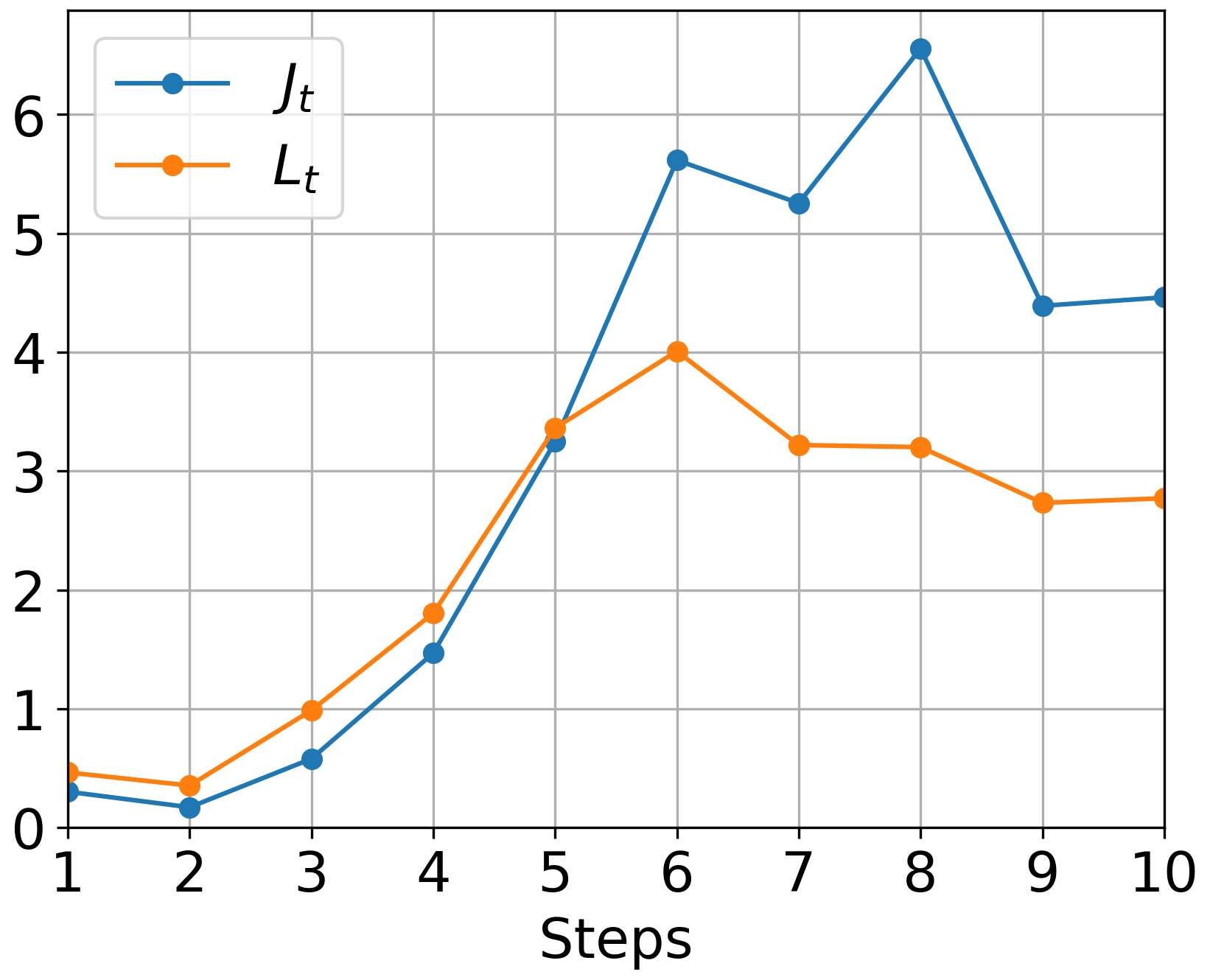}
        \caption{\small Muon: $J_t$ and $L_t$}
    \end{subfigure}
    \begin{subfigure}[t]{0.22\linewidth}
        \includegraphics[width=\linewidth]{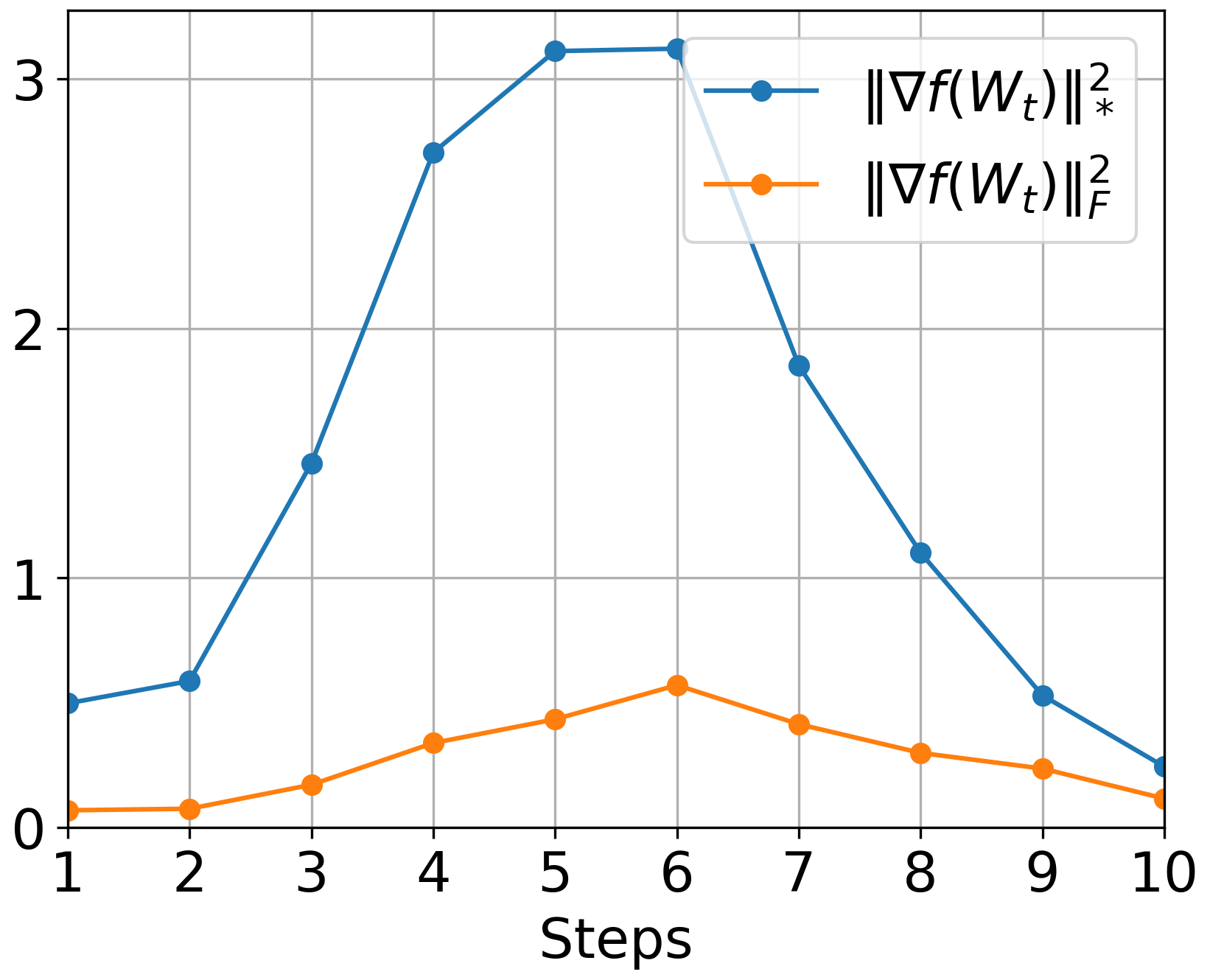}
        \caption{\small Muon: $\|\nabla f(W_t)\|_*^2$ and $\|\nabla f(W_t)\|_\F^2$}
    \end{subfigure}
    \begin{subfigure}[t]{0.22\linewidth}
        \includegraphics[width=\linewidth]{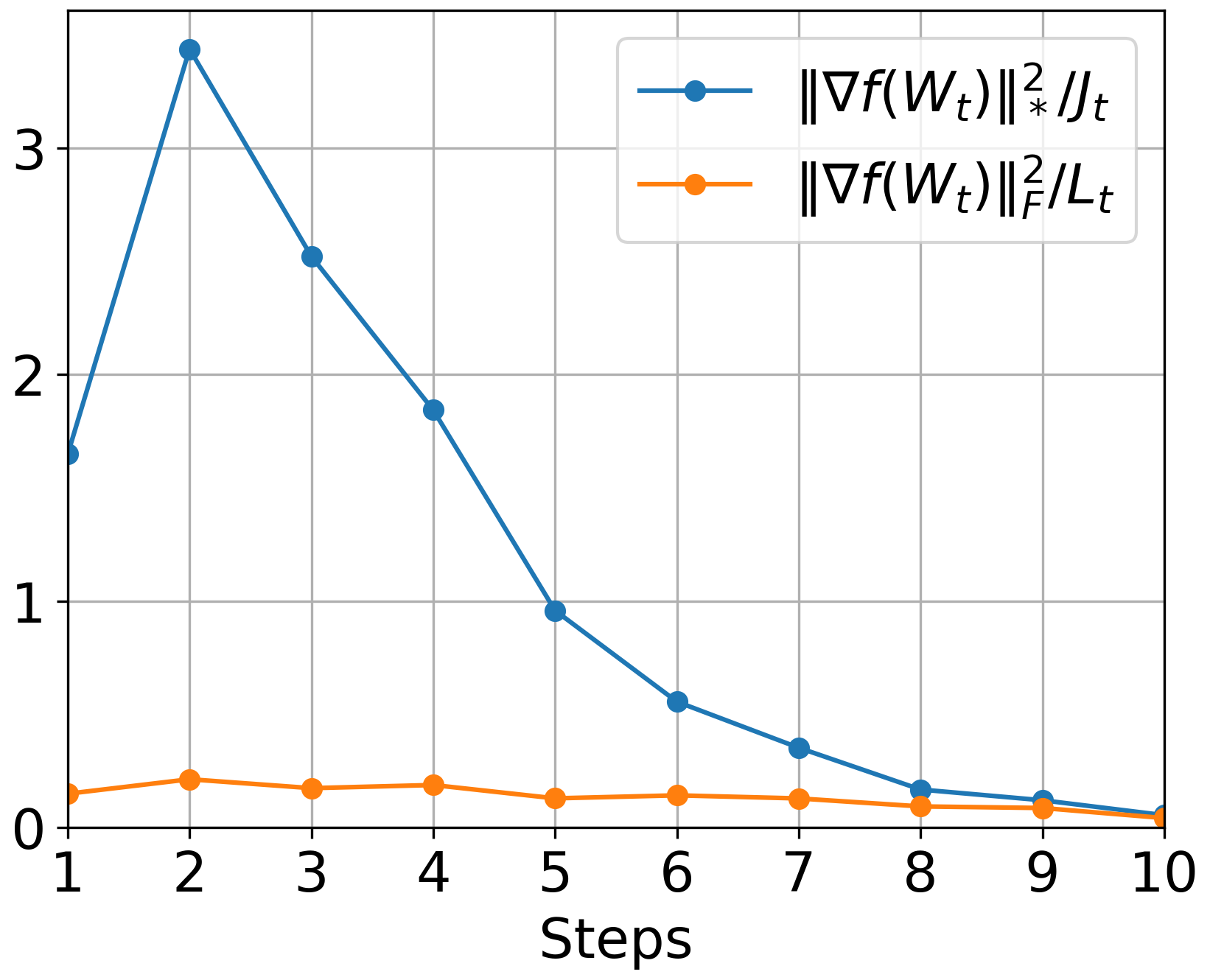}
        \caption{\small Muon: $\|\nabla f(W_t)\|_*^2/J_t$ and $\|\nabla f(W_t)\|_\F^2/L_t$}
    \end{subfigure}

    \caption{\small Comparison of $J_t$, $L_t$, $\|\nabla f(W_t)\|_*^2$ and $\|\nabla f(W_t)\|_\F^2$ over the training process of GD and Muon (\Cref{alg: muon_deterministic}) on a MLP model with three matrix parameters $W^1\in\R^{128\times784}, W^2\in\R^{64\times128}, W^3\in\R^{10\times64}$. We show the gradients and Hessians with respect to $W^2$ in this Figure. Detailed settings can be found in Appendix \ref{app: exp}.}
    \label{fig:J}
\end{figure}

\begin{figure}
    \centering
    \begin{subfigure}[t]{0.24\linewidth}
        \includegraphics[width=\linewidth]{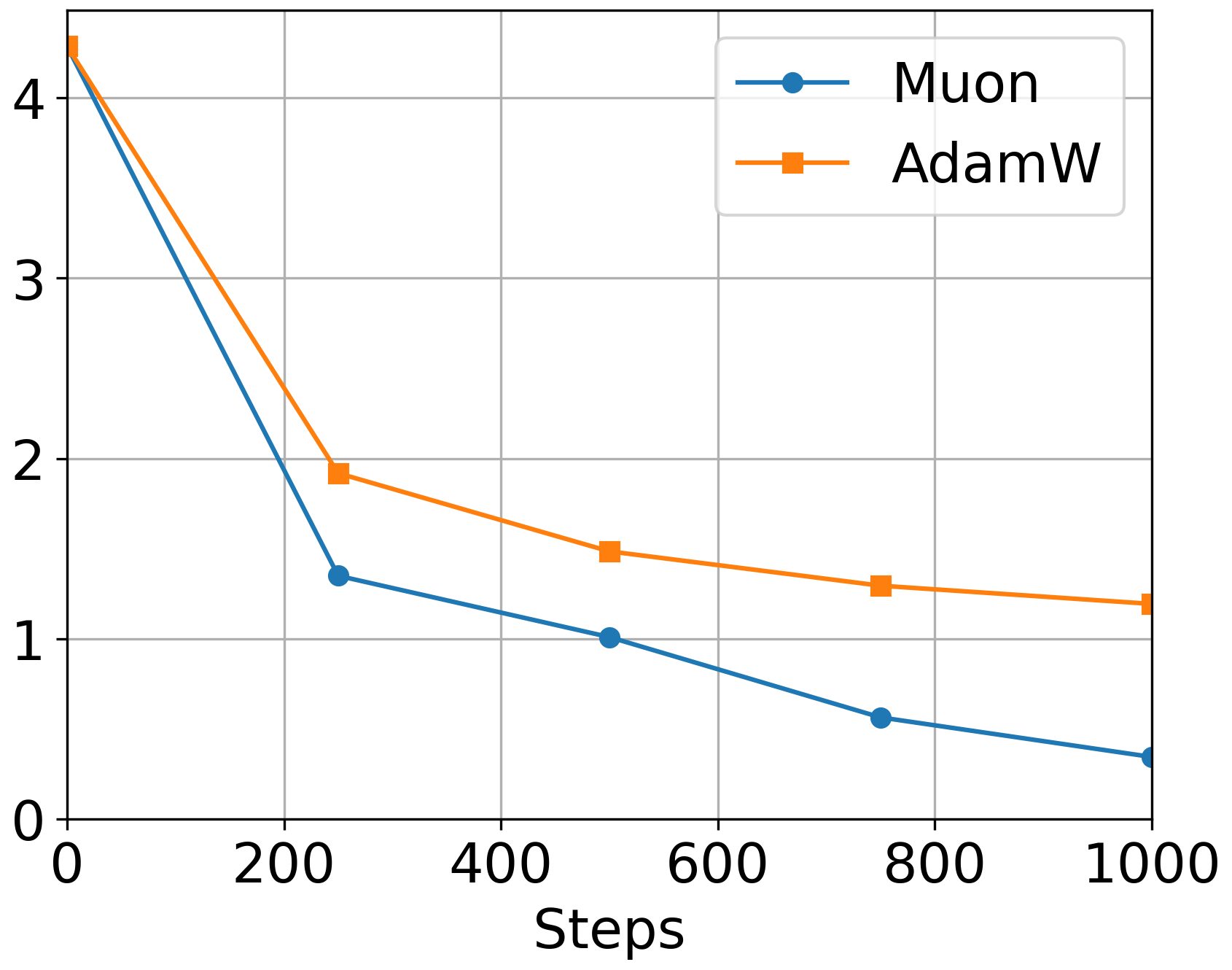}
        \caption{\small Loss}
    \end{subfigure}
    \begin{subfigure}[t]{0.24\linewidth}
        \includegraphics[width=\linewidth]{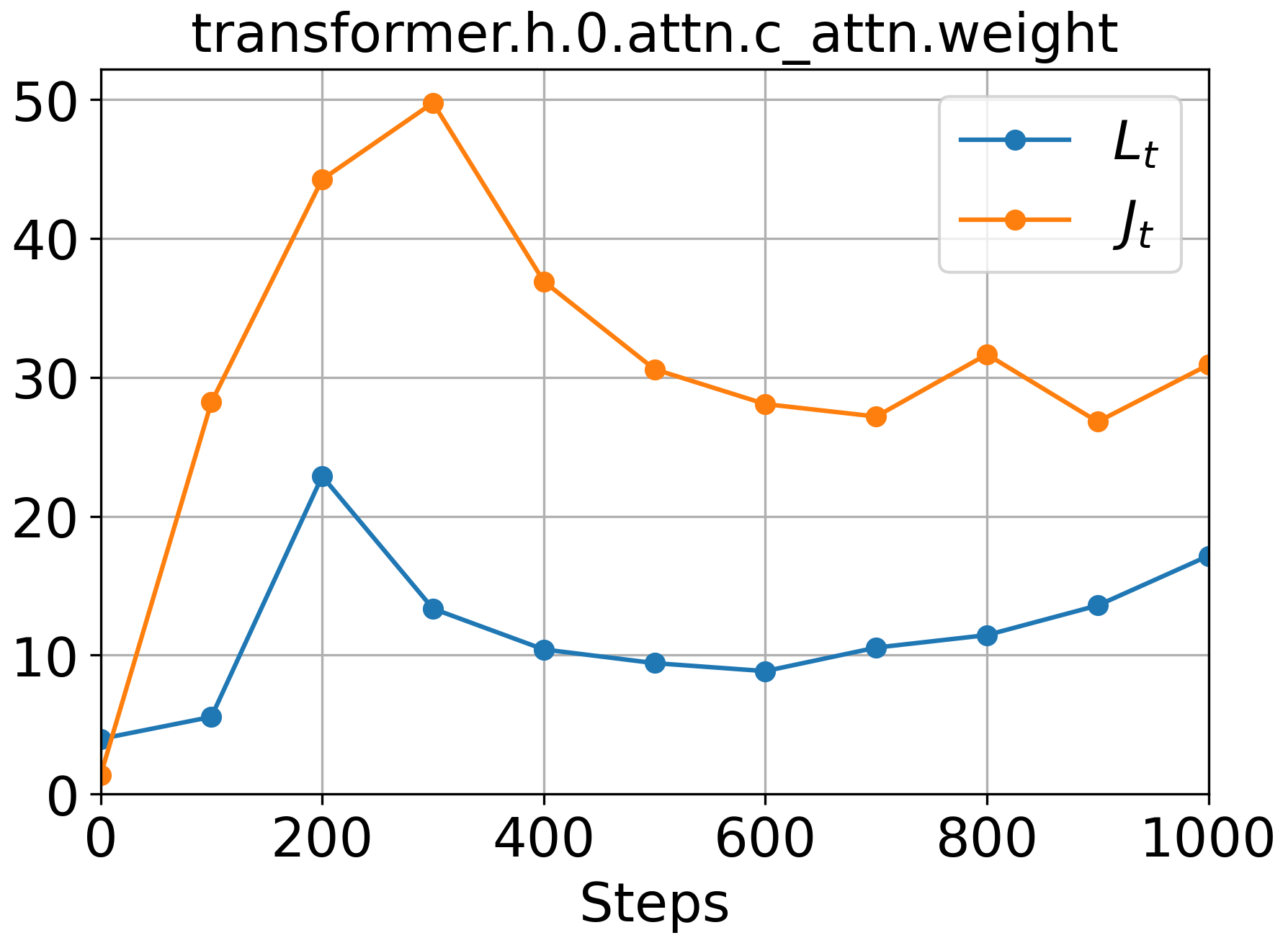}
        \caption{\small AdamW: $J_t$ and $L_t$}
    \end{subfigure}
    \begin{subfigure}[t]{0.24\linewidth}
        \includegraphics[width=\linewidth]{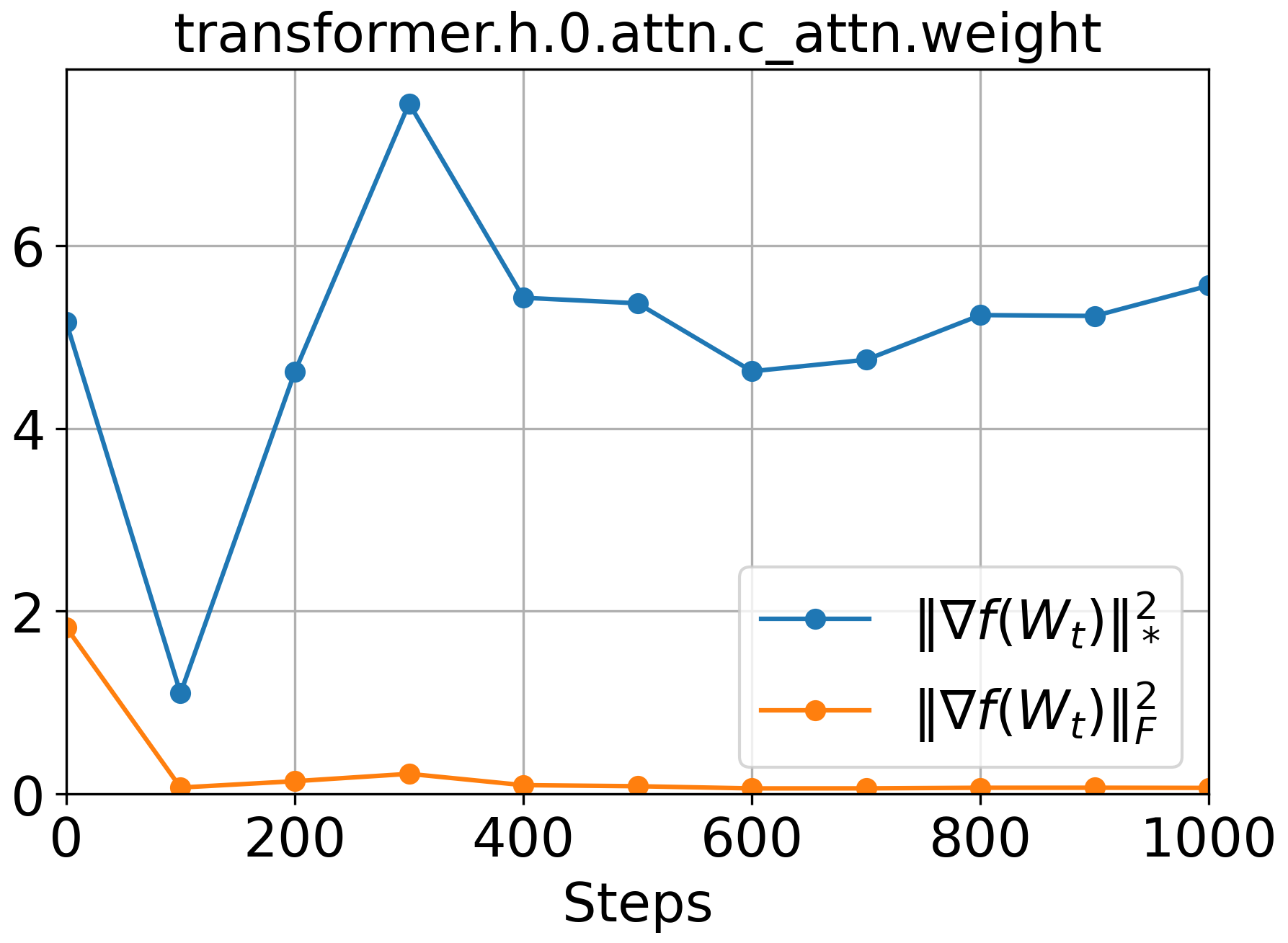}
        \caption{\small AdamW: $\|\nabla f(W_t)\|_*^2$ and $\|\nabla f(W_t)\|_\F^2$}
    \end{subfigure}
    \begin{subfigure}[t]{0.24\linewidth}
        \includegraphics[width=\linewidth]{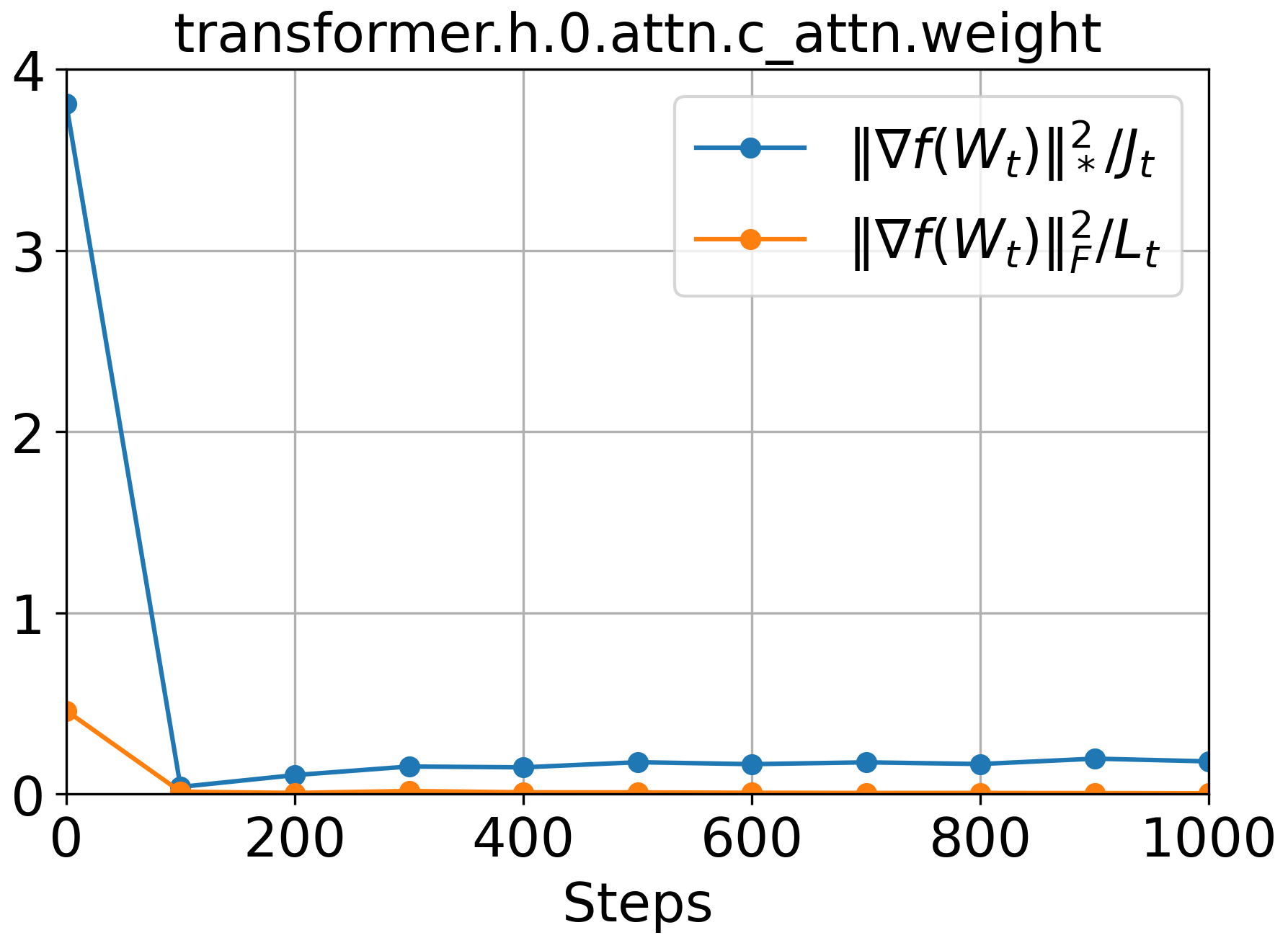}
        \caption{\small AdamW: $\|\nabla f(W_t)\|_*^2/J_t$ and $\|\nabla f(W_t)\|_\F^2/L_t$}
    \end{subfigure}
    
    \vspace{0.5em} 
    \begin{subfigure}[t]{0.24\linewidth}
        \includegraphics[width=\linewidth]{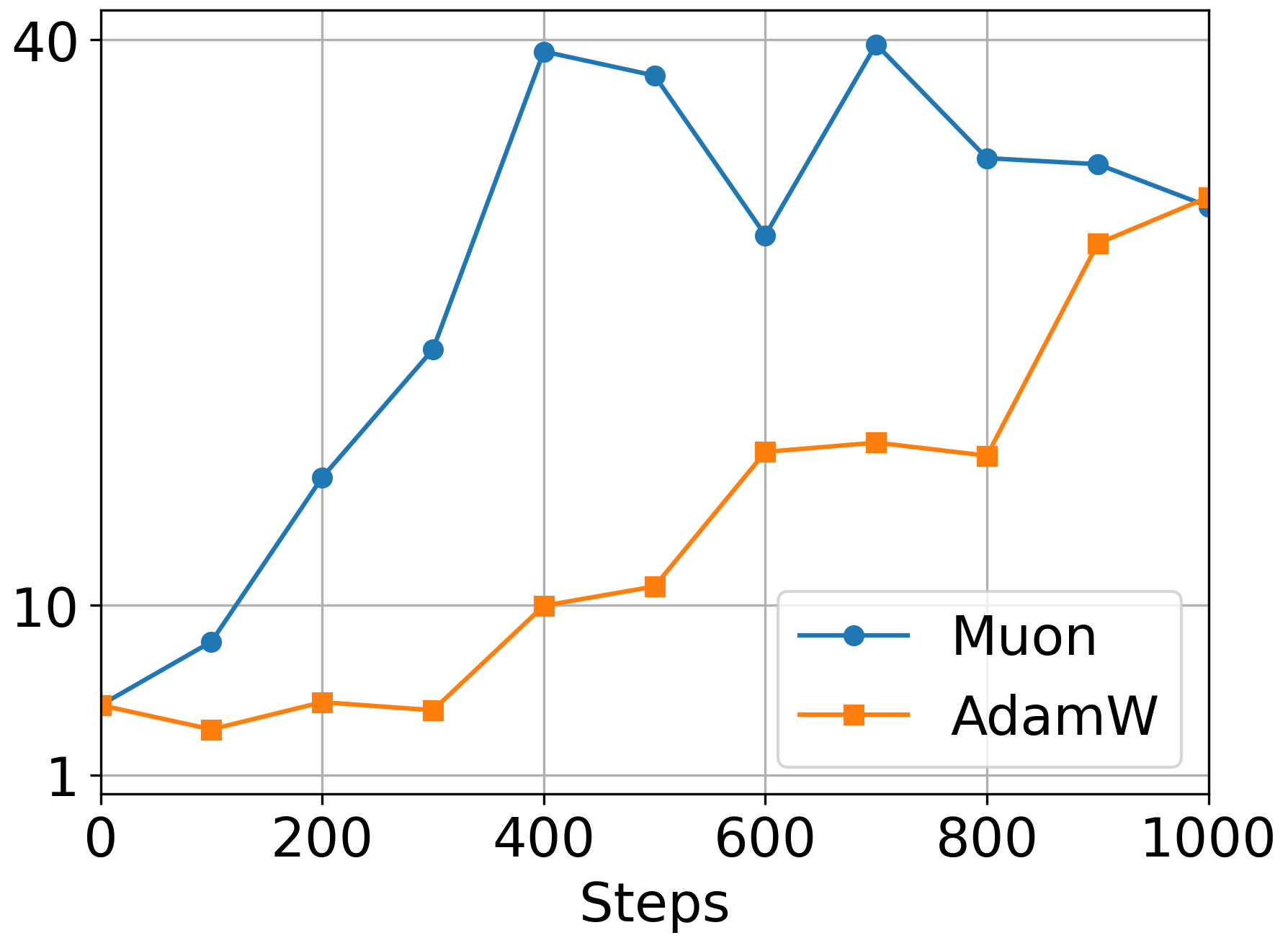}
        \caption{\small Average ratio}
    \end{subfigure}
    \begin{subfigure}[t]{0.24\linewidth}
        \includegraphics[width=\linewidth]{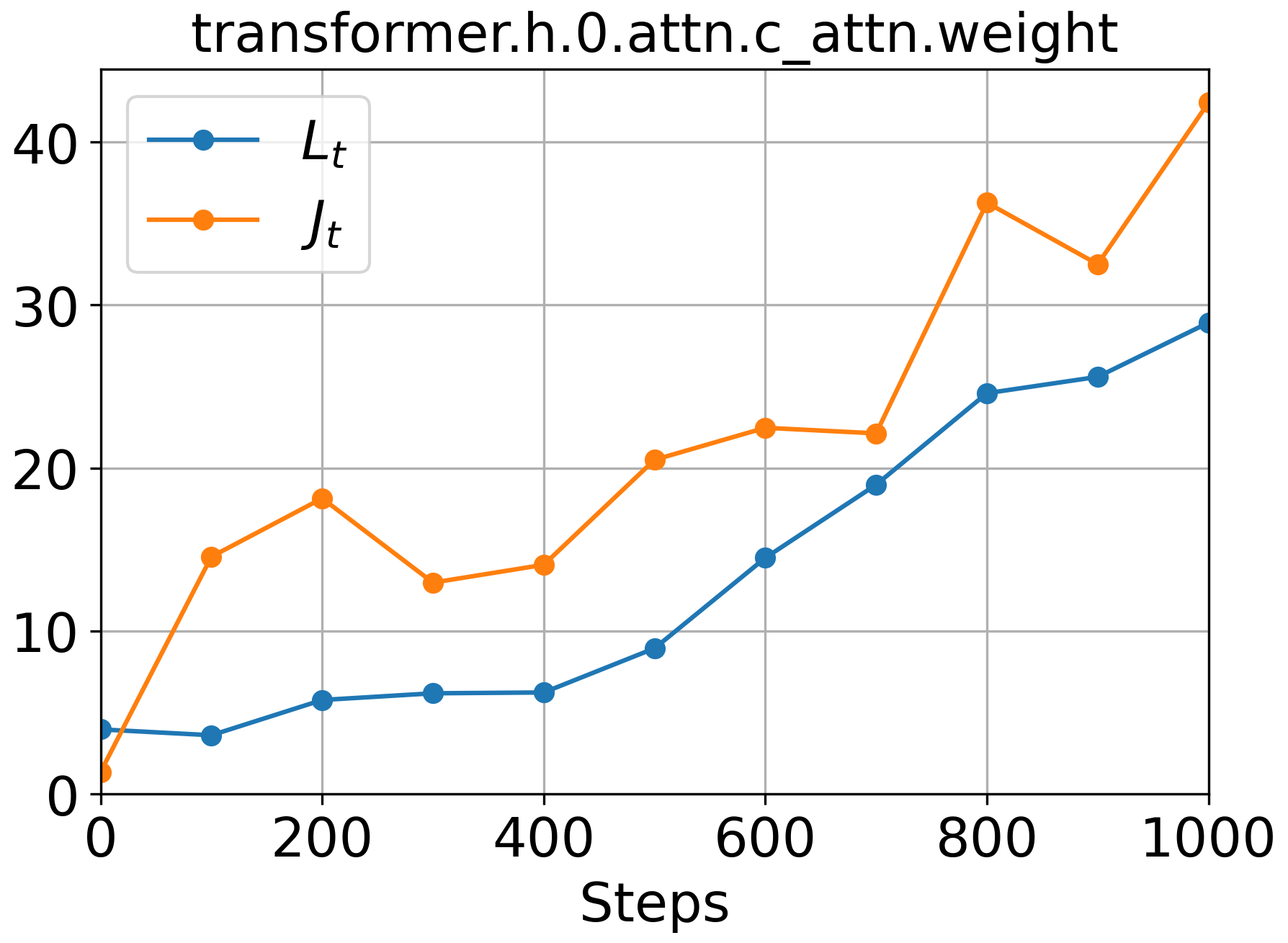}
        \caption{\small Muon: $J_t$ and $L_t$}
    \end{subfigure}
    \begin{subfigure}[t]{0.24\linewidth}
        \includegraphics[width=\linewidth]{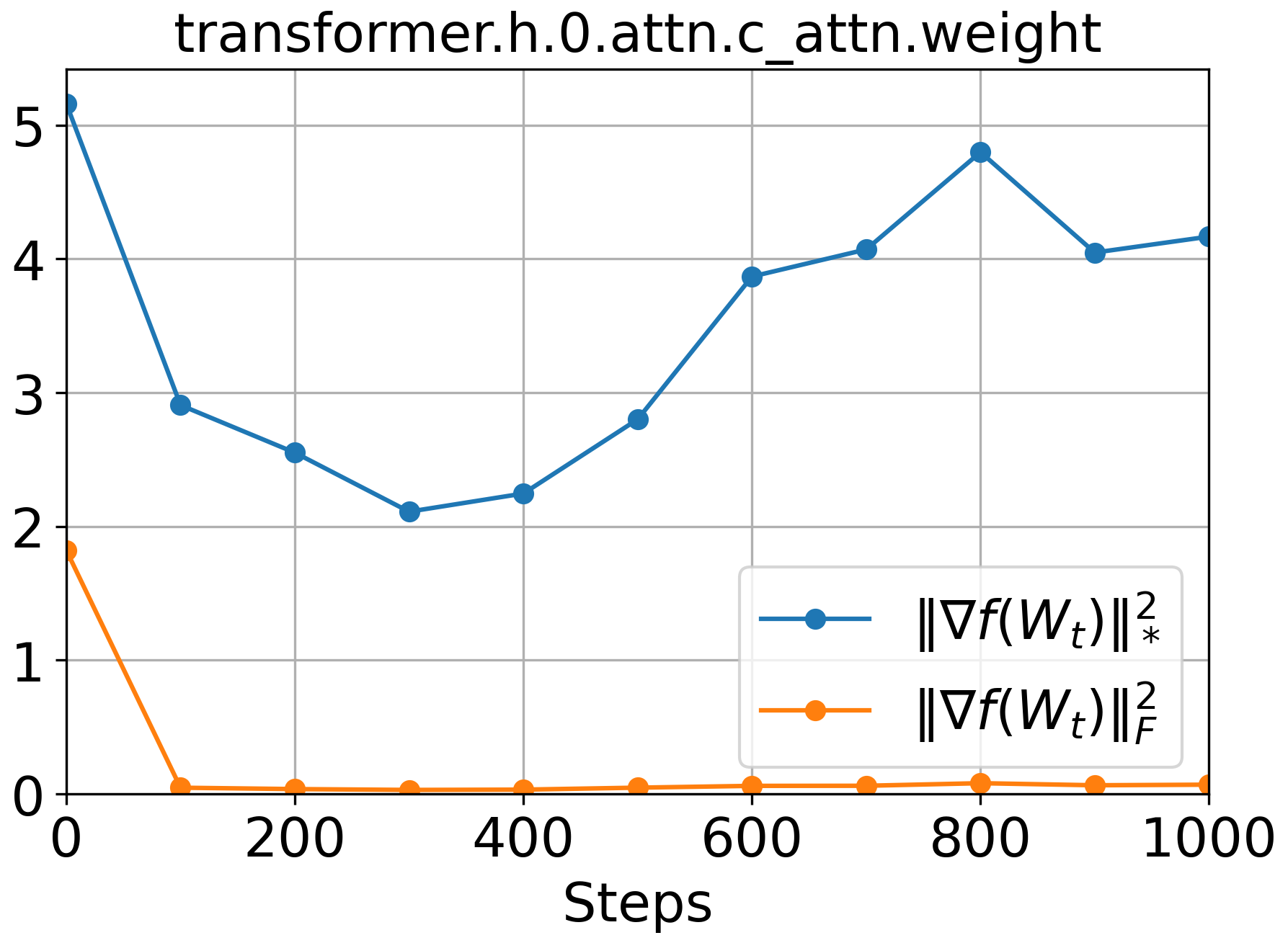}
        \caption{\small Muon: $\|\nabla f(W_t)\|_*^2$ and $\|\nabla f(W_t)\|_\F^2$}
    \end{subfigure}
    \begin{subfigure}[t]{0.24\linewidth}
        \includegraphics[width=\linewidth]{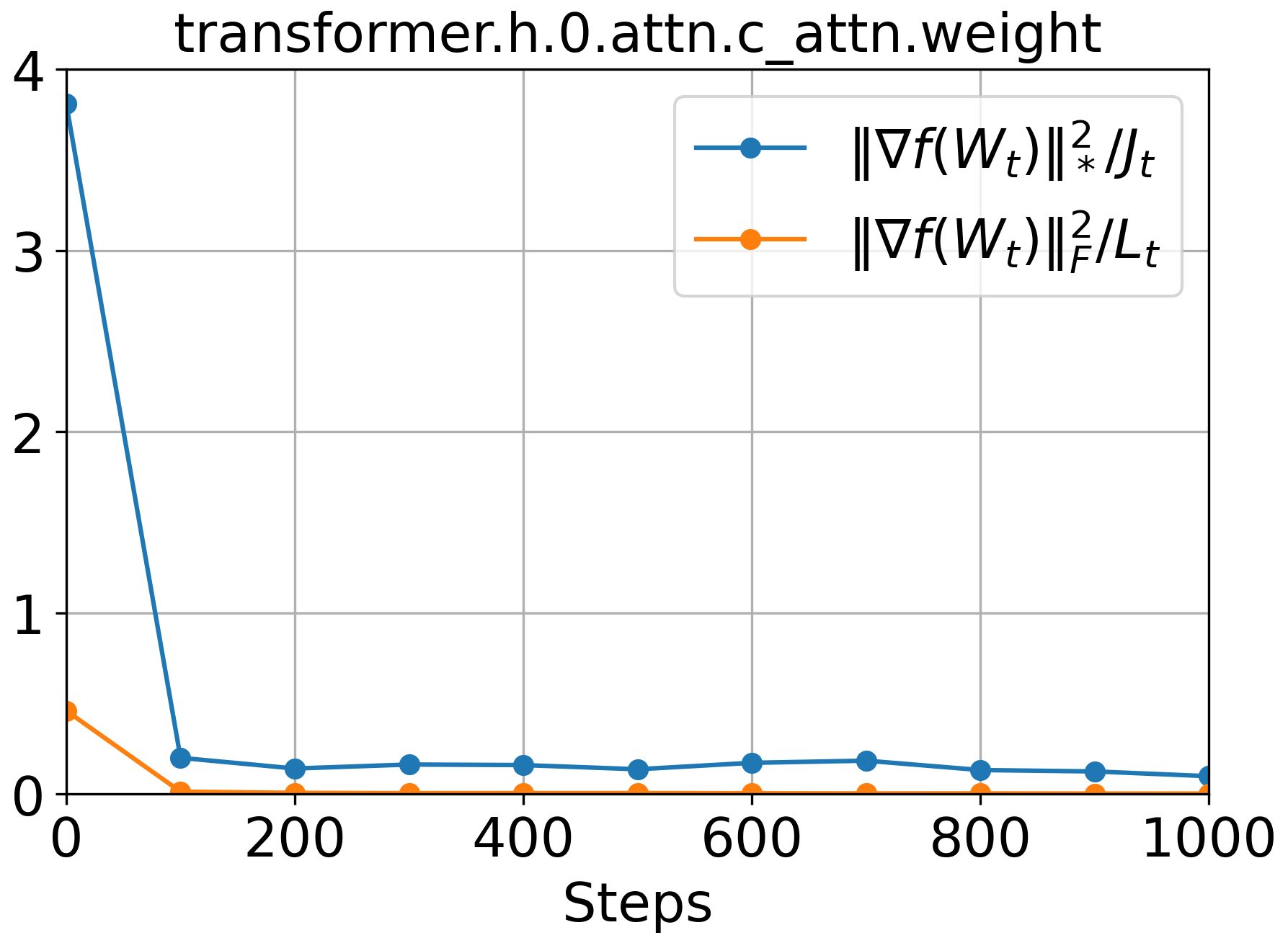}
        \caption{\small Muon: $\|\nabla f(W_t)\|_*^2/J_t$ and $\|\nabla f(W_t)\|_\F^2/L_t$}
    \end{subfigure}

    \caption{\small Comparison of $J_t$, $L_t$, $\|\nabla f(W_t)\|_*^2$, $\|\nabla f(W_t)\|_\F^2$ and the average ratio over the training process of a six-layer GPT-2 style model with AdamW and Muon. (e): The average ratio ($\|\nabla f(W_t)\|_*^2L_t/(\|\nabla f(W_t)\|_\F^2J_t)$) of all matrix parameters optimized by Muon. In fact, not only does the average ratio of all matrix parameters satisfy \Cref{eq: condition}, but every matrix parameter optimized by Muon also satisfies \Cref{eq: condition} at the steps tested in our experiments. Due to space limitations, in (b, c, d, f, g, h) we present records for the parameter “transformer.h.0.attn.c\_attn.weight,” which is the attention matrix of the first attention layer. Detailed settings can be found in Appendix \ref{app: exp}.}
    \label{fig:nanoGPT J}
\end{figure}
We introduce some examples to illustrate the relationship between $L$ and $J$. Similar to the last subsection, we consider the same multi-class classification dataset $\{(x_i, y_i)\}^B_{i=1}$ with a linear model $g(W;x)=Wx\in \R^c$, where $W\in \R^{c\times d}$. We have discussed its MSE loss \eqref{eq: mse loss} in the last subsection, whose Hessian is given by $\nabla^2 f_v(\vec(W))=\frac{1}{B} I_c\otimes XX^\top$, which has a Kronecker product structure. Thus, we have $J (\text{or } \tilde{J}, J^*)\leq \frac{1}{B} \|I_c\|_\op\|XX^\top\|_*=\frac{1}{B} \|X\|_\F^2=L_*$ and the conclusion is similar to the last subsection.
Here, we consider a more general case; for example, when the loss function with respect to $W$ can be represented as  $f(W)=\frac{1}{B}\sum_{i=1}^{B} l(Wh(x_i))$, where $h:\R^d\to \R^n$, $W\in\R^{m\times n}$, $l: \R^m\to \R$. Then, we have
\begin{align}
    \nabla^2 f_v(\vec(W))=\frac{1}{B}\sum_{i=1}^B\nabla^2 l(Wh(x_i))\otimes (h(x_i)h(x_i)^\top).
\end{align}
Note that when $B=1$, we have $\nabla^2 f_v(\vec(W_t))=\nabla^2 l(W_th(x_1)) \otimes (h(x_1)h(x_1)^\top)$, and this Hessian can be represented as a Kronecker product. Thus, we have $L\geq \max_t L_t=\|\nabla^2 l(W_th(x_1))\|_\op\|h(x_1)h(x_1)^\top\|_\op=\max_t\|\nabla^2 l(W_th(x_1))\|_\op\|h(x_1)\|^2_2$ and $J (\text{or } \tilde{J}, J^*)\leq \frac{1}{T}\sum_{t=1}^T\|\nabla^2 l(W_th(x_1))\|_\op\|h(x_1)h(x_1)^\top\|_*=\frac{1}{T}\sum_{t=1}^T\|\nabla^2 l(W_th(x_1))\|_\op\|h(x_1)\|_2^2$. Thus, in single sample case, $J (\text{or } \tilde{J}, J^*)\leq L$, demonstrating the significant advantage of Muon compared to GD.

For multi sample case $B>1$ and more general nonconvex loss functions, theoretical analyses of $L$ and $J$ are difficult. Here, we conduct experiments to calculate the $L_t$ and $J_t$ alongside the optimization trajectory.
Moreover, various studies \citep{gur2018gradient, zhao2021zero, zhao2024galore, cosson2023low} have shown that the gradients during neural network optimization are also typically low rank. Thus, when we convert the stationary point of Frobenius norm to nuclear norm, the additional coefficient can be much smaller than $\sqrt{r}$. Therefore, to compare the convergence rates of Muon and GD more precisely and more fairly, one should examine the ratios between the nuclear norm and Frobenius norm of their gradients as well as $J_t$ and $L_t$.  For example, we can examine the following inequality:
\begin{align}\label{eq: condition}
    \frac{J_t}{L_t}\leq \frac{\|\nabla f(W_t)\|_*^2}{\|\nabla f(W_t)\|_\F^2},
\end{align}
If this inequality consistently holds alongside the optimization trajectory, Muon is expected to converge faster than GD.

We investigate and validate \eqref{eq: condition} during the optimization of a three-layer Multi-Layer Perceptron (MLP) trained on the MNIST dataset, as well as a six-layer GPT-2 style model trained on the Shakespeare character-level corpus. We optimize the MLP model using GD and Muon (\Cref{alg: muon_deterministic}), and the GPT-2 model using AdamW and Muon. During training, we record the loss, $J_t$, $L_t$, $\|\nabla f(W_t)\|_*^2$, $\|\nabla f(W_t)\|_\F^2$ and calculate the ratio \eqref{eq: condition}. The results are shown in Figures~\ref{fig:J} and \ref{fig:nanoGPT J}, with detailed settings provided in Appendix~\ref{app: exp}.
In both experiments (\Cref{fig:J} and \Cref{fig:nanoGPT J}), we observe that Muon's ratio ($\|\nabla f(W_t)\|_*^2/J_t$) are consistently larger than GD's ($\|\nabla f(W_t)\|_\F^2/L_t$), demonstrating the advantage of Muon.

\section{Conclusions and future directions}\label{section: conclusion}
In this work, we have presented a comprehensive convergence rate analysis of Muon under various assumptions. We provided detailed comparisons between Muon and GD, and established the conditions under which Muon can outperform GD with different assumptions. We showed that Muon can exploit the low-rank structure of Hessian matrices, a property commonly observed during neural network training. Our experiments on neural networks and quadratic regressions supported and corroborated the theoretical findings. Promising directions for future research include investigating the structure of Hessian matrices from a theoretical perspective and leveraging these structural properties to develop new or more effective optimization methods.

\section*{Acknowledgments}
The work of Wei Shen and Cong Shen were supported in part by the U.S. National Science Foundation under Grants ECCS-2143559, CNS-2313110, and ECCS-2332060. The work of Jiawei Zhang was supported by the Office of the Vice Chancellor for Research and Graduate Education at the University of Wisconsin–Madison with funding from the Wisconsin Alumni Research Foundation.

\bibliography{main}
\bibliographystyle{tmlr}

\newpage

\appendix
\section*{Appendix}
The Appendix is organized as follows. In \Cref{app: lemmas}, we introduce some lemmas that will be utilized in the subsequent proofs, and we also give the proofs of \Cref{lemma: smooth} and \Cref{lemma: smooth hessian equivalence}. In \Cref{app: nonconvex}, we present the proofs of theorems in the nonconvex setting. In \Cref{app: convex}, we present the proofs of theorems in the star convex setting. In \Cref{app: low-rank}, we present additional examples that illustrate the low-rank structure of real-world data matrices. In \Cref{app: exp}, we provide the detailed settings of our experiments. 

\section{Lemmas}\label{app: lemmas}

\begin{lemma}[Lemma 8 in \cite{an2025asgo}]\label{lemma: norm asgo}
For a symmetric positive definite matrix $\Lambda\in \R^{n\times n}$ and matrix $A\in \R^{m\times n}$, it holds that
\begin{align*}
    \|A\|_*\leq \sqrt{\|\Lambda\|_*}\|A\|_{\Lambda^{-1}}.
\end{align*}
    
\end{lemma}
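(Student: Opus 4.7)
The plan is to prove the inequality by combining the dual (variational) characterization of the nuclear norm with a weighted Cauchy--Schwarz inequality for the trace inner product, followed by a projection bound.

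First, I would introduce the polar factor of $A$. Taking the thin SVD $A=U\Sigma V^\top$ with $U\in\R^{m\times r}$, $V\in\R^{n\times r}$, $\Sigma\in\R^{r\times r}$, where $r=\text{rank}(A)$ and $U^\top U=V^\top V=I_r$, set $X=UV^\top$. Then $\|X\|_{\op}=1$ and a direct computation gives $\trace{X^\top A}=\trace{\Sigma}=\|A\|_*$.

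Second, I would split the trace representation using $\Lambda^{1/2}\Lambda^{-1/2}=I$:
\begin{align*}
\|A\|_* = \trace{X^\top A} = \trace{\pth{X\Lambda^{1/2}}^\top \pth{A\Lambda^{-1/2}}}.
\end{align*}
Applying Cauchy--Schwarz with respect to the Frobenius (Hilbert--Schmidt) inner product yields
\begin{align*}
\|A\|_*^2 \leq \trace{X\Lambda X^\top}\cdot \trace{A\Lambda^{-1}A^\top} = \trace{X\Lambda X^\top}\cdot \|A\|_{\Lambda^{-1}}^2.
\end{align*}

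Third, I would bound the $\Lambda$-weighted factor. Using $X=UV^\top$ and the cyclic property of the trace together with $U^\top U=I_r$,
\begin{align*}
\trace{X\Lambda X^\top} = \trace{UV^\top \Lambda V U^\top} = \trace{V^\top \Lambda V} = \trace{\Lambda\, VV^\top}.
\end{align*}
Since $VV^\top$ is an orthogonal projection (so $0\preceq VV^\top\preceq I_n$) and $\Lambda\succeq 0$, we get $\trace{\Lambda VV^\top}\leq \trace{\Lambda}=\|\Lambda\|_*$, where the last equality uses that $\Lambda$ is positive definite so its singular values coincide with its eigenvalues. Combining with the previous display yields $\|A\|_*^2\leq \|\Lambda\|_*\,\|A\|_{\Lambda^{-1}}^2$, and taking square roots finishes the proof.

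I do not expect a serious obstacle here: the only decision is the right splitting inside the trace, and once the polar factor is chosen the remaining estimates are standard linear algebra. The step that deserves a moment of care is the projection inequality $\trace{\Lambda VV^\top}\leq \trace{\Lambda}$, which relies on both the PSD structure of $\Lambda$ and the fact that $V$ has orthonormal columns.
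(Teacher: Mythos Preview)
Your proof is correct: the polar-factor representation $\|A\|_*=\trace{X^\top A}$ with $X=UV^\top$, the Cauchy--Schwarz splitting via $\Lambda^{1/2}\Lambda^{-1/2}$, and the projection bound $\trace{\Lambda VV^\top}\leq\trace{\Lambda}$ all go through exactly as you describe. Note that the paper itself does not prove this lemma---it is quoted directly as Lemma~8 from \cite{an2025asgo}---so there is no in-paper argument to compare against; your self-contained proof is a welcome addition.
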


\begin{lemma}\label{lemma: norm}
For a symmetric positive definite matrix $\Lambda\in \R^{n\times n}$ and matrix $A\in \R^{m\times n}$, it holds that
\begin{align}\label{eq: norm 1}
    &\|A\|_\F\leq \sqrt{\opnorm{\Lambda}}\|A\|_{\Lambda^{-1}},\\ \label{eq: norm 2}
    &\|A\|_\Lambda \leq \sqrt{\opnorm{\Lambda}} \|A\|_\F,\\ \label{eq: norm 3}
    &\|A\|_\Lambda \leq \sqrt{\|\Lambda\|_*} \opnorm{A}.
\end{align}
\end{lemma}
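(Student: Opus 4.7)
The plan is to prove all three inequalities via the operator monotonicity of the trace on positive semidefinite matrices, combined with the Loewner bound $\Lambda\preceq\|\Lambda\|_{\op}I_n$ and (for the third) the analogous bound $A^\top A\preceq\|A\|_{\op}^2 I_n$. Throughout I will use that $\Lambda$ being symmetric positive definite admits a square root $\Lambda^{1/2}$, and that for PSD matrices $P,Q$ with $P\preceq Q$ one has $\trace{MPM^\top}\le \trace{MQM^\top}$ for any $M$ (since $M(Q-P)M^\top$ is PSD).

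\textbf{Inequality \eqref{eq: norm 1}.} I would perform the substitution $B=A\Lambda^{-1/2}$, so that
\begin{align*}
\|A\|_\F^2=\trace{AA^\top}=\trace{B\Lambda B^\top},\qquad \|A\|_{\Lambda^{-1}}^2=\trace{A\Lambda^{-1}A^\top}=\trace{BB^\top}.
\end{align*}
Then from $\Lambda\preceq\|\Lambda\|_{\op}I_n$ I conclude $B\Lambda B^\top\preceq\|\Lambda\|_{\op}BB^\top$ in the Loewner order, and taking traces yields $\|A\|_\F^2\le\|\Lambda\|_{\op}\|A\|_{\Lambda^{-1}}^2$.

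\textbf{Inequality \eqref{eq: norm 2}.} Directly from $\Lambda\preceq\|\Lambda\|_{\op}I_n$ I would sandwich by $A$ and $A^\top$ to get $A\Lambda A^\top\preceq\|\Lambda\|_{\op}AA^\top$, whose trace gives $\|A\|_\Lambda^2\le\|\Lambda\|_{\op}\|A\|_\F^2$.

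\textbf{Inequality \eqref{eq: norm 3}.} Here I would use the cyclic property $\|A\|_\Lambda^2=\trace{A\Lambda A^\top}=\trace{\Lambda A^\top A}$. Since $A^\top A\preceq\|A\|_{\op}^2 I_n$, writing $\trace{\Lambda A^\top A}=\trace{\Lambda^{1/2}A^\top A\Lambda^{1/2}}$ and applying the sandwiching inequality gives
\begin{align*}
\trace{\Lambda^{1/2}A^\top A\Lambda^{1/2}}\le\|A\|_{\op}^2\trace{\Lambda^{1/2}\Lambda^{1/2}}=\|A\|_{\op}^2\trace{\Lambda}=\|A\|_{\op}^2\|\Lambda\|_*,
\end{align*}
where the last equality uses that $\Lambda\succ 0$ so $\|\Lambda\|_*=\trace{\Lambda}$. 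Taking square roots yields the claim.

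I do not anticipate a genuine obstacle: each step is a one-line manipulation of trace identities with PSD Loewner bounds. The only thing to be careful about is cleanly invoking the trace-monotonicity principle (that $P\preceq Q$ PSD implies $\trace{MPM^\top}\le\trace{MQM^\top}$) rather than the false statement $\trace{PR}\le\trace{QR}$ for arbitrary $R$; all three inequalities above are set up so that the correct PSD-sandwich form applies.
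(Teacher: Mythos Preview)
Your proposal is correct. All three inequalities are proved cleanly via the Loewner sandwich principle you state, and your care in writing the trace as $\trace{MPM^\top}$ before applying monotonicity is exactly the right hygiene.

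Compared with the paper's proof, the underlying content is the same (spectral bounds on $\Lambda$), but the packaging differs. For \eqref{eq: norm 1}--\eqref{eq: norm 2} the paper writes out the spectral decomposition $\Lambda=U\Sigma U^\top$, sets $B=AU$, and computes $\|A\|_\Lambda^2=\sum_i\sigma_i\|b_i\|_2^2$ column by column; it then obtains \eqref{eq: norm 1} from \eqref{eq: norm 2} by the substitution $\Lambda\mapsto\Lambda^{-1}$. Your coordinate-free Loewner argument handles each inequality directly and avoids that substitution step. For \eqref{eq: norm 3} the paper invokes the H\"older/von Neumann inequality $\trace{A^\top A\,\Lambda}\le\|A^\top A\|_{\op}\|\Lambda\|_*$ in one line, whereas you derive the same bound by sandwiching $A^\top A\preceq\|A\|_{\op}^2 I_n$ with $\Lambda^{1/2}$; these are equivalent. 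Your version is a bit more self-contained, the paper's a bit more concrete, but neither buys anything the other lacks.
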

\begin{proof}

We first prove \Cref{eq: norm 1} and \Cref{eq: norm 2}.

Suppose the spectral decomposition of $\Lambda$ is $\Lambda=U\Sigma U^\top$, where $U$ is an orthogonal matrix, $\Sigma=\text{diag}(\sigma_1, \sigma_2, \dots, \sigma_n)$ with $\sigma_1\geq \sigma_2\geq \dots\geq \sigma_n$ being the ordered singular values of $\Lambda$. Denote $AU=B=(b_1\, b_2\, \dots\, b_n)$, where $b_i$ is the $i$-th column of $B$. Then, we have
\begin{align*}
    \|A\|_\Lambda^2=\trace{A\Lambda A^\top}=\trace{AU\Sigma U^\top A^\top}=\trace{\Sigma B^\top B}=\sum_{i=1}^n \sigma_i\|b_i\|_2^2.
\end{align*}
Note that 
\begin{align*}
    \|A\|_\F^2=\sum_{i=1}^n \|b_i\|_2^2.
\end{align*}
Thus, we have
    \begin{align*}
        &\|A\|_\Lambda\leq \sqrt{\sigma_1}\|A\|_\F= \sqrt{\opnorm{\Lambda}}\|A\|_\F,
    \end{align*}
and
    \begin{align*}
        &\|A\|_\Lambda\geq \sqrt{\sigma_n}\|A\|_\F.
    \end{align*}
Note that $\sigma_n^{-1}=\opnorm{\Lambda^{-1}}$. Thus,
\begin{align}\label{eq: norm 2'}
    \|A\|_\F\leq \sqrt{\sigma_n^{-1}}\|A\|_\Lambda=\sqrt{\opnorm{\Lambda^{-1}}}\|A\|_\Lambda.
\end{align}
Since \Cref{eq: norm 2'} holds for any $\Lambda$, we can define $\Lambda'=\Lambda^{-1}$ and get
\begin{align}
    \|A\|_\F\leq \sqrt{\opnorm{\Lambda'}}\|A\|_{{\Lambda'}^{-1}}.
\end{align}
Thus, we have proved \Cref{eq: norm 1} and \Cref{eq: norm 2}. For \Cref{eq: norm 3}, we can prove it with the following inequalities
    \begin{align*}
        \|A\|_\Lambda=\sqrt{\trace{A\Lambda A^\top} }=\sqrt{\trace{A^\top A\Lambda } }\leq \sqrt{\opnorm{A^\top A}\|\Lambda\|_*}=\sqrt{\|\Lambda\|_*} \opnorm{A}.
    \end{align*}
\end{proof}

\begin{lemma}\label{lemma: smooth hessian equivalence}
Let $\Lambda\in\mathbb R^{n\times n}$ be positive definite.
For a twice continuously differentiable function $f:\mathbb R^{m\times n}\to\mathbb R$,
define $f_v(w)=f(W)$ with $w=\vec(W)\in\mathbb R^{mn}$. Then the following are equivalent:

\begin{enumerate}
    \item $f$ satisfies \Cref{ass: blockwise}, i.e. for all $W,W'\in\mathbb R^{m\times n}$,
    \[
    \|\nabla f(W)-\nabla f(W')\|_{\Lambda^{-1}}
    \le
    \|W-W'\|_{\Lambda}.
    \]

    \item For all $w\in\mathbb R^{mn}$,
    \[
    -I_m\otimes \Lambda\preceq \nabla^2 f_v(w)\preceq I_m\otimes \Lambda.
    \]
\end{enumerate}
\end{lemma}

\begin{proof}
Let $M:=I_m\otimes \Lambda$. For any $A\in\mathbb R^{m\times n}$, we have
\[
\trace{A\Lambda A^\top}=\vec(A)^\top (I_m\otimes \Lambda)\vec(A)=\vec(A)^\top M\vec(A),
\]
and similarly
\[
\trace{G\Lambda^{-1}G^\top}=\vec(G)^\top M^{-1}\vec(G).
\]
Moreover,
\[
\nabla f_v(w)=\vec(\nabla f(W)).
\]
Hence the first condition is equivalent to
\[
\|\nabla f_v(w)-\nabla f_v(w')\|_{M^{-1}}
\le
\|w-w'\|_M,
\qquad \forall w,w'\in\mathbb R^{mn}.
\]
Here, for a vector $v\in \R^d$, its norm $\|v\|_A$ is defined as $\|v\|_A:=\sqrt{v^\top A v}=\|A^{1/2}v\|_2$, where $A\in \R^{d\times d}$ is positive definite.

We first prove $(2)\Rightarrow(1)$. For any $w,w'\in \mathbb R^{mn}$, we have
\[
\nabla f_v(w)-\nabla f_v(w')
=
\int_0^1 \nabla^2 f_v(w'+t(w-w')))\,(w-w')\,dt.
\]
Therefore,
\[
M^{-1/2}(\nabla f_v(w)-\nabla f_v(w'))
=
\int_0^1
\bigl(M^{-1/2}\nabla^2 f_v(w'+t(w-w'))M^{-1/2}\bigr)M^{1/2}(w-w')\,dt.
\]
Under (2), we have
\[
-I\preceq M^{-1/2}\nabla^2 f_v(\cdot)M^{-1/2}\preceq I,
\]
so its operator norm is at most $1$. Thus
\begin{align*}
    \|M^{-1/2}(\nabla f_v(w)-\nabla f_v(w'))\|_2=&\nth{\int_0^1
\bigl(M^{-1/2}\nabla^2 f_v(w'+t(w-w'))M^{-1/2}\bigr)M^{1/2}(w-w')\,dt}_2\\
\leq&\int_0^1
\nth{M^{-1/2}\nabla^2 f_v(w'+t(w-w'))M^{-1/2}}_\op \|M^{1/2}(w-w')\|_2\,dt\\
\leq&\|M^{1/2}(w-w')\|_2,
\end{align*}
which means
\[
\|\nabla f_v(w)-\nabla f_v(w')\|_{M^{-1}}
\le
\|w-w'\|_M.
\]

Next we prove $(1)\Rightarrow(2)$. Fix any $w$ and any direction $u\in\mathbb R^{mn}$. Applying (1) to $w+tu$ and $w$ gives
\[
\|\nabla f_v(w+tu)-\nabla f_v(w)\|_{M^{-1}}
\le
|t|\,\|u\|_M.
\]
Dividing by $|t|$ and letting $t\to 0$ yields
\[
\|\nabla^2 f_v(w)u\|_{M^{-1}}
\le
\|u\|_M,
\qquad \forall u.
\]
Let $v:=M^{1/2}u$, we have $u=M^{-1/2}v$ and
\[
\|M^{-1/2}\nabla^2 f_v(w)M^{-1/2}v\|_2\le \|v\|_2,
\]
which implies
\[
\|M^{-1/2}\nabla^2 f_v(w)M^{-1/2}\|_{\op}\le 1.
\]
Since $\nabla^2 f_v(w)$ is symmetric, this is equivalent to
\[
-I\preceq M^{-1/2}\nabla^2 f_v(w)M^{-1/2}\preceq I.
\]
Multiplying both sides by $M^{1/2}$ proves
\[
-M\preceq \nabla^2 f_v(w)\preceq M,
\]
i.e.,
\[
-I_m\otimes \Lambda\preceq \nabla^2 f_v(w)\preceq I_m\otimes \Lambda.
\]
This completes the proof.
\end{proof}

\subsection*{Proof of \Cref{lemma: smooth}}
\begin{proof}
    If $f:\R^{m\times n}\to \R$ is 1-$\Lambda$-norm Lipschitz smooth with a positive definite matrix $\Lambda \in \R^{n\times n}$, then, for any $W, W'\in \R^{m\times n}$, we have
    \begin{align*}
        \|\nabla f(W)-\nabla f(W')\|_\F\leq& \sqrt{\opnorm{\Lambda}}\|\nabla f(W)-\nabla f(W')\|_{\Lambda^{-1}}\\
        \leq& \sqrt{\opnorm{\Lambda}}\|W-W'\|_{\Lambda}\\
        \leq& \opnorm{\Lambda}\|W-W'\|_\F
    \end{align*}
    where the first inequality is due to \Cref{eq: norm 1} in \Cref{lemma: norm}, the second inequality is the definition of the $\Lambda$-smoothness, and the third inequality is due to \Cref{eq: norm 2} in \Cref{lemma: norm}.
    Thus, $f$ is also $\|\Lambda\|_{{\rm op}}$ Frobenius norm Lipschitz smooth. 
    
    Moreover, we have 
    \begin{align*}
        \|\nabla f(W)-\nabla f(W')\|_*\leq& \sqrt{\|\Lambda\|_*}\|\nabla f(W)-\nabla f(W')\|_{\Lambda^{-1}}\\
        \leq& \sqrt{\|\Lambda\|_*}\|W-W'\|_{\Lambda}\\
        \leq& \|\Lambda\|_*\opnorm{W-W'}
    \end{align*}
    where the first inequality is due to \Cref{lemma: norm asgo}, the second inequality is the definition of the $\Lambda$-smoothness, and the third inequality is due to \Cref{eq: norm 3} in \Cref{lemma: norm}.
    Thus, $f$ is also $\|\Lambda\|_*$ spectral norm Lipschitz smooth. 
\end{proof}

\begin{lemma}\label{lemma:  momentum error}
    For $t=0,1,\dots, T$, $M_t$ and $W_t$ generated by \Cref{alg: muon}, defining  $C_0=\nabla f(W_0)$, and when $t>0$, $C_{t}=\beta C_{t-1}+ (1-\beta) \nabla f(W_t)=(1-\beta)\sum_{i=1}^t \beta^{t-i}\nabla f(W_i)+\beta^t\nabla f(W_0)$, we have
\begin{align*}
    \E[\|C_t-M_t\|_\F]\leq&\sqrt{\frac{1-\beta}{1+\beta}}\frac{\sigma}{\sqrt{B}}+\beta^t\frac{\sigma}{\sqrt{B}}.
\end{align*}
\end{lemma}

\begin{proof}
According to \Cref{ass: variance}, we have the following relationship about $G_t$ and $\nabla f(W_t)$.
\begin{align}\nonumber
    \E[\|G_t-\nabla f(W_t)\|^2_\F]=&\E\qth{\nth{\frac{1}{B}\sum_{i=1}^B\nabla f(W_t; \xi_{t,i})-\nabla f(W_t)}^2_\F}\\ \nonumber
    =&\frac{1}{B^2}\sum_{i=1}^B\E[\|\nabla f(W_t; \xi_{t,i})-\nabla f(W_t)\|^2_\F]\\ \label{eq: 2 variance}
    \leq&\frac{\sigma^2}{B}
\end{align}
where the second equality is due to $\E[\nabla f(W_t; \xi_{t,i})]=\nabla f(W_t)$ and the last inequality is due to $\E\|\nabla f(W;\xi_{t,i})-\nabla f(W)\|_\F^2]\leq \sigma^2$.
Moreover, according to the Cauchy-Schwarz inequality, we have
\begin{align}\label{eq: variance}
    \E[\|G_t-\nabla f(W_t)\|_\F]\leq \sqrt{\E[\|G_t-\nabla f(W_t)\|^2_\F]}=\frac{\sigma}{\sqrt{B}}.
\end{align}
    Note that $M_0=G_0$, $M_{t}=\beta M_{t-1}+ (1-\beta) G_t=(1-\beta)\sum_{i=1}^t \beta^{t-i}G_i+\beta^t G_0$. Thus, we have
\begin{align*}
    \E[\|C_t-M_t\|_\F]\leq &(1-\beta)\E[\|\sum_{i=1}^t \beta^{t-i}(G_i-\nabla f(W_i))\|_\F]+\beta^t\E[\|G_0-\nabla f(W_0)\|_\F]\\
    \leq&\sqrt{(1-\beta)^2\E[\|\sum_{i=1}^t \beta^{t-i}(G_i-\nabla f(W_i))\|^2_\F]}+\beta^t\frac{\sigma}{\sqrt{B}}\\
    \leq&\sqrt{(1-\beta)^2\sum_{i=1}^t \beta^{2t-2i}\frac{\sigma^2}{B}}+\beta^t\frac{\sigma}{\sqrt{B}}\\
    \leq&\sqrt{\frac{1-\beta}{1+\beta}}\frac{\sigma}{\sqrt{B}}+\beta^t\frac{\sigma}{\sqrt{B}}
\end{align*}
where the second inequality is due to the Cauchy-Schwarz inequality and \Cref{eq: variance}, the third inequality is due to \Cref{eq: 2 variance}.
\end{proof}

\section{Nonconvex}\label{app: nonconvex}
\subsection{Proof of \Cref{thm: nonconvex_L}}
\begin{proof}
Set $\eta_t=\eta$. Since $f$ is $L$ Frobenius norm Lipschitz smooth, we have
\begin{align*}
    \E[f(W_t)-f(W_{t+1})]\geq& \E[\eta\langle\nabla f(W_t), U_tV_t^\top \rangle-\frac{L}{2}\eta^2\|U_tV_t^\top\|_\F^2]\\
    \geq& \E[\eta\langle M_t , U_tV_t^\top \rangle -\frac{r L}{2}\eta^2-\eta\langle \nabla f(W_t)-M_t , U_tV_t^\top\rangle]\\
    \geq& \E[\eta\langle M_t , U_tV_t^\top \rangle -\frac{r L}{2}\eta^2-\eta\| \nabla f(W_t)-M_t \|_\F \|U_tV_t^\top\|_\F]\\
    \geq& \E[\eta \|M_t\|_*-\frac{r L}{2}\eta^2-\eta\sqrt{r}\|\nabla f(W_t)-M_t\|_\F]\\
    \geq& \E[\eta \|\nabla f(W_t)\|_*-\frac{r L}{2}\eta^2-2\eta\sqrt{r}\|\nabla f(W_t)-M_t\|_\F].
\end{align*}
Then, we need to analyze and bound the error $\|\nabla f(W_t)-M_t\|_\F$.
Note that $M_0=G_0$, $M_{t}=\beta M_{t-1}+ (1-\beta) G_t=(1-\beta)\sum_{i=1}^t \beta^{t-i}G_i+\beta^t G_0$.
We can define $C_0=\nabla f(W_0)$, and when $t>0$, $C_{t}=\beta C_{t-1}+ (1-\beta) \nabla f(W_t)=(1-\beta)\sum_{i=1}^t \beta^{t-i}\nabla f(W_i)+\beta^t\nabla f(W_0)$. Then, according to \Cref{lemma:  momentum error}, we have
\begin{align*}
    \E[\|C_t-M_t\|_\F]
    \leq&\sqrt{\frac{1-\beta}{1+\beta}}\frac{\sigma}{\sqrt{B}}+\beta^t\frac{\sigma}{\sqrt{B}}.
\end{align*}
Moreover,  when $t>0$, we note that
\begin{align*}
    &\E[\|\nabla f(W_t)-C_t\|_\F]\\
    =&\E[\|\nabla f(W_t)-(\beta C_{t-1}+(1-\beta)\nabla f(W_t))\|_\F]\\
    =&\E[\beta\|\nabla f(W_t)-C_{t-1}\|_\F]\\
    \leq&\E[\beta\|\nabla f(W_{t-1})-C_{t-1}\|_\F+\beta\|\nabla f(W_{t-1})-\nabla f(W_t)\|_\F]\\
    \leq&\E[\beta\|\nabla f(W_{t-1})-C_{t-1}\|_\F+\beta L\|W_{t-1}-W_t\|_\F]\\
    =&\E[\beta\|\nabla f(W_{t-1})-C_{t-1}\|_\F+\beta L \eta\|U_{t-1}V_{t-1}^\top\|_\F]\\
    \leq&\E[\beta\|\nabla f(W_{t-1})-C_{t-1}\|_\F+\beta L \eta \sqrt{r}]\\
    \leq&\beta^t\|\nabla f(W_0)-C_0\|_\F+\sum_{i=1}^t \beta^i L \eta \sqrt{r}\\
    \leq& \frac{\sqrt{r}\beta L \eta}{1-\beta}
\end{align*}
where the second inequality is due to \Cref{ass: smooth}.

Thus, we have
\begin{align*}
    &\E[\|\nabla f(W_t)-M_t\|_\F]\leq \E[\|C_t-M_t\|_\F+\|\nabla f(W_t)-C_t\|_\F]\leq \sqrt{\frac{1-\beta}{1+\beta}}\frac{\sigma}{\sqrt{B}}+\beta^t\frac{\sigma}{\sqrt{B}} + \frac{\sqrt{r}\beta L \eta}{1-\beta},
\end{align*}
and
\begin{align*}
    \E[f(W_t)-f(W_{t+1})]
    \geq& \E[\eta \|\nabla f(W_t)\|_*-\frac{rL}{2}\eta^2-2\eta\sqrt{r}\|\nabla f(W_t)-M_t\|_\F]\\
    \geq& \E[\eta \|\nabla f(W_t)\|_*-\frac{rL}{2}\eta^2-2\eta\sqrt{\frac{1-\beta}{1+\beta}}\frac{\sigma\sqrt{r}}{\sqrt{B}}-2\eta \beta^t\frac{\sigma\sqrt{r}}{\sqrt{B}}-\frac{2r\eta^2 \beta L}{1-\beta} ].
\end{align*}
Summing over $t=0,1,\dots, T-1$, we can get
\begin{align*}
    \frac{1}{T}\sum_{t=0}^{T-1}\E[\|\nabla f(W_t)\|_*]\leq \frac{1}{T\eta}\E[f(W_0)-f(W_T)]+\frac{L r\eta}{2}+\frac{2\sigma\sqrt{r(1-\beta)}}{\sqrt{(1+\beta)B}}+\frac{2\sigma\sqrt{r}}{(1-\beta)T\sqrt{B}}+\frac{2r\eta \beta L}{1-\beta}. 
\end{align*}
Set $\eta=\sqrt{\frac{(1-\beta)\Delta}{rTL}}$, we can get
\begin{align*}
    \frac{1}{T}\sum_{t=0}^{T-1}\E[\|\nabla f(W_t)\|_*]\leq \frac{\Delta}{T\eta}+\frac{L r\eta}{2}+\frac{2\sigma\sqrt{r(1-\beta)}}{\sqrt{(1+\beta)B}}+\frac{2\sigma\sqrt{r}}{(1-\beta)T\sqrt{B}}+\frac{2r\eta \beta L}{1-\beta}.
\end{align*}
When $B=1$, we can set $1-\beta=\min\{\frac{\sqrt{L\Delta}}{\sigma\sqrt{T}},1\}$ and get
\begin{align*}
    \frac{1}{T}\sum_{t=0}^{T-1}\E[\|\nabla f(W_t)\|_*]\leq O(1)\pth{\sqrt[4]{\frac{r^2L\Delta\sigma^2}{T}}+\sqrt{\frac{rL\Delta}{T}}+\frac{\sqrt{r}\sigma^2}{\sqrt{L\Delta T}}}.
\end{align*}
Thus, we can find an $\epsilon$-nuclear norm stationary point of $f$ with a complexity of $O(r^2L\sigma^2\Delta\epsilon^{-4})$.
\end{proof}

\subsection{Proof of \Cref{thm: nonconvex nuclear smooth}}
\begin{proof}
Set $\eta_t=\eta$. Since $f$ is $L_*$ spectral norm Lipschitz smooth, we have
\begin{align*}
    \E[f(W_t)-f(W_{t+1})]\geq& \E[\eta\langle\nabla f(W_t), U_tV_t^\top \rangle-\frac{L_*}{2}\eta^2\|U_tV_t^\top\|_{{\rm op}}^2]\\
    \geq& \E[\eta\langle M_t , U_tV_t^\top \rangle -\frac{L_*}{2}\eta^2-\eta\langle \nabla f(W_t)-M_t , U_tV_t^\top\rangle]\\
    \geq& \E[\eta\langle M_t , U_tV_t^\top \rangle -\frac{L_*}{2}\eta^2-\eta\| \nabla f(W_t)-M_t \|_* \|U_tV_t^\top\|_{{\rm op}}]\\
    \geq& \E[\eta \|M_t\|_*-\frac{L_*}{2}\eta^2-\eta\|\nabla f(W_t)-M_t\|_*]\\
    \geq& \E[\eta \|\nabla f(W_t)\|_*-\frac{L_*}{2}\eta^2-2\eta\|\nabla f(W_t)-M_t\|_*].
\end{align*}
Then, we need to analyze and bound the error $\|\nabla f(W_t)-M_t\|_*$.
Note that $M_0=G_0$, $M_{t}=\beta M_{t-1}+ (1-\beta) G_t=(1-\beta)\sum_{i=1}^t \beta^{t-i}G_i+\beta^t G_0$.
We can define $C_0=\nabla f(W_0)$, and when $t>0$, $C_{t}=\beta C_{t-1}+ (1-\beta) \nabla f(W_t)=(1-\beta)\sum_{i=1}^t \beta^{t-i}\nabla f(W_i)+\beta^t\nabla f(W_0)$. Then, according to \Cref{lemma:  momentum error}, we have
\begin{align*}
    \E[\|C_t-M_t\|_*]\leq \sqrt{r}\E[\|C_t-M_t\|_\F]
    \leq&\sqrt{\frac{1-\beta}{1+\beta}}\frac{\sigma\sqrt{r}}{\sqrt{B}}+\beta^t\frac{\sigma\sqrt{r}}{\sqrt{B}}.
\end{align*}
Moreover,  when $t>0$, we can note that
\begin{align*}
    &\E[\|\nabla f(W_t)-C_t\|_*]\\
    =&\E[\|\nabla f(W_t)-(\beta C_{t-1}+(1-\beta)\nabla f(W_t))\|_*]\\
    =&\E[\beta\|\nabla f(W_t)-C_{t-1}\|_*]\\
    \leq&\E[\beta\|\nabla f(W_{t-1})-C_{t-1}\|_*+\beta\|\nabla f(W_{t-1})-\nabla f(W_t)\|_*]\\
    \leq&\E[\beta\|\nabla f(W_{t-1})-C_{t-1}\|_*+\beta L_*\|W_{t-1}-W_t\|_{{\rm op}}]\\
    =&\E[\beta\|\nabla f(W_{t-1})-C_{t-1}\|_*+\beta L_* \eta\|U_{t-1}V_{t-1}^\top\|_{{\rm op}}]\\
    \leq&\E[\beta\|\nabla f(W_{t-1})-C_{t-1}\|_*+\beta L_* \eta ]\\
    \leq&\beta^t\|\nabla f(W_0)-C_0\|_\F+\sum_{i=1}^t \beta^i L_* \eta \\
    \leq& \frac{\beta L_* \eta}{1-\beta}
\end{align*}
where the second inequality is due to \Cref{ass: nuclear smooth}.

Thus, we have
\begin{align*}
    &\E[\|\nabla f(W_t)-M_t\|_*]\leq \E[\|C_t-M_t\|_*+\|\nabla f(W_t)-C_t\|_*]\leq \sqrt{\frac{1-\beta}{1+\beta}}\frac{\sigma\sqrt{r}}{\sqrt{B}}+\beta^t\frac{\sigma\sqrt{r}}{\sqrt{B}} + \frac{\beta L_* \eta}{1-\beta},
\end{align*}
and
\begin{align*}
    \E[f(W_t)-f(W_{t+1})]
    \geq& \E[\eta \|\nabla f(W_t)\|_*-\frac{L_*}{2}\eta^2-2\eta\|\nabla f(W_t)-M_t\|_*]\\
    \geq& \E[\eta \|\nabla f(W_t)\|_*-\frac{L_*}{2}\eta^2-2\eta\sqrt{\frac{1-\beta}{1+\beta}}\frac{\sigma\sqrt{r}}{\sqrt{B}}-2\eta \beta^t\frac{\sigma\sqrt{r}}{\sqrt{B}}-\frac{2\eta^2 \beta L_*}{1-\beta} ].
\end{align*}
Summing over $t=0,1,\dots, T-1$, we can get
\begin{align*}
    \frac{1}{T}\sum_{t=0}^{T-1}\E[\|\nabla f(W_t)\|_*]\leq \frac{1}{T\eta}\E[f(W_0)-f(W_T)]+\frac{L_* \eta}{2}+\frac{2\sigma\sqrt{r(1-\beta)}}{\sqrt{(1+\beta)B}}+\frac{2\sigma\sqrt{r}}{(1-\beta)T\sqrt{B}}+\frac{2\eta \beta L_*}{1-\beta}. 
\end{align*}
Set $\eta=\sqrt{\frac{(1-\beta)\Delta}{TL_*}}$, we can get
\begin{align*}
    \frac{1}{T}\sum_{t=0}^{T-1}\E[\|\nabla f(W_t)\|_*]\leq \frac{\Delta}{T\eta}+\frac{L_* \eta}{2}+\frac{2\sigma\sqrt{r(1-\beta)}}{\sqrt{(1+\beta)B}}+\frac{2\sigma\sqrt{r}}{(1-\beta)T\sqrt{B}}+\frac{2\eta \beta L_*}{1-\beta}.
\end{align*}
When $B=1$, we can set $1-\beta=\min\{\frac{\sqrt{L_*\Delta}}{\sigma\sqrt{rT}},1\}$ and get
\begin{align*}
    \frac{1}{T}\sum_{t=0}^{T-1}\E[\|\nabla f(W_t)\|_*]\leq O(1)\pth{\sqrt[4]{\frac{rL_*\Delta\sigma^2}{T}}+\sqrt{\frac{L_*\Delta}{T}}+\frac{r\sigma^2}{\sqrt{L_*\Delta T}}}.
\end{align*}
Thus, we can find an $\epsilon$-nuclear norm stationary point of $f$ with a complexity of $O(rL_*\sigma^2\Delta\epsilon^{-4})$.
\end{proof}

\subsection{Proof of \Cref{thm: nonconvex J}}\label{app: nonconvex J}

\begin{proof}

Denote $J_t=\vec(U_tV_t^\top)^\top H_t \vec(U_tV_t^\top)$, $H_t=\nabla^2f_v(\vec(W_t))$, and $f_v(\vec(W_t))=f(W_t)$. Set $\eta_t=\eta$. Taking the Taylor expansion at $W_t$, we have
\begin{align}\nonumber
    f(W_{t+1})-f(W_t)=&\langle\nabla f(W_t), W_{t+1}-W_t \rangle+\frac{1}{2}\vec(W_{t+1}-W_t)^\top \nabla^2f_v(\vec(W_t))\vec(W_{t+1}-W_t)\\ \nonumber
    &+\frac{1}{6}\sum_{i,j,k=1}^d[\nabla^3f_v(\theta)]_{ijk}\vec(W_{t+1}-W_t)_i\vec(W_{t+1}-W_t)_j\vec(W_{t+1}-W_t)_k\\ \nonumber
    \leq & -\eta\langle\nabla f(W_t), U_tV_t^\top \rangle+\frac{\eta^2 J_t}{2}+\frac{s\eta^3\|\vec(U_tV_t^\top)\|_\F^3}{6}\\ \nonumber
    \leq & -\eta\langle\nabla f(W_t), U_tV_t^\top \rangle+\frac{\eta^2 J_t}{2}+\frac{s\eta^3r^{3/2}}{6}\\ \label{eq: J descent}
    =&-\eta\|\nabla f(W_t)\|_*+\frac{\eta^2 J_t}{2}+\frac{s\eta^3r^{3/2}}{6}
\end{align}
where $\theta\in \R^{mn}$ can be some vectors between $\vec(W_t)$ and $\vec(W_{t+1})$, and the first inequality is due to \Cref{ass: third}.

Denote $J=\frac{1}{T}\sum_{t=0}^{T-1}J_t$. Summing over $t=0,1,\dots, T-1$, we can get
\begin{align*}
    \frac{1}{T}\sum_{t=0}^{T-1}\|\nabla f(W_t)\|_*\leq \frac{f(W_0)-f(W_T)}{T\eta}+\frac{\eta J}{2}+\frac{s\eta^2r^{3/2}}{6}.
\end{align*}

\end{proof}

In experiments, we find that usually $J_t>0$, thus, if $J\gtrsim \sqrt{s\epsilon}\,r^{3/4}$, and $\eta=\Theta\pth{\sqrt{\frac{\Delta}{JT}}}$, then
\begin{align*}
    \frac{1}{T}\sum_{t=0}^{T-1}\|\nabla f(W_t)\|_*\leq \pth{\sqrt{\frac{J\Delta }{T}}+\frac{sr^{3/2} \Delta }{JT}}.
\end{align*}
Consequently, it is possible that Muon can find an $\epsilon$-nuclear norm stationary point of $f$ with a complexity of $O(J\Delta\epsilon^{-2})$.

Otherwise, if $J\lesssim\sqrt{s\epsilon}\,r^{3/4}$ and $\eta=\Theta\pth{\sqrt[3]{\frac{\Delta}{sTr^{3/2}}}}$, then
\begin{align*}
    \frac{1}{T}\sum_{t=0}^{T-1}\|\nabla f(W_t)\|_*\leq O\pth{\frac{\Delta^{2/3}s^{1/3}r^{1/2}}{T^{2/3}}}.
\end{align*}
Consequently, it is possible that Muon can find an $\epsilon$-nuclear norm stationary point of $f$ with a complexity of $O(s^{1/2}r^{3/4}\Delta\epsilon^{-3/2})$.



\subsection{Proof of \Cref{thm: nonconvex mJ}}\label{app: nonconvex mJ}


\begin{proof}
    Denote $d=mn$, $J_t=\vec(U_tV_t^\top)^\top H_t \vec(U_tV_t^\top)$, $H_t=\nabla^2f_v(\vec(W_t))$, and $f_v(\vec(W_t))=f(W_t)$. Set $\eta_t=\eta$. Taking the Taylor expansion at $W_t$, we have
\begin{align*}\nonumber
    &\E[f(W_{t+1})-f(W_t)]\\ \nonumber
    =&\E[\langle\nabla f(W_t), W_{t+1}-W_t \rangle+\frac{1}{2}\vec(W_{t+1}-W_t)^\top \nabla^2f_v(\vec(W_t))\vec(W_{t+1}-W_t)\\ \nonumber
    &+\frac{1}{6}\sum_{i,j,k=1}^d[\nabla^3f_v(\theta)]_{ijk}\vec(W_{t+1}-W_t)_i\vec(W_{t+1}-W_t)_j\vec(W_{t+1}-W_t)_k]\\ \nonumber
    \leq & \E\qth{-\eta\langle\nabla f(W_t), U_tV_t^\top \rangle+\frac{\eta^2 J_t}{2}+\frac{s\eta^3\|\vec(U_tV_t^\top)\|_F^3}{6}}\\ \nonumber
    \leq & \E\qth{-\eta\langle M_t, U_tV_t^\top \rangle+\eta \langle M_t-\nabla f(W_t), U_tV_t^\top \rangle+\frac{\eta^2 J_t}{2}}+\frac{s\eta^3r^{3/2}}{6}\\ 
    \leq & \E\qth{-\eta\|M_t\|_*+\eta \| M_t-\nabla f(W_t)\|_*\|U_tV_t^\top\|_{\text{op}}+\frac{\eta^2 J_t}{2}}+\frac{s\eta^3r^{3/2}}{6}\\ 
    \leq & \E\qth{-\eta\|\nabla f(W_t)\|_*+2\eta \| M_t-\nabla f(W_t)\|_*+\frac{\eta^2 J_t}{2}}+\frac{s\eta^3r^{3/2}}{6}
\end{align*}
where $\theta\in \R^d$ can be some vectors between $\vec(W_t)$ and $\vec(W_{t+1})$,  the first inequality is due to \Cref{ass: third}.

Then, we need to analyze and bound the error $\|\nabla f(W_t)-M_t\|_*$.
Note that $M_0=G_0$, $M_{t}=\beta M_{t-1}+ (1-\beta) G_t=(1-\beta)\sum_{i=1}^t \beta^{t-i}G_i+\beta^t G_0$.
We can define $C_0=\nabla f(W_0)$, and when $t>0$, $C_{t}=\beta C_{t-1}+ (1-\beta) \nabla f(W_t)=(1-\beta)\sum_{i=1}^t \beta^{t-i}\nabla f(W_i)+\beta^t\nabla f(W_0)$. Then, according to \Cref{lemma:  momentum error}, we have
\begin{align*}
    \E[\|C_t-M_t\|_*]\leq \sqrt{r}\E[\|C_t-M_t\|_F]
    \leq&\sqrt{\frac{1-\beta}{1+\beta}}\frac{\sigma\sqrt{r}}{\sqrt{B}}+\beta^t\frac{\sigma\sqrt{r}}{\sqrt{B}}
\end{align*}
Suppose the SVD of $\nabla f(W_{t})-\nabla f(W_{t+1})$ is $\nabla f(W_{t})-\nabla f(W_{t+1})=U_{g, t}S_{g, t}V_{g, t}^\top$. Then, we have
    \begin{align*}
        &\|\nabla f(W_{t})-\nabla f(W_{t+1})\|_*\\
        =&\|U_{g, t}^\top[\nabla f(W_{t})-\nabla f(W_{t+1})]V_{g, t}\|_*\\
        =&\trace{U_{g, t}^\top[\nabla f(W_{t})-\nabla f(W_{t+1})]V_{g, t}}\\
        =&\trace{(U_{g, t}V_{g, t}^\top)^\top[\nabla f(W_{t})-\nabla f(W_{t+1})]}
    \end{align*}

For any $i\in [d]$, taking the Taylor expansion of $\vec(\nabla f(W_{t+1}))_{i}$ at $W_t$, we have
\begin{align*}
    &[\vec(\nabla f(W_{t})-\nabla f(W_{t+1}))]_{i}\\
    =& \sum_{j=1}^d[\nabla^2f_v(\vec(W_t))]_{ij}\vec(W_{t}-W_{t+1})_j+\frac{1}{2}\sum_{j,k=1}^d[\nabla^3f_v(\theta_i)]_{ijk}\vec(W_{t}-W_{t+1})_j\vec(W_{t}-W_{t+1})_k \\
    =&\eta\sum_{j=1}^d[\nabla^2f_v(\vec(W_t))]_{ij}\vec(U_tV_t^\top)_j+\frac{\eta^2}{2}\sum_{j,k=1}^d[\nabla^3f_v(\theta_i)]_{ijk}\vec(U_tV_t^\top)_j\vec(U_tV_t^\top)_k
\end{align*}
where $\theta_i\in \R^d$ can be some vectors between $\vec(W_t)$ and $\vec(W_{t+1})$.

Denote $\hat{J}_t=\vec(U_{g,t}V_{g,t}^\top)^\top \nabla^2f_v(\vec(W_t)) \vec(U_tV_t^\top)$ , we have
\begin{align*}
    &\|\nabla f(W_{t+1})-\nabla f(W_t)\|_*\\
    =&\trace{(U_{g, t}V_{g, t}^\top)^\top[\nabla f(W_{t})-\nabla f(W_{t+1})]}\\
    =&\eta \sum_{i,j=1}^d\vec(U_{g, t}V_{g, t}^\top)_i\nabla^2f_v(\vec(W_t))]_{ij}\vec(U_tV_t^\top)_j\\
    &+\frac{\eta^2}{2}\sum_{i,j,k=1}^{d}[\nabla^3f_v(\theta_i)]_{ijk}\vec(U_{g, t}V_{g, t}^\top)_i\vec(U_tV_t^\top)_j\vec(U_tV_t^\top)_k\\
    \leq&\eta \hat{J}_t+\frac{s\eta^2r^{3/2}}{2}
\end{align*}
where the last inequality is due to \Cref{ass: third}.

Then, for $t>0$, we have
\begin{align*}
    &\E[\|\nabla f(W_t)-C_t\|_*]\\
    =&\E[\|\nabla f(W_t)-(\beta C_{t-1}+(1-\beta)\nabla f(W_t))\|_*]\\
    =&\E[\beta\|\nabla f(W_t)-C_{t-1}\|_*]\\
    \leq&\E[\beta\|\nabla f(W_{t-1})-C_{t-1}\|_*+\beta\|\nabla f(W_{t-1})-\nabla f(W_t)\|_*]\\
    \leq&\E[\beta\|\nabla f(W_{t-1})-C_{t-1}\|_*+\beta \eta \hat{J}_{t-1}+\frac{s\eta^2r^{3/2}\beta}{2}]\\
    \leq&\sum_{i=0}^{t-1} \beta^{t-i} \eta \E[\hat{J}_{i}]+\frac{s\eta^2r^{3/2}\beta}{2(1-\beta)}
\end{align*}
Thus, we have
\begin{align*}
    \E[\|\nabla f(W_t)-M_t\|_*]\leq& \E[\|C_t-M_t\|_*+\|\nabla f(W_t)-C_t\|_*]\\
    \leq& \sqrt{\frac{1-\beta}{1+\beta}}\frac{\sigma\sqrt{r}}{\sqrt{B}}+\beta^t\frac{\sigma\sqrt{r}}{\sqrt{B}} + \sum_{i=0}^{t-1} \beta^{t-i} \eta \E[\hat{J}_{i}]+\frac{s\eta^2r^{3/2}\beta}{2(1-\beta)},
\end{align*}
and
\begin{align*}
    &\E[f(W_t)-f(W_{t+1})]\\
    \geq& \E[\eta \|\nabla f(W_t)\|_*-\frac{J_t}{2}\eta^2-2\eta\|\nabla f(W_t)-M_t\|_*]-\frac{s\eta^3r^{3/2}}{6}\\
    \geq& \E[\eta \|\nabla f(W_t)\|_*-\frac{J_t}{2}\eta^2-\sum_{i=0}^{t-1} 2\beta^{t-i} \eta^2 \hat{J}_{i}]-\frac{s\eta^3r^{3/2}}{6}-2\eta\sqrt{\frac{1-\beta}{1+\beta}}\frac{\sigma\sqrt{r}}{\sqrt{B}}-2\eta \beta^t\frac{\sigma\sqrt{r}}{\sqrt{B}}-\frac{s\eta^3r^{3/2}\beta}{1-\beta}.
\end{align*}
Denote $\hat{J}=\frac{1}{T}\sum_{t=0}^{T-2}\hat{J}_t$, $J=\frac{1}{T}\sum_{t=0}^{T-1}J_t$
Summing over $t=0,1,\dots, T-1$, we can get
\begin{align*}
    \frac{1}{T}\sum_{t=0}^{T-1}\E[\|\nabla f(W_t)\|_*]\leq& \frac{1}{T\eta}\E[f(W_0)-f(W_T)]+\frac{\eta\sum_{t=0}^{T-1}\E[J_t] }{2T}+\frac{2\sigma\sqrt{r(1-\beta)}}{\sqrt{(1+\beta)B}}+\frac{2\sigma\sqrt{r}}{(1-\beta)T\sqrt{B}}\\
    &+2\frac{1}{T}\sum_{t=0}^{T-1}\sum_{i=0}^{t-1} 2\beta^{t-i} \eta \E[\hat{J}_{i}]+O(sr^{3/2}\eta^2/(1-\beta))\\
    \leq &\frac{1}{T\eta}\E[f(W_0)-f(W_T)]+\frac{\eta \E[J]}{2}+\frac{2\sigma\sqrt{r(1-\beta)}}{\sqrt{(1+\beta)B}}+\frac{2\sigma\sqrt{r}}{(1-\beta)T\sqrt{B}}\\
    &+\frac{2\eta\beta\E[\hat{J}]}{1-\beta}+O(sr^{3/2}\eta^2/(1-\beta))
\end{align*}

Denote $J^*=\max\{\E[\hat{J}], \E[J]\}$, we can get
\begin{align*}
    \frac{1}{T}\sum_{t=0}^{T-1}\E[\|\nabla f(W_t)\|_*]\leq \frac{\Delta}{T\eta}+\frac{J^* \eta}{2}+\frac{2\sigma\sqrt{r(1-\beta)}}{\sqrt{(1+\beta)B}}+\frac{2\sigma\sqrt{r}}{(1-\beta)T\sqrt{B}}+\frac{2\eta \beta J^*}{1-\beta}+O\pth{\frac{sr^{3/2}\eta^2}{1-\beta}}.
\end{align*}
\end{proof}

In experiments, we find that usually $J_t>0$, thus, if $J^*\gtrsim r^{1/2}\sigma\Delta^{-1}\epsilon$, $B=1$, $\eta=\sqrt{\frac{(1-\beta)\Delta}{TJ^*}}$, $1-\beta=\min\{\frac{\sqrt{J^*\Delta}}{\sigma\sqrt{rT}},1\}$, then
\begin{align*}
    \frac{1}{T}\sum_{t=0}^{T-1}\E[\|\nabla f(W_t)\|_*]\leq O\pth{\sqrt[4]{\frac{rJ^*\Delta\sigma^2}{T}}+\sqrt{\frac{J^*\Delta}{T}}+\frac{r\sigma^2}{\sqrt{J^*\Delta T}}+\frac{sr^{3/2}\Delta}{TJ^*}}.
\end{align*}
Consequently, it is possible Muon can find an $\epsilon$-nuclear norm stationary point of $f$ with a complexity of  $O(rJ^*\sigma^2\Delta\epsilon^{-4})$.

Otherwise, if $J\lesssim r^{1/2}\sigma\Delta^{-1}\epsilon$, $B=1$,  $1-\beta=\Theta(r^{-1}\sigma^{-2}\epsilon^2)$, $\eta=\Theta\pth{r^{3/2}\sigma^{-3}\Delta\epsilon^2}$ $T=\Theta(\sigma^3r^{3/2}\epsilon^{-3})$ , then
\begin{align*}
    \frac{1}{T}\sum_{t=0}^{T-1}\|\nabla f(W_t)\|_*\leq O(\epsilon).
\end{align*}
Thus, it is possible that Muon can find an $\epsilon$-nuclear norm stationary point of $f$ with a complexity of $O(\sigma^3r^{3/2}\epsilon^{-3})$.

\section{Star-Convex}\label{app: convex}

\subsection{Proof of \Cref{thm: convex_L}}

\begin{proof}\textbf{Constant stepsize}

First, by the $L$ Frobenius norm Lipschitz smoothness of $f$, we have
    \begin{align}\nonumber
        f(W_{t+1})&\leq f(W_{t})+\langle\nabla f(W_{t}),W_{t+1}-W_{t}\rangle +\frac{L}{2}\|W_{t+1}-W_{t}\|_\F^2\\
         \label{eq: convex L}
        &\leq f(W_{t})-\eta_{t}\|\nabla f(W_{t})\|_*+\frac{rL\eta_{t}^2}{2}
    \end{align}

Then, according to \eqref{eq: star-convex}, we have
\begin{align*}
    -\|\nabla f(W_t)\|_*&\leq -\frac{f(W_t)-f^*}{D_{{\rm op}}}
\end{align*}

Set $\eta_t=\eta$ and combine \eqref{eq: convex L} and \eqref{eq: star-convex}. We have
\begin{align*}
    f(W_{t+1})-f^*\leq \pth{1-\frac{\eta}{D_{{\rm op}}}}( f(W_{t})-f^*)+\frac{rL\eta^2}{2},
\end{align*}
and
    \begin{align*}
        f(W_{T})-f^*\leq& \pth{1-\frac{\eta}{D_{{\rm op}}}}^T\left( f(W_0)-f^*\right)+\sum_{t=0}^{T-1}\pth{1-\frac{\eta}{D_{{\rm op}}}}^{T-1-t}\frac{rL\eta^2}{2}\\
        \leq&\pth{1-\frac{\eta}{D_{{\rm op}}}}^T( f(W_{0})-f^*)+\frac{rL D_\op\eta}{2}.
    \end{align*}
Thus, to reach the precision $f(W_T)-f^*\leq \epsilon$, we can take $\eta=O(\frac{\epsilon}{rLD_{{\rm op}}})$, and the complexity is $O(rLD_{{\rm op}}^2 \epsilon^{-1}\text{log}\frac{\Delta}{\epsilon})$, where $\Delta=f(W_0)-f^*$.
\end{proof}

\subsection{Proof of \Cref{thm: convex_L_adaptive_stepsize}}

\begin{proof}\textbf{Adaptive stepsize}

    By the $L$ spectral norm Lipschitz smoothness of $f$ and setting $\eta_{t}=\frac{\|\nabla f(W_{t})\|_*}{rL}$, we have
    \begin{align}
        f(W_{t+1})&\leq f(W_{t})+\langle\nabla f(W_{t}),W_{t+1}-W_{t}\rangle +\frac{L}{2}\|W_{t+1}-W_{t}\|_\F^2 \notag\\
        &\leq f(W_{t})-\eta_{t}\|\nabla f(W_{t})\|_*+\frac{rL\eta_{t}^2}{2} \notag\\
        &=f(W_{t})-\frac{\|\nabla f(W_{t})\|_*^2}{2rL} .\label{eq: Lsmooth}
    \end{align}
From the star-convex property, we have
\begin{align} \label{eq: convex}
    \Delta_t:=f(W_t)-f^*\leq \langle \nabla f(W_t), W_t-W^*\rangle \leq \|\nabla f(W_t)\|_*\|W_t-W^*\|_{{\rm op}}.
\end{align}

Combining \Cref{eq: Lsmooth} and \Cref{eq: convex}, we obtain
$$\Delta_{t+1}\leq \Delta_{t}-\frac{1}{2rL\|W_t-W^*\|^2_{{\rm op}}}\Delta_{t}^2.$$

Then we have
$$\frac{1}{\Delta_{t+1}}\geq \frac{1}{\Delta_{t}}+\frac{\Delta_{t}}{2rL\|W_t-W^*\|_{{\rm op}}^2 \Delta_{t+1}}.$$

Using $\|W_t-W^*\|_{{\rm op}}\leq D_{{\rm op}}$, we can sum above inequality to obtain
$$\frac{1}{\Delta_{t}}\geq \frac{1}{\Delta_0}+\sum_{i=1}^t \frac{\Delta_{i-1}}{2rLD_{{\rm op}}^2 \Delta_{i}} \geq  \frac{1}{\Delta_0}+\frac{t}{2rLD_{{\rm op}}^2},$$
where we use $\frac{\Delta_{i-1}}{\Delta_{i}}\geq 1$.

After rearranging terms, we obtain 
$$f(W_t)-f^*\leq \frac{2rL(f(W_0)-f^*)D_{{\rm op}}^2}{2rLD_{{\rm op}}^2+t(f(W_0)-f^*)}.$$
\end{proof}

\subsection{Proof of \Cref{thm: convex_L*}}
\begin{proof}

\textbf{Constant stepsize}

First, by the $L_*$ spectral norm Lipschitz smoothness of $f$, we have
    \begin{align}\nonumber
        f(W_{t+1})&\leq f(W_{t})+\langle\nabla f(W_{t}),W_{t+1}-W_{t}\rangle +\frac{L_*}{2}\|W_{t+1}-W_{t}\|_{{\rm op}}^2\\ \label{eq: convex L*}
        &=f(W_{t})-\eta_{t}\|\nabla f(W_{t})\|_*+\frac{L_*\eta_{t}^2}{2}
    \end{align}
Then, from the star-convex condition, we obtain
    \begin{align}\nonumber
        f(W_t)&\leq f^*+\langle\nabla f(W_t), W_t-W^*\rangle\\ \nonumber
        &\leq f^*+\|\nabla f(W_t)\|_*\|W_{t}-W^*\|_{{\rm op}}\\ \nonumber
        &\leq f^*+\|\nabla f(W_t)\|_*D_{{\rm op}}\\ \label{eq: star-convex}
        -\|\nabla f(W_t)\|_*&\leq -\frac{f(W_t)-f^*}{D_{{\rm op}}}
    \end{align}
    where we apply the Cauchy–Schwarz inequality in the second inequality, and  in the third inequality we use \Cref{ass: bounded}.

Set $\eta_t=\eta$ and combine \eqref{eq: convex L*} and \eqref{eq: star-convex}. We have
\begin{align*}
    f(W_{t+1})-f^*\leq \pth{1-\frac{\eta}{D_{{\rm op}}}}( f(W_{t})-f^*)+\frac{\eta^2 L_*}{2},
\end{align*}
and
    \begin{align*}
        f(W_{T})-f^*\leq& \pth{1-\frac{\eta}{D_{{\rm op}}}}^T\left( f(W_0)-f^*\right)+\sum_{t=0}^{T-1}\pth{1-\frac{\eta}{D_{{\rm op}}}}^{T-1-t}\frac{\eta^2L_*}{2}\\
        \leq&\pth{1-\frac{\eta}{D_{{\rm op}}}}^T( f(W_{0})-f^*)+\frac{\eta L_* D_\op}{2}.
    \end{align*}
Thus, to reach the precision $f(W_T)-f^*\leq \epsilon$, we can take $\eta=O(\frac{\epsilon}{L_*D_{{\rm op}}})$, and the complexity is $O(L_*D_{{\rm op}}^2 \epsilon^{-1}\text{log}\frac{\Delta}{\epsilon})$, where $\Delta=f(W_0)-f^*$.
\end{proof}
\subsection{Proof of \Cref{thm: convex_L*_adaptive_stepsize}}
\begin{proof}
\textbf{Adaptive stepsize}

    By the $L_*$ spectral norm Lipschitz smoothness of $f$ and setting $\eta_{t}=\frac{\|\nabla f(W_{t})\|_*}{L_*}$, we have
    \begin{align}
        f(W_{t+1})&\leq f(W_{t})+\langle\nabla f(W_{t}),W_{t+1}-W_{t}\rangle +\frac{L_*}{2}\|W_{t+1}-W_{t}\|_{{\rm op}}^2 \notag\\
        &=f(W_{t})-\eta_{t}\|\nabla f(W_{t})\|_*+\frac{L_*\eta_{t}^2}{2} \notag\\
        &=f(W_{t})-\frac{\|\nabla f(W_{t})\|_*^2}{2L_*} .\label{eq: L*smooth}
    \end{align}
From the star-convex property, we have
\begin{align} \label{eq: L*convex}
    \Delta_t:=f(W_t)-f^*\leq \langle \nabla f(W_t), W_t-W^*\rangle \leq \|\nabla f(W_t)\|_*\|W_t-W^*\|_{{\rm op}}.
\end{align}

Combining \Cref{eq: L*smooth} and \Cref{eq: L*convex}, we obtain
$$\Delta_{t+1}\leq \Delta_{t}-\frac{1}{2L_*\|W_t-W^*\|^2_{{\rm op}}}\Delta_{t}^2.$$

Then we have
$$\frac{1}{\Delta_{t+1}}\geq \frac{1}{\Delta_{t}}+\frac{\Delta_{t}}{2L_*\|W_t-W^*\|_{{\rm op}}^2 \Delta_{t+1}}.$$

Using $\|W_t-W^*\|_{{\rm op}}\leq D_{{\rm op}}$, then we can sum above inequality to obtain
$$\frac{1}{\Delta_{t}}\geq \frac{1}{\Delta_0}+\sum_{i=1}^t \frac{\Delta_{i-1}}{2L_*D_{{\rm op}}^2 \Delta_{i}} \geq  \frac{1}{\Delta_0}+\frac{t}{2L_*D_{{\rm op}}^2},$$
where we use $\frac{\Delta_{i-1}}{\Delta_{i}}\geq 1$.

After rearranging terms, we obtain 
$$f(W_t)-f^*\leq \frac{2L_*(f(W_0)-f^*)D_{{\rm op}}^2}{2L_*D_{{\rm op}}^2+t(f(W_0)-f^*)}.$$
\end{proof}

\subsection{Proof of \Cref{thm: convex J}} \label{app: convex J}

\begin{proof}
According to \Cref{eq: J descent}, we have
\begin{align}\label{eq: J1}
    f(W_{t+1})-f(W_t)\leq&-\eta\|\nabla f(W_t)\|_*+\frac{\eta^2 J_t}{2}+\frac{s\eta^3r^{3/2}}{6}
\end{align}

Moreover, according to \eqref{eq: star-convex}, we have
\begin{align*}
    -\|\nabla f(W_t)\|_*&\leq -\frac{f(W_t)-f^*}{D_{{\rm op}}}
\end{align*}

Combine \eqref{eq: J1} and \eqref{eq: star-convex}, we can obtain
    \begin{align*}
        f(W_{t+1})-f^*\leq \pth{1-\frac{\eta}{D_{{\rm op}}}}( f(W_{t})-f^*)+\frac{\eta^2 J_t}{2}+\frac{s\eta^3r^{3/2}}{6}.
    \end{align*}
Then, we have
    \begin{align*}
        f(W_{T})-f^*\leq& \pth{1-\frac{\eta}{D_{{\rm op}}}}^T\left( f(W_0)-f^*\right)+\sum_{t=0}^{T-1}\pth{1-\frac{\eta}{D_{{\rm op}}}}^{T-1-t}\frac{\eta^2J_t}{2}+\sum_{t=0}^{T-1}\pth{1-\frac{\eta}{D_{{\rm op}}}}^{T-1-t}\frac{s\eta^3r^{3/2}}{6}\\
        \leq&\pth{1-\frac{\eta}{D_{{\rm op}}}}^T( f(W_{0})-f^*)+\sum_{t=0}^{T-1}\pth{1-\frac{\eta}{D_{{\rm op}}}}^{T-1-t}\frac{\eta^2 J_t}{2}+\frac{s\eta^2 D_\op r^{3/2}}{6}.
    \end{align*}
Denote $\tilde{J}=\frac{1}{T}\sum_{t=0}^{T-1}\pth{1-\frac{\eta}{D_{{\rm op}}}}^{T-1-t}J_t$. We have
\begin{align*}
        f(W_T)-f^*&\leq \pth{1-\frac{\eta}{D_{{\rm op}}}}^T\Delta+\frac{\eta^2\tilde{J} T}{2}+\frac{s\eta^2 D_\op r^{3/2}}{6}.
\end{align*}

\end{proof}

Thus, the convergence complexity of Muon can be depending on $\tilde{J}$, which can be viewed as a weighted average of $J_t$.

In experiments, we found that usually $J_t>0$. Note that if $f$ is strict convex and if $U_tV_t^\top\neq 0$, then $J_t$ is strictly larger than 0.

Thus, if $\tilde{J}\gtrsim s^{1/2}D_\op^{-1/2}r^{3/4}\epsilon^{1/2}$ and $\eta=\min\sth{\frac{D_{{\rm op}}}{T}\log\pth{\frac{T \Delta}{ D_{{\rm op}}^2\tilde{J}}}, D_\op}$. We have
    \begin{align*}
        f(W_T)-f^*\leq& \frac{D_{{\rm op}}^2\tilde{J}}{T}+\frac{D_{{\rm op}}^2\tilde{J}}{2T}\qth{\log\pth{\frac{T \Delta}{ D_{{\rm op}}^2\tilde{J}}}}^2+\frac{sr^{3/2}D_{{\rm op}}^3}{6T^2}\qth{\log\pth{\frac{T \Delta}{ D_{{\rm op}}^2\tilde{J}}}}^2\\
        \leq&\tilde{O}\pth{\frac{D_{{\rm op}}^2\tilde{J}}{T}}.
    \end{align*}
Consequently, it is possible that Muon can reach the precision $f(W_T)-f^*\leq \epsilon$ with a complexity of $\tilde{O}(\tilde{J} D_{{\rm op}}^2\epsilon^{-1})$.

Otherwise, if $\tilde{J}\lesssim s^{1/2}D_\op^{-1/2}r^{3/4}\epsilon^{1/2}$ and $\eta=\min\sth{\frac{D_{{\rm op}}}{T}\log\pth{\frac{T^2 \Delta}{ D_{{\rm op}}^3 sr^{3/2}}}, D_\op }$. We have
    \begin{align*}
        f(W_T)-f^*\leq& \frac{sr^{3/2}D_{{\rm op}}^3}{T^2}+\frac{sr^{3/2}D_{{\rm op}}^3}{6T^2}\qth{\log\pth{\frac{T^2 \Delta}{ D_{{\rm op}}^3 sr^{3/2}}}}^2\\
        \leq&\tilde{O}\pth{\frac{sr^{3/2}D_{{\rm op}}^3}{T^2}}.
    \end{align*}
Consequently, it is possible that Muon can reach the precision $f(W_T)-f^*\leq \epsilon$ with a complexity of $\tilde{O}(s^{1/2}r^{3/4}D_\op^{3/2}\epsilon^{-1/2})$.

\section{Low-rank Structure of Real-World Data Matrices}\label{app: low-rank}

\begin{figure}[htbp]
    \centering
        \begin{subfigure}[t]{0.35\linewidth}
        \includegraphics[width=\linewidth]{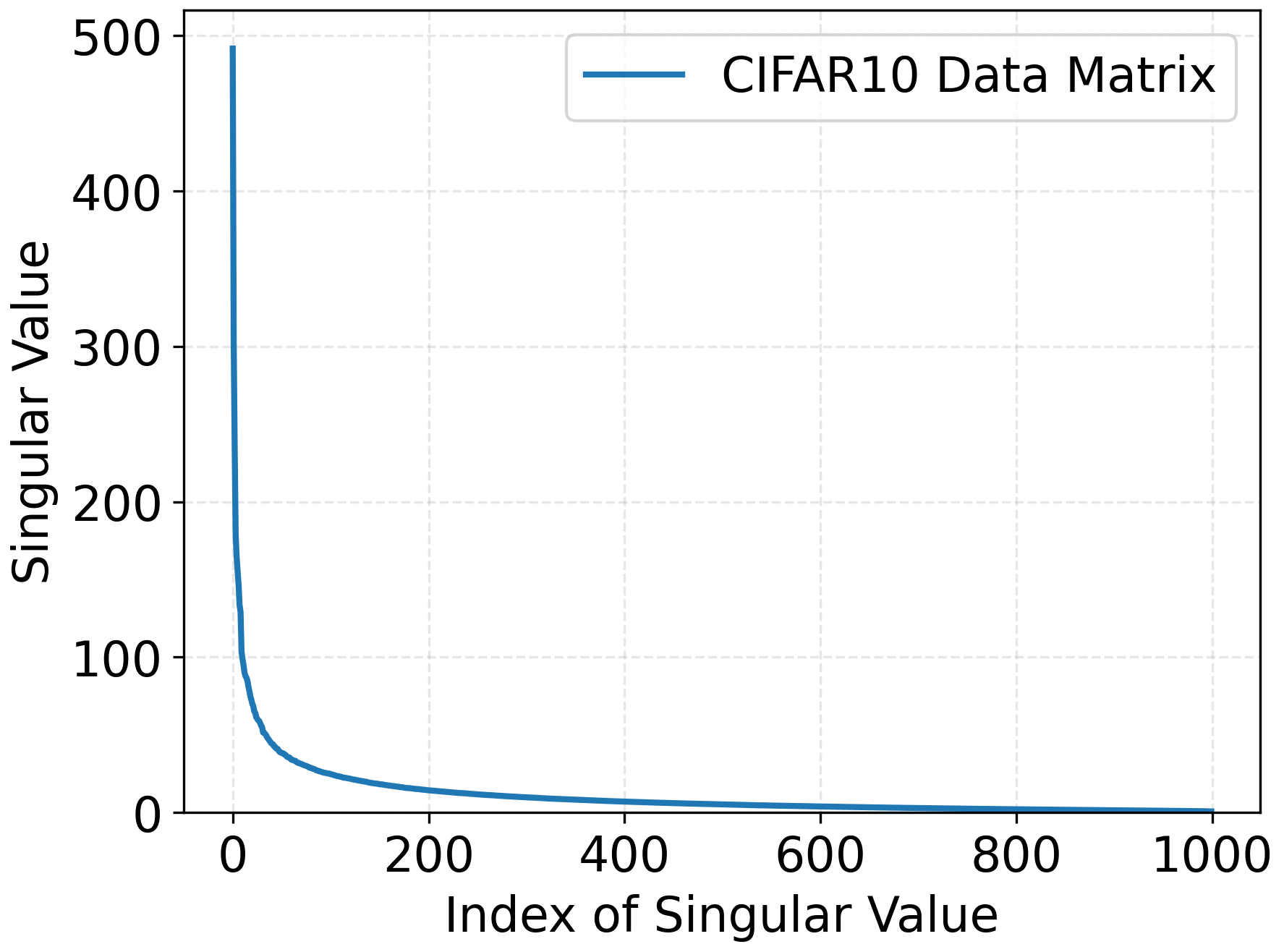}
        \caption{\small Spectra of $X_{\text{CIFAR10}}\in \R^{3072\times 1000}$}
    \end{subfigure}
    \begin{subfigure}[t]{0.35\linewidth}
        \includegraphics[width=\linewidth]{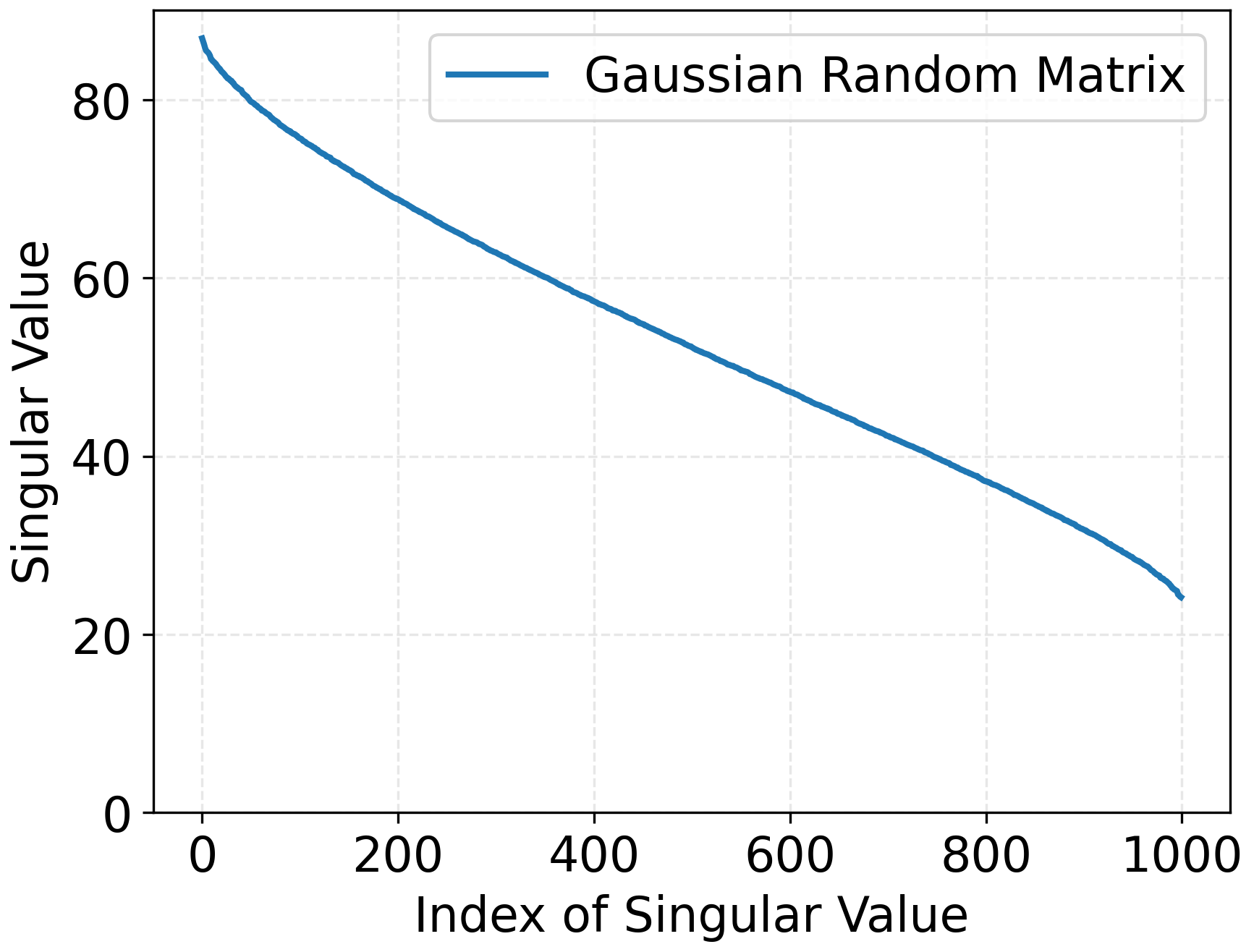}
        \caption{\small Spectra of $X_{\text{Gaussian, 1}}\in \R^{3072\times 1000}$}
    \end{subfigure}
        \begin{subfigure}[t]{0.35\linewidth}
        \includegraphics[width=\linewidth]{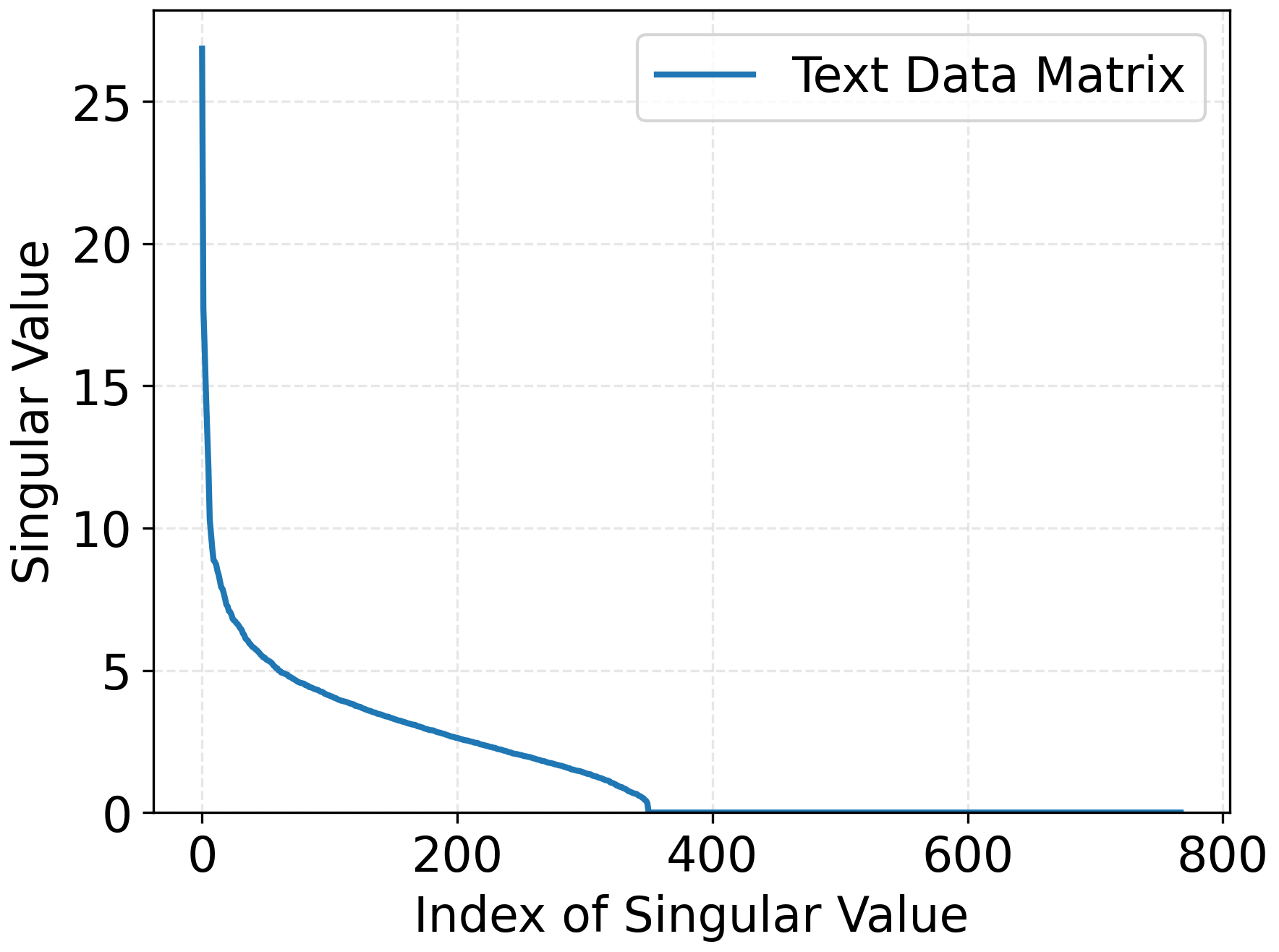}
        \caption{\small Spectra of $X_{\text{Text}}\in \R^{768\times 836}$}
    \end{subfigure}
    \begin{subfigure}[t]{0.35\linewidth}
        \includegraphics[width=\linewidth]{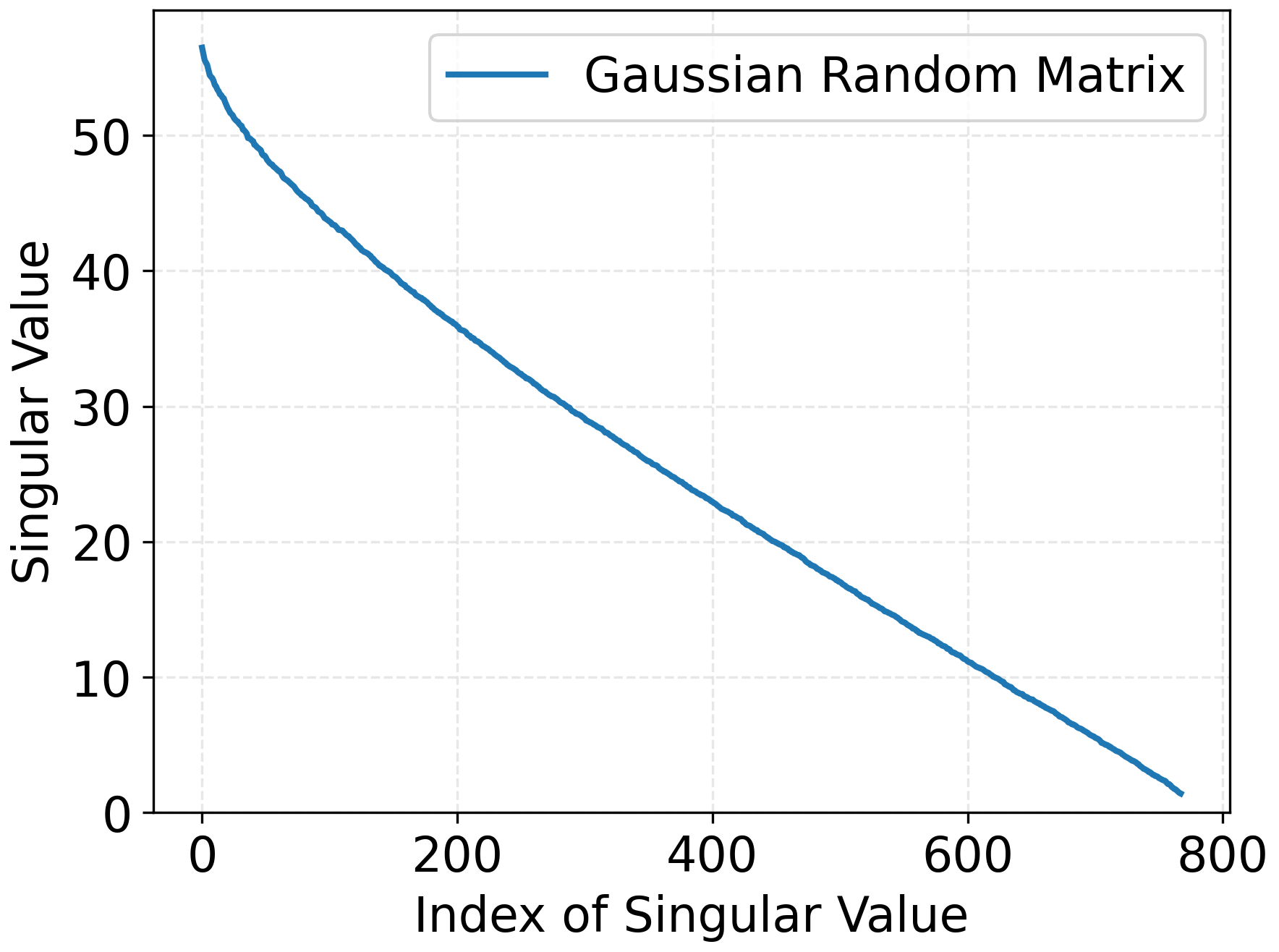}
        \caption{\small Spectra of $X_{\text{Gaussian, 2}}\in \R^{768\times 836}$}
    \end{subfigure}
    \caption{\small Spectra of $X_{\text{CIFAR10}}$, $X_{\text{Text}}$, $X_{\text{Gaussian, 1}}$ and $X_{\text{Gaussian, 2}}$.}
    \label{fig:spectra}
\end{figure}

In addition to demonstrating the low-rank property of MNIST in \Cref{fig:matrix}, we also test and verify similar low-rank structures on the CIFAR-10 \citep{krizhevsky2009learning} and Shakespeare datasets\footnote{https://raw.githubusercontent.com/karpathy/char-rnn/master/data/tinyshakespeare/input.txt}.

Following a procedure similar to that in \Cref{section: mnist}, for CIFAR-10 we randomly sample 1000 samples from training
dataset and construct the matrix
$X_{\text{CIFAR10}} \in \mathbb{R}^{3072 \times 1000}$.
We then compute its singular values and compare them with those of a Gaussian random matrix of the same size,
$X_{\text{Gaussian, 1}} \in \mathbb{R}^{3072 \times 1000}$.
The results are shown in \Cref{fig:spectra}(a)(b), where we observe that the singular values of $X_{\text{CIFAR10}}$ are significantly more concentrated than $X_{\text{Gaussian, 1}}$.

For the Shakespeare dataset, we take the first 3000 characters and use the RoBERTa \citep{liu2019roberta} tokenizer and embedding model to convert the text into an embedding matrix $X_{\text{Text}} \in \mathbb{R}^{768 \times 836}$,
where 768 is the embedding dimension and 836 is the token length. We compute its singular values and compare them with those of a Gaussian random matrix of the same size,
$X_{\text{Gaussian, 2}} \in \mathbb{R}^{768 \times 836}$.
As shown in \Cref{fig:spectra}(c)(d), similar to the image datasets (MNIST and CIFAR-10), the text embedding matrix also exhibits a pronounced low-rank structure.

\section{Experimental Settings}\label{app: exp}
Experiments of \Cref{fig:matrix}, \Cref{fig:J}, and \Cref{fig:nanoGPT J} are conducted on NVIDIA RTX 6000 Ada GPUs. Experiments of \Cref{figure: q} are conducted on Intel(R) Core(TM) i9-14900HX.

In \Cref{figure: q}, we consider the quadratic function $f(W)=\frac{1}{2}\trace{(W-W^*)^\top Q(W-W^*)}$, $W\in \mathbb{R}^{15\times 20}, Q\in \mathbb{R}^{15\times 15}$, where $W^*$ is randomly chosen and $Q$ is chosen with an ill condition number. Then, we apply GD and Muon to optimize this function, both start from the $W_0=0$ with 4000 iterations. We choose the optimal constant stepsize $\frac{1}{L}$ for GD and choose the stepsize for Muon such that Muon can converge in 4000 iterations with the best function value.  For each iteration of both algorithms, we record the difference in function value from the optimum and the spectral norm error to the optimal point. The results are shown in \Cref{figure: q}(a) and (b). In \Cref{figure: q}(c), we compare the key constant in convergence analysis under quadratic function with ill-conditioned $Q$. We choose $W^*\sim U(-50,50)$, which means each element of $W^*$ is i.i.d. random variables drawn from a continuous uniform distribution. Then, we estimate the key constants $D_{{\rm op}}^2L_*$ and $D_\F^2L$, where we choose the initial point as $W_0=0$. We calculate the ratios $\frac{D_\F^2L}{D_{{\rm op}}^2L_*}$ over $10^4$ random samples. The results are shown in \Cref{figure: q}(c).



In \Cref{fig:J}, we randomly select a fixed subset of 120 samples from MNIST. We then train a fully connected neural network (\Cref{tab: net mnist}) with three matrix parameters ($W^1 \in \mathbb{R}^{128 \times 784}$, $W^2 \in \mathbb{R}^{64 \times 128}$, $W^3 \in \mathbb{R}^{10 \times 64}$) on this subset using GD and Muon (\Cref{alg: muon_deterministic}). 
We tuned the learning rates for GD and Muon to their optimal values. During the optimization procedure, we record the loss, $J_t$, $L_t$, $\|\nabla f(W_t)\|_*^2$, and $\|\nabla f(W_t)\|_\F^2$. These quantities are computed with respect to $W^2$, i.e., $\nabla f = \nabla_{W^2} f(W^1_t, W^2_t, W^3_t)$ and $\nabla^2 f = \nabla^2_{\vec(W^2)\vec(W^2)} f_v(\vec(W^1_t), \vec(W^2_t), \vec(W^3_t))$, where $f_v(\vec(W^1_t), \vec(W^2_t), \vec(W^3_t))=f(W^1_t, W^2_t, W^3_t)$.

\begin{table}[h]
\centering
\caption{MLP model used in \Cref{fig:J}. For every Linear module, the bias term is set as false, so this model contain only matrix parameters.}
\begin{tabular}{@{}lll@{}}
\toprule
Layer Type           & Matrix Shape $(m\times n)$ \\ \midrule
Fully Connected $+$ ReLU & $128\times 784$ \\
Fully Connected $+$ ReLU & $64 \times 128$ \\
Fully Connected & $10 \times 64$ \\ \bottomrule
\end{tabular}
\label{tab: net mnist}
\end{table}

In \Cref{fig:nanoGPT J}, we adopted the code from nanoGPT\footnote{https://github.com/karpathy/nanoGPT} and trained a 6-layer GPT-2–style model on the Shakespeare character dataset. Model architectural and training hyperparameters can be found in \Cref{tab: nanogpt}. We trained the model using AdamW and Muon, respectively. When using Muon, we applied the tricks proposed in \citet{liu2025muon}, i.e., using weight decay and adjusting the per-parameter update scale. Muon was used only to optimize the matrix parameters in the non-embedding layers, while the embedding layers and vector parameters were optimized with AdamW.

\begin{table}[h]
\centering
\caption{Model architectural and training hyperparameters for the experiments in \Cref{fig:nanoGPT J}.}
\begin{tabular}{lc}
\toprule
Hyperparameter & Value \\
\midrule
Number of Layers & 6 \\
Number of Heads & 6 \\
Hidden Dimension & 384 \\
Dropout & 0 \\
Batch Size & 64 \\
Training Steps & 1000 \\
\bottomrule
\end{tabular}
\label{tab: nanogpt}
\end{table}

\end{document}